\newcolumntype{M}[1]{>{\centering\arraybackslash}m{#1}}
\newtheorem{theorem}{Theorem}
\newtheorem{definition}{Definition}
\newtheorem{assumption}{Assumption}
\newtheorem{lemma}{Lemma}
\newtheorem{corollary}{Corollary}
\newtheorem{remark}{Remark}
\definecolor{promptbg}{RGB}{245,245,245} 
\newcommand{\squishlist}{
   \begin{list}{$\bullet$}
    { \setlength{\itemsep}{0pt} \setlength{\parsep}{1pt}
      \setlength{\topsep}{1pt} \setlength{\partopsep}{1pt}
      \setlength{\leftmargin}{1.5em} \setlength{\labelwidth}{1em}
      \setlength{\labelsep}{0.5em} } }
\newcommand{\squishlisttwo}{
   \begin{list}{$\bullet$}
    { \setlength{\itemsep}{0pt} \setlength{\parsep}{0pt}
      \setlength{\topsep}{0pt} \setlength{\partopsep}{0pt}
      \setlength{\leftmargin}{1em} \setlength{\labelwidth}{1.5em}
      \setlength{\labelsep}{0.5em} } }
\newcommand{\squishend}{
    \end{list}  }
\def\hy#1{{\bf \color{blue}#1}}
\begin{document}
\title{TextBO: Bayesian Optimization in Language Space for\\ Eval-Efficient Self-Improving AI}

\author{
Enoch Hyunwook Kang\thanks{We thank Khaled Boughanmi
 for detailed comments that have improved this work. Thanks are also due to the participants of the 2025 Stanford AI-ML conference and the 2025 Columbia AIML conference. Please address all correspondance to: ehwkang@uw.edu and hemay@uw.edu.}\\ University of Washington
\and
Hema Yoganarasimhan\\
University of Washington
}

\date{}

\maketitle

\begin{abstract} \small
Large Language Models (LLMs) have enabled self-improving AI systems that iteratively generate, evaluate, and refine their outcomes. Recent studies show that prompt-optimization-based self-improvement can outperform state-of-the-art reinforcement-learning fine-tuning of LLMs, but performance is typically measured by \emph{generation} efficiency. However, in many applications, the constraint is \emph{evaluation} efficiency: obtaining reliable feedback is far more costly than generating candidates. To optimize for evaluation efficiency, we extend Upper Confidence Bound--Bayesian Optimization (UCB-BO), a framework known for optimal evaluation-efficiency guarantees, to the language domain. Doing so is challenging for two reasons: (i) gradients needed for UCB-BO are ill-defined in discrete prompt space; and (ii) UCB-style exploration relies on a surrogate model and acquisition function, which only live implicitly in the LLM. We overcome these challenges by proving that combining simple textual gradients (LLM-proposed local edits) with the \textsc{Best-of-N} selection strategy statistically emulates ascent along the gradient of the canonical UCB acquisition function. Based on this result, we propose \textsc{TextBO}, a simple, evaluation-efficient self-improving algorithm that operates purely in language space without explicit surrogates or calibrated uncertainty models. We empirically validate \textsc{TextBO} on automated ad-alignment tasks using a persona-induced preference distribution, demonstrating superior performance per evaluation compared to strong baselines such as \textsc{Best-of-N} and \textsc{GEPA}. We also evaluate \textsc{TextBO}’s \textsc{Best‑of‑N} multi‑step textual‑gradient mechanism on agentic AI benchmarks by augmenting \textsc{GEPA} with it and show that it performs better than standard \textsc{GEPA}. In sum, \textsc{TextBO} is a simple and principled framework for AI self-improving system design that bridges prompt optimization with classical Bayesian optimization.

\end{abstract}

\noindent \textbf{Keywords:} Self-improving AI,  Evaluation efficiency, Bayesian optimization, Context Engineering, Ad optimization, LLMs, Marketing.
\newpage

\section{Introduction}
\label{sec:intro}

\subsection{Self-improving AI for Data-driven Decision-making}

Data-centric decision making has long relied on an iterative, human-driven cycle of proposing solutions, testing them, and learning from the results. This paradigm underlies many business applications, ranging from website design \citep{hauser2009website} to product recommendations \citep{nandy2021b}. We illustrate the cycle using ad optimization (left panel of Figure~\ref{fig:human-ai-ad-cycle}). An ad agency or marketer first generates a slate of creatives from a brief, drawing on product knowledge, theories of consumer behavior, market conditions, and feedback from prior campaigns. This \textit{generation} step often takes weeks. The candidates are then \textit{evaluated} (e.g., on advertising platforms or via consumer surveys) to identify the best performers, typically over days to weeks. Finally, the manager \textit{analyzes} campaign outcomes and uses the insights to guide the next round of creative generation.

This workflow is effective, but it is also constrained by two longstanding limitations. First, iteration speed depends heavily on how quickly humans can \textit{generate} new candidate solutions, which has historically been the main bottleneck. Second, while evaluation has been studied extensively---with tools such as A/B testing and Multi-Armed Bandits (MAB) \citep{kohavi2020trustworthy, fiez_etal_2024}---there is no comparable formal framework for how practitioners should systematically \textit{analyze} results and translate them into improved candidates; in practice, this step is often driven by managerial intuition.

\begin{figure}[!ht] 
  \centering
  \scalebox{0.9}{%
    \begin{tikzpicture}[
      box/.style = {
        rectangle,
        rounded corners,
        draw=black,
        very thick,
        minimum width=3cm,
        minimum height=1.4cm,
        align=center,
        inner sep=4pt
      },
      bottleneck/.style = {
        rectangle,
        rounded corners,
        draw=red!70!black,
        very thick,
        fill=red!5,
        minimum width=3cm,
        minimum height=1.4cm,
        align=center,
        inner sep=4pt
      },
      middlearrow/.style = {
        line width=1.6pt,
        -{Stealth[length=7pt,width=9pt]},
        line cap=round
      }
    ]

    \begin{scope}[shift={(-5,0)}]
      \node[bottleneck] at (0,0)      (genL)      {Ad generation (Step 1)\\[2pt]\footnotesize (Weeks)};
      \node[box]        at (-2.5,-3.3)  (analysisL) {Analysis  (Step 3)\\[2pt]\footnotesize (Days)};
      \node[box]        at ( 2.5,-3.3)  (evalL)     {Ad evaluation (Step 2)\\[2pt]\footnotesize (Days-weeks)};

      \node[font=\bfseries] at (0,-2.2) {\shortstack{Human-driven\\cycle}};

      \draw[->, thick] (analysisL) -- (genL);
      \draw[->, thick] (genL)      -- (evalL);
      \draw[->, thick] (evalL)     -- (analysisL);

      \node[font=\small\bfseries, text=red!70!black] at (0,1.0) {Bottleneck};

      \coordinate (midL) at (4.3,-1.5);
    \end{scope}

    \begin{scope}[shift={(5,0)}]
      \node[box]        at (0,0)      (genR)      {Ad generation (Step 1)\\[2pt]\footnotesize (seconds)};
      \node[box]        at (-2.5,-3.3)  (analysisR) {Analysis (Step 3)\\[2pt]\footnotesize (minutes--hours)};
      \node[bottleneck] at ( 2.5,-3.3)  (evalR)     {Ad evaluation (Step 2)\\[2pt]\footnotesize (Days-weeks)};

      \node[font=\bfseries] at (0,-2.2) {\shortstack{AI-driven\\cycle}};

      \draw[->, thick] (analysisR) -- (genR);
      \draw[->, thick] (genR)      -- (evalR);
      \draw[->, thick] (evalR)     -- (analysisR);

      \node[font=\small\bfseries, text=red!70!black] at (2.5,-4.3) {Bottleneck};

      \coordinate (midR) at (-4.3,-1.5);
    \end{scope}


    \end{tikzpicture}
  }

  \caption{Comparison of human-driven and AI-driven ad self-improvement cycles in digital advertising.}
  \label{fig:human-ai-ad-cycle}
\end{figure}

Recent advances in generative AI have begun to transform this human-driven paradigm by addressing the issues discussed above. Candidate generation is now fast and scalable: in digital advertising, AI models can produce high-quality ad copy, images, and campaign concepts that adhere to brand guidelines in minutes \citep{jansen2024automated, hartmann2025power}.\footnote{A large share of ad creatives is now AI-generated, reflecting improvements in both quality and speed \citep{coffee2025aiadbuying, Majic2025VibeMarketing}.} In parallel, LLM-based systems can autonomously conduct analyses by summarizing performance patterns and hypothesizing drivers of success \citep{ferber2024context, guo2024ds, ghosh2024exploring, wiedemer2025videomodelszeroshotlearners}. Together, these capabilities make it feasible to automate the full generation--evaluation--analysis loop at scale, allowing systems to update themselves continuously in response to new data (right panel of Figure~\ref{fig:human-ai-ad-cycle}). We refer to such automated, continuously updating systems as \textit{self-improving} or \textit{self-evolving} AI \citep{MantiaChatterjeeLee2025, chen2025xbench, silver2025welcome}. While there can be many types of self-improving AI  systems, a central line of progress has been on \textit{iterative prompt optimization}, where the AI agent repeatedly refines its own prompts based on performance feedback. Indeed, recent papers show that such approaches can often outperform reinforcement-learning based fine-tuning \citep{agrawal2025gepa}; see $\S$\ref{ssec:related_lit} for a detailed discussion. The key performance objective that this stream of work focuses on is \textit{generation efficiency}---the number of LLM calls (or candidate solution generations) needed to reach a target performance level.

However, in most business settings, the key bottleneck is a limited evaluation budget rather than a generation budget. For instance, in digital advertising, generating ad creatives is now cheap and fast with modern LLMs/AI models, but ad evaluation, i.e., measuring whether the ad is effective or not, still requires significant human feedback in the form of field deployment (e.g., A/B tests or bandits in ad platforms) or consumer surveys. This is also true for other marketing decisions such as package design, product descriptions, and promotions -- in all these cases, generating new candidate solutions is now easy, but evaluating the performance of each solution can take significant time, effort, and human feedback. Beyond business applications, this pattern is also common in many social and scientific domains: AI models like AlphaFold can propose protein structures quickly \citep{jumper2021highly}, but their functional properties must still be tested through slow and noisy wet-lab experiment; similarly, AlphaChip can generate chip layouts in silico \citep{goldie2024chip}, but performance ultimately depends on fabrication and benchmarking. In sum, the primary goal for self-improving AI systems in  business and social settings should be \textit{evaluation efficiency}. However, as discussed earlier, the existing work has largely focused on generation efficiency as the main objective.



\subsection{Research Agenda and Challenges}
In this paper, we seek to design an evaluation-efficient, theoretically grounded, and empirically effective framework for prompt-optimization-based self-improving AI.
We define \textit{evaluation efficiency} as: reaching high-performing solutions using as few costly evaluations as possible.


Our key and novel insight is that iterative prompt-optimization-based self-improving AI can be cast as \textit{Bayesian Optimization (BO)} in the language space. This recasting then allows us to inherit the key properties of classical  BO, in particular, it's evaluation efficiency. While intuitive and appealing, we nevertheless need to overcome technical challenges in order to translate BO to the language space. 

We now briefly summarize canonical BO and discuss the parallels between our problem and canonical BO and provide some intuition for why iterative prompt optimization can be viewed as BO in language space, and the describe the challenges that we need to address to fully leverage the BO framework for our problem.

\paragraph{Bayesian Optimization (BO).} Classical BO is a principled framework for tackling black-box optimization problems with two defining features: (1) There is no closed-form or analytical expression for the objective function or its derivative. In such cases, optimization has to rely on evaluating the function by sampling different points and getting a noisy response. (2) Function evaluation is costly. If evaluation is cheap, then this problem can be solved easily using methods such as grid search or numerical gradient estimation. However, if function evaluation is expensive, then it is important to minimize the number of samples drawn from the black box function. This is the regime in which BO is particularly effective \citep{frazier2018tutorial}. In BO, we maintain two models: (1) a surrogate model, that is used for approximating the objective function (given a history of prior evaluations). This model produces a posterior mean and an uncertainty estimate at each point in the space. (2) Acquisition model, that helps guide which point should be chosen for evaluation in the next iteration; this model trades-off predicted value and uncertainty (exploration-exploitation). Among many variants of BO, Upper Confidence Bound (UCB)-BO \citep{srinivas2012information} is known to be evaluation-efficient, i.e., no algorithm can achieve better in terms of both simple regret and expected cumulative regret \citep{whitehouse2023sublinear, wang2023regret}. In practice, the UCB acquisition function is often non-concave, so practitioners usually use \textit{parallel gradient-based} UCB-BO \citep{wilson2018maximizing}: start from multiple initial points, take gradient steps to reach several local optima, and evaluate those optima. Parallel gradient-based BO can be especially effective in high-dimensional settings \citep{papenmeier2025understanding}.

\paragraph{Iterative prompt optimization as BO.} The iterative-prompt optimization problem can be summarized as follows: in each iteration, the agent repeatedly selects a prompt, observes a noisy performance score, and decides what to try next. The overarching goal is to reach a high-performing prompt (or candidate solution) in as few evaluations as possible. We note that this problem shares some key features of the black-box optimization where BO typically performs well: the prompt-space is high-dimensional and there is no closed-form prompt-to-score model, exploration must be balanced with exploitation when we choose the next prompt to evaluate, and the goal is to maximize progress per costly evaluation. Building on this insight, we propose to cast iterative prompt optimization as \emph{parallel gradient-based UCB Bayesian optimization in language space}: prompts from prior rounds serve as initializations, gradient-style updates move toward local optima of an implicit UCB acquisition, and in-context learning plays the role of posterior updating.

\paragraph{Challenges.} However, realizing this idea requires overcoming two key challenges. First, in prompt space, gradients are ill-defined because text lacks the inner product structure of a Hilbert space required to measure the directions and magnitudes of change \citep{young1988introduction}. Second, even if we can define a meaningful ``textual gradient,'' we do not have direct access to the UCB acquisition function (or its derivatives). The surrogate posterior over prompt quality that induces the UCB lives implicitly in the LLM, rather than as an explicit Gaussian-process-style model that exposes closed-form posterior means, variances, and their derivatives. Consequently, standard gradient-based BO techniques that differentiate the UCB constructed from a known surrogate cannot be applied naively: we must design a mechanism that elicits UCB-like, optimism-in-the-face-of-uncertainty behavior using only black-box text interactions and critic feedback, without ever computing the UCB function or its gradient in closed form.


\subsection{\textsc{TextBO}: Algorithm, Theory, and Numerical Experiments}

We propose \textsc{TextBO}, a simple evaluation-efficient prompt-optimization based self-improving AI framework that adapts parallel gradient-based UCB-BO to language space by addressing the above challenges.


To address the first challenge, \textsc{TextBO} builds on \textsc{TextGrad} \citep{textgrad}, which adapts gradient descent to discrete text by using an auxiliary LLM \emph{critic} to propose targeted edits; applying these edits induces a local, directional update (a ``textual gradient'') in language space. For example, starting from \emph{``A simple photo of a plant-based burger on a white plate,''} A critic LLM might suggest:
\emph{``Make the burger appear delicious, and show it being enjoyed at a barbecue.''} Incorporating this edit yields:
\emph{``A photorealistic image of a juicy plant-based burger with grill marks, served at a vibrant summer barbecue
with friends.''} Such edits serve as language-native analogues of gradient steps. To address the second challenge, \textsc{TextBO} selects gradient updates via the \textsc{Best-of-N} principle \citep{snell2024scaling}, producing what we call the \textit{\textsc{Best-of-N} gradient}. A single textual gradient step exploits expected improvements implied by past results but does not account for uncertainty. \textsc{Best-of-N} gradient injects exploration into the acquisition by sampling multiple local edits and moving in the direction judged most promising by the critic---an optimism-based rule designed to emulate ascent on an implicit UCB acquisition gradient in language space.

\paragraph{Algorithm.} We now briefly describe our \textsc{TextBO} algorithm. Each iteration in the algorithm consists of three steps. In Step 1 ({\it meta-reflection phase}), we use the critic model to analyze the history of previously evaluated solutions and their scores, and update a meta-reflection that summarizes what has tended to work well or poorly. In Step 2 ({\it \textsc{Best-of-N} textual-gradient phase}), we perform a sequence of \textsc{Best-of-N} gradient updates from the current
prompt. That is, for each gradient step, (i) we propose $N$ textual-gradient edits that transform the current prompt into $N$ candidate prompts, (ii) each candidate prompt is passed to a solution-generation model to produce a corresponding candidate solution, (iii) conditioned on the meta-reflections, the critic model predicts which candidate solution is most likely to perform best, and (iv) we set the prompt associated with this predicted-best solution as the new current prompt. After Step 2, we obtain a refined prompt and its associated solution to evaluate. In Step 3 ({\it evaluation phase}), we evaluate this solution to get a scalar score,  which \textsc{TextBO} then uses to update its history and meta-reflection for subsequent optimization steps. 


 

\paragraph{Theory.} We provide theoretical guarantees on the performance of our approach. Under mild regularity conditions, we prove that the \textsc{Best-of-N} gradient computed from locally sampled textual edits induces (in probability) an ascent direction of the UCB acquisition function with exploration parameter $\beta_N = \Theta(\sqrt{\ln N})$ (Theorem~\ref{thm:main-ucb}). Consequently, \textsc{TextBO} effectively implements parallel, gradient-based UCB Bayesian Optimization in the implicit embedding space induced by the LLM, despite operating purely in language space and without constructing an explicit surrogate or uncertainty model. It therefore inherits the sublinear-regret and evaluation-efficiency guarantees established for UCB-BO, up to constant factors and approximation error due to textual gradients.


\paragraph{Numerical experiments.} We consider two sets of numerical experiments to empirically demonstrate the performance of \textsc{TextBO} -- (1) a advertising setting where the goal is to optimize ad creatives, and (2) a series of agentic AI benchmarks from the recent self-improving AI literature.

In the ad optimization experiments, the goal is to iteratively refine image-generation prompts for ad creatives to maximize responsiveness for a population of customers. We test and measure how well and how fast a method optimizes for (i.e., aligns with) the preference distribution induced by LLM-based simulation of personas in the Digital Twins persona dataset called Twin-2k-500 \citep{toubia2025twin}. We consider eight advertising scenarios and compare the performance of \textsc{TextBO} to two strong baselines -- (1) \textsc{Best-of-N} \citep{snell2024scaling}, a popular and well-performing test-time alignment method, and (2) \textsc{GEPA} \citep{agrawal2025gepa}, a state-of-the-art prompt optimization based self-improving AI algorithm, which has been shown to outperform reinforcement learning fine-tuning algorithms such as GRPO \citep{liu2024deepseek}.

We show that \textsc{TextBO} significantly outperforms both these baselines, across all eight scenarios. Further, in an ablation study, we show that these gains are not simply due to the persona data -- even when we simulate preferences without any persona information, i.e., when the evaluator is a single LLM judge with no auxiliary context -- \textsc{TextBO} outperforms baseline approaches. Our numerical experiments show that \textsc{TextBO} can reach good performance both in settings where there is significant heterogeneity in the preferences of the target population as well as in settings with homogeneous preferences.

Next, we consider a series of agentic AI benchmark experiments in order to assess whether the same \textsc{Best‑of‑N} multi‑step update principle improves evaluation efficienc. Specifically, we consider the experiments used to evaluate \textsc{GEPA} in \citet{agrawal2025gepa} -- HotpotQA, HoVer, and Pupa. We then compare the performance of \textsc{TextBO‑GEPA} (\textsc{GEPA} augmented with Best‑of‑N gradient sampling and multiple gradient steps) to \textsc{GEPA}. Across all experiments, we show that \textsc{TextBO-GEPA} consistently outperforms \textsc{GEPA}. 

Together, these experiments establish TextBO as an effective and general purpose self-improving AI algorithm that can be used for data driven decision-making for both business decisions such as ad creatives, product descriptions, as well as agentic tasks.

\subsection{Contributions}

In sum, our paper makes four main contributions to the literature on AI, optimization, and data-driven decision making. First, from a conceptual perspective, we identify evaluation efficiency as the key objective to optimize in self-improving AI design in societal and business systems. Second, we propose \textsc{TextBO}, a purely text-based self-improving AI framework that combines textual-gradient search with \textsc{Best-of-N} selection to automate and optimize business and marketing decisions (e.g., ad creatives, product descriptions). \textsc{TextBO} is simple and easy to implement: it requires no explicit uncertainty modeling, embeddings, or Gaussian process surrogates. Third, we provide theory showing that \textsc{Best-of-N} over locally sampled textual edits selects, in probability, ascent directions of the UCB acquisition with exploration weight $\beta=\Theta(\sqrt{\ln N})$; thus, we prove that \textsc{TextBO} emulates gradient-based UCB and inherits evaluation-efficiency guarantees. Finally, we demonstrate empirical gains using a wide variety of numerical experiments, including automated ad-alignment across eight scenarios as well as the standard agentic AI benchmarks used in the self-improving AI literature -- throughout,  \textsc{TextBO} improves faster and attains higher final scores compared to strong baselines, and remains effective both in settings with and without contextual information.



\section{Related Literature}
\label{ssec:related_lit}

Our paper relates and contributes to multiple streams of literature, including the literature on AI and LLMs in computer science, the operations literature on optimization, and the marketing literature on ad optimization. We now discuss each one below.

The literature on self-improving AI has often been motivated by an important observation: the progress in AI based only on human data is hitting limits, so AI must create and learn from its own data. In other words, powerful AI ``should have their own stream of experience that progresses, like humans, over a long time-scale''  \citep{silver2025welcome}. That is, we need to design the self-improving AI that iteratively 1) generate data of outcomes, evaluate outcomes with grounded signals, and 2) self-tune the three knobs -- \textit{models}, \textit{tools}, and \textit{prompts} -- in the system to achieve long-term objectives under real-world feedback \citep{wang2025maestro}. Here, tuning models is about altering language models' input-output correspondences \citep{huang-etal-2023-large, lee2023rlaif, acikgoz2025self}; tuning tools is about specifying the tools the AI has access to, including memory tools \citep{hou2025model, ouyang2025reasoningbank, qiu2025alita}; tuning prompts is about engineering prompts in the model's context window, including system messages, task instructions, constraints, schema, and few-shot exemplars \citep{zhou2022large, khattab2023dspy, pryzant2023automatic, yang2024largelanguagemodelsoptimizers,textgrad, agrawal2025gepa,  zhang2025agentic, wang2025maestro}. 

In particular, iterative prompt-optimization based self-improving AI have become increasingly popular. In a recent paper, \citet{agrawal2025gepa} demonstrate their method, \textsc{GEPA} (Genetic-Pareto) is able to outperform state-of-the-art model fine-tuning techniques such as GRPO \citep{liu2024deepseek} under a fixed budget of LLM generation calls across a range of challenging tasks including multi-document QA, constrained instruction following, retrieval-augmented fact verification, and privacy-related objectives. At a high-level, both \textsc{GEPA} and other prompt-based self-improvement \citep{agrawal2025gepa, zhang2025agentic, wang2025maestro} is specified as procedural heuristic search: maintain a pool of prompts, generate variants via LLM-driven mutation/crossover/reflection, score candidates on held-out data or rollouts, retain top performers, and iterate \citep{10.5555/3692070.3692611,guo2024connecting,agrawal2025gepa,better_together}. This mirrors classical heuristic and evolutionary optimization, where performance is assessed as a function of {\it generation efficiency}  rather than {\it evaluation efficiency} \citep{nguyen2016understanding, ding2023quality, hu2024automated, liu2024evolution}. Yet, as discussed earlier, in many real-world deployments, the scarce resource is not candidate generation but evaluation. Thus, in this paper, we focus on developing an iterative prompt-optimization framework that treat evaluation calls as the budget to allocate and optimize.


Our paper also relates to the literature on relating LLMs to Bayesian Optimization (BO). A series of papers focuses on applying LLMs to enhance BO for numerical optimization problems. \cite{liularge} and \cite{cisse2025language} focus on leveraging LLMs' contextual understanding and few-shot learning capabilities to improve BO’s efficiency, enabling better warm-starting and surrogate modeling; \cite {agliettifunbo} uses an LLM to discover new acquisition functions for Bayesian optimization. The LLM writes candidate acquisition function formulas as code, which are evaluated on various optimization problems. \cite{singh2025mechanistic} applies BO to enhance LLM’s internal behavior to boost the model’s zero-shot and few-shot performance. In contrast to these papers, we focus on the prompt optimization problem that arises when formulating LLM-based self-improving AI system design as a Bayesian Optimization problem. In line with this focus, \cite{schneider2025hyperbandbased} and \cite{kong2025metapromptoptimizationllmbasedsequential} formulate prompt optimization among the fixed set of candidate prompts as Bayesian optimization. While \cite{schneider2025hyperbandbased} utilizes ideas from adversarial bandits, \cite{kong2025metapromptoptimizationllmbasedsequential} feeds each prompt text into an embedding model, and the GP surrogate operates on those fixed embedding features to guide selection. However, unlike these papers, we actively {\it generate} new candidate prompts as the process proceeds, rather than pre-specifying a fixed set of candidate prompts. 

Next, our paper relates to the literature on efficient ad evaluation in digital marketing using bandit methods. Here, the goal is to iteratively improving the ads served given a fixed set of candidate ads. \cite{schwartz2017customer} adopts Thompson Sampling with a hierarchical general linear model for a real display advertising campaign for a financial-services firm. \cite{geng2020online} partitions JD.com’s platform users into disjoint sub-populations and applies a contextual Thompson sampling algorithm to maximize the expected advertiser payoff. \cite{aramayo2023multiarmed} developed a contextual Thompson sampling algorithm to dynamically determine the list of house ads to display on the homepage of an electronic retailer to maximize the accumulated click-through rate of the ads. \cite{ba2022advertising} used parametric Thompson Sampling, where a parametric structure links features (media, audience attributes) to conversion likelihood to optimally allocate ad-media and target-audience combinations in high-dimensional settings with low success rates. The main difference between these bandit-based papers and ours is the existence of a pre-defined set of candidate arms. Our paper focuses on creating new solutions by analyzing previous results. On the other hand, bandit methods aim to balance exploration of a pre-defined set of candidate ads optimally. The scope of our problem is also quite different. Our self-improving AI iterates over three steps--generation, evaluation, and analysis--whereas, bandits focus on efficiently executing the evaluation step. Indeed, one could employ a bandit method, such as best-arm-identification, in our evaluation step (instead of using A/B tests). 

Finally, our paper relates to the literature on ad content. Researchers have studied many different features of ad creatives and their effect on consumer response, e.g., creativity \citep{rosengren2020meta}, discount information \citep{biswas_2025}, emotionality of content \citep{macinnis_etal_2002}, and specific text phrases \citep{rutz_etal_2017}. However, a limitation of this literature is that each study is designed to test one specific ad feature in one specific advertising format/context (e.g., TV ads, search ads), and there is not much consensus on how various features and estimated effects should be combined to optimize the ad creative and whether these effects can be ported to other formats and audiences. In this paper, we move away from manually manipulating individual ad features and instead treat the ad-creative optimization problem as a high-dimensional optimization problem in the prompt-space; and our self-improving AI framework can be used to automate and optimize ad-creative generation at scale for any given combination product, ad-format, and audience.



\section{Problem Definition}
\label{sec:prelim}


Let $\Phi$ denote a generative AI model (or more generally, an AI system) that, given a prompt $\pi \in \Pi$, produces a candidate solution $\Phi(\pi)$.\footnote{LLM-based AI systems can also be modified by tuning the LLM's internal parameters using LLM fine-tuning methods such as GRPO \citep{liu2024deepseek}. In this paper, we focus on prompt optimization, which fixes those internal parameters throughout.} This candidate solution $\Phi(\pi)$, in conjunction with some environmental variables $x \in \mathcal{X}$ produces an output $y(\Phi(\pi), x)$, where $y$ is drawn from a distribution over an output space $\mathcal{Y}$. For example, in the ad creative optimization example from $\S$\ref{sec:intro}, $\Phi$ denotes an image generation AI model (e.g., Gemini Flash), $\pi$ is an image generation prompt that reflects what the marketer wants the ad to depict, $\Phi(\pi)$ is the ad creative generated by prompt $\pi$, $x \in \mathcal{X}$ denotes a user on the ad platform and/orother contextual features that affect the ad's effectiveness (e.g., seasonality or time of day), and a realization of $y(\Phi(\pi), x)$ represents that user/context $x$'s reaction (e.g., clicks, conversions, spend) to the ad $\Phi(\pi)$. 

Throughout, we allow for the distribution over $\mathcal{X}$ to depend on the prompt $\pi$. This is important for practical applications where such dependencies are common; for example, in digital advertising platforms, recommendation systems often determine which users are shown the ad $\Phi(\pi)$ based on the ad content, which depends on $\pi$. To capture such dependencies, we denote the input distribution over $\mathcal{X}$ by $\mathcal{D}_{\mathcal{X}}(\pi)$. In a fully randomized test, the distribution over $\mathcal{X}$ is independent of $\pi$, and we can denote the distribution of $\mathcal{X}$ as $\mathcal{D}_{\mathcal{X}}$.

To evaluate the output, we use a scoring rule: a function 
$r: \mathcal{Y} \to [0,1]$ that assigns a scalar effectiveness score to each $y\in\mathcal{Y}$.  
In the digital marketing example, $r$ is a marketer-defined function that maps observed user responses 
(clicks, conversions, spend) to a normalized scalar effectiveness score. Then, our optimization objective is to discover the prompt $\pi^*$ that maximizes the objective function:
\begin{equation}
    J(\pi):=\mathbb{E}[r(y(\Phi(\pi), x))].
\end{equation}
$J$ cannot be optimized by directly computing gradients, because the input distribution $\mathcal{D}_{\mathcal{X}}(\pi)$ can shift with $\pi$; even when $J(\pi)$ is differentiable with respect to $\pi$ and $y$'s randomness is not dependent on $\pi$, we have:
\begin{align}
    \nabla_{\pi} J(\pi)=
\underbrace{\mathbb{E}\big[\nabla_{\pi} r(y(\Phi(\pi),x))\big]}_{\text{model term}}
\;+\;
\underbrace{\mathbb{E}\big[r(y(\Phi(\pi),x)) \,\nabla_{\pi}\log d_{\pi}(x)\big]}_{\text{distribution term}},
\end{align}
where $d_{\pi}$ is the density of $\mathcal{D}_{\mathcal{X}}(\pi)$ \citep{williams1992simple}.  
Since $d_{\pi}$ and $\nabla_{\pi}\log d_{\pi}(x)$ are unobservable, we cannot compute gradients from observed rewards. This motivates the use of an external LLM that can analyze prior data and propose directions for improvement without explicitly computing gradients, leading to self-improving AI algorithms we describe below that iteratively optimize for $J$.\footnote{In fully randomized experiments, the distribution term drops out and only the model term remains.} 

\paragraph{Self-improving AI algorithm:} Given a scoring rule $r$, a \textit{self-improving AI algorithm} $\mathcal{A}$ optimizes $J(\pi):=\mathbb{E}[r(y(\Phi(\pi), x))]$ by iteratively analyzing the previous history, proposing a new prompt, and observing its noisy score. Formally, starting from the initial prompt $\pi_0$, a self-improving AI algorithm proceeds in discrete optimization iterations, indexed by $t = 1,2,\ldots$; at the beginning of iteration $t$, the algorithm has access to the history of past prompts and their observed scores, then selects a new prompt $\pi_t$, deploys the corresponding candidate solution $\Phi(\pi_t)$, and receives a score feedback on its performance. We define the score $s_t$ of the prompt $\pi_t$ as the empirical average reward at $t$, i.e.,
\begin{equation}
    s_t:=\frac{1}{L} \sum_{l=1}^L r\left(y\left(\Phi\left(\pi_t\right),x_t^{(l)}\right)\right), 
    \quad x_t^{(l)} \sim \mathcal{D}_{\mathcal{X}}\left(\pi_t\right),
    \label{eq:emp_score}
\end{equation}
where $L$ denotes the number of input samples $x_t^{(l)}$ we evaluate for each $t$. Note that we compare the performance of self-improving algorithms in terms of the progress of $s_t$ across iterations $t$.

\begin{definition}[Self-improving AI algorithm]

Fix a generative model $\Phi$ and a scoring rule $r$. For each iteration $t \ge 1$, let the history of past evaluations be
    $H_{t-1} \;:=\; \{(\pi_\tau, \Phi(\pi_{\tau}), s_\tau)\}_{\tau=0}^{t-1}$, where $s_\tau$ is the empirically evaluated score of $\pi_\tau$, defined as
\begin{align}
       s_t = \frac{1}{L} \sum_{l=1}^L r\bigl(y(\Phi(\pi_t), x_t^{(l)})\bigr), 
    \quad x_t^{(l)} \sim \mathcal{D}_{\mathcal{X}}(\pi_t).
\end{align}
Then, a self-improving AI algorithm is defined as function $\mathcal{A}$ that, at each iteration $t$, maps the history $H_{t-1}$ to a new prompt $\pi_t$, i.e., 
\[
    \pi_t = \mathcal{A}_t(H_{t-1}).
\]
\end{definition}

Operationally, given access to the history $H_{t-1}$, a self-improving AI algorithm $\mathcal{A}$ goes through following three steps each iteration $t$:
\vspace{-0.2cm}
\begin{enumerate}[leftmargin=0.5cm, noitemsep]
    \item \textit{Analysis:} Analyze the previous history $H_{t-1}$.
    \item \textit{Generation:} Propose the next prompts $\pi_t$ based on the analysis.
    \item \textit{Evaluation:} The system outcome $\Phi(\pi_t)$ is deployed, and its performance is measured, yielding a score $s_t$. The new observation is added to the history: $H_t = H_{t-1} \cup \{(\pi_t, \Phi\left(\pi_t\right), s_t)\}$.
\end{enumerate}
We will be more precise about how Steps 1 and 2 (analysis and generation) are implemented in our self-improving AI framework in $\S$\ref{sec:BoNAndTheory} and $\S$\ref{sec:TBoN}.

\paragraph{Optimization objective:} In this paper, we formulate the goal of self-improving AI algorithms $\mathcal{A}$ as improving prompts   $\left(\pi_t\right)_{t=1}^T$ over the evaluation budget $T$ to efficiently optimize the objective $J(\pi)$. Specifically, we discuss evaluation efficiency using two standard regret criteria. Let $\pi^{\star} \in \arg \max _{\pi \in \Pi} J(\pi)$. Then,
\squishlist
    \item  The (expected) cumulative regret is $\mathbb{E}\left[\sum_{t=1}^T\left(J\left(\pi^{\star}\right)-J\left(\pi_t\right)\right)\right]$.
    \item The (expected) simple regret is $\mathbb{E}\left[J\left(\pi^{\star}\right)-\max _{1 \leq t \leq T} J\left(\pi_t\right)\right]$.
\squishend
In the next section, we show how a self-improving AI algorithm can be viewed as UCB-based Bayesian Optimization, which is optimal under both regret criteria \citep{whitehouse2023sublinear, wang2023regret}. 

\section{Our Approach: Recasting Iterative Prompt Optimization as Bayesian Optimization in Language Space}
\label{sec:bo_recast}

As discussed in $\S$\ref{sec:intro}, our key idea is to model prompt-based self-improving AI as a Bayesian Optimization (BO) problem in language space. At a high level, we want a self-improving AI algorithm that repeatedly chooses a prompt, observes a noisy scalar performance signal (e.g., ad effectiveness), and decides which prompt to try next under a limited evaluation budget. We note that our optimization problem has no closed-form expression mapping the prompt to a performance score and a key constraint is that evaluating the function is costly. This is precisely the regime where BO has been shown to be effective, i.e., black-box objective functions where function evaluation is costly \citep{frazier2018tutorial, srinivas2012information, papenmeier2025understanding}. 

\paragraph{Canonical UCB Bayesian Optimization.}

To fix ideas, we start with a quick overview of BO. Classical BO considers a black-box function $f:\mathcal{Z}\to\mathbb{R}$ defined over a continuous input space $\mathcal{Z}$. The algorithm maintains a surrogate model---typically a Gaussian Process or related probabilistic regressor---that, given a history of evaluations $H_{t-1} = \{(z_\tau, y_\tau)\}_{\tau=1}^{t-1}$, produces for each $z\in\mathcal{Z}$ a posterior mean $\mu_{t-1}(z)$ and an uncertainty estimate $\sigma_{t-1}(z)$. Rather than maximizing $\mu_{t-1}(z)$ directly, BO selects the next query by maximizing an \emph{acquisition function} that trades off exploitation (high predicted value) and exploration (high uncertainty). A widely used acquisition is the Upper Confidence Bound (UCB)
\begin{equation}
    A_{\beta_t}(z)
    \;=\;
    \mu_{t-1}(z) \;+\; \beta_t \,\sigma_{t-1}(z),
\end{equation}
where $\beta_t > 0$ controls the strength of exploration \citep{srinivas2012information}. Figure~\ref{fig:UCBdetail} illustrates how $A_{\beta_t}$ combines the surrogate mean and uncertainty. Intuitively, $A_{\beta_t}(z)$ implements an ``optimism in the face of uncertainty'' rule: points with large uncertainty receive a positive bonus even if their current mean is modest.

\begin{figure}[htp!]
    \centering
    \includegraphics[width=0.65\linewidth]{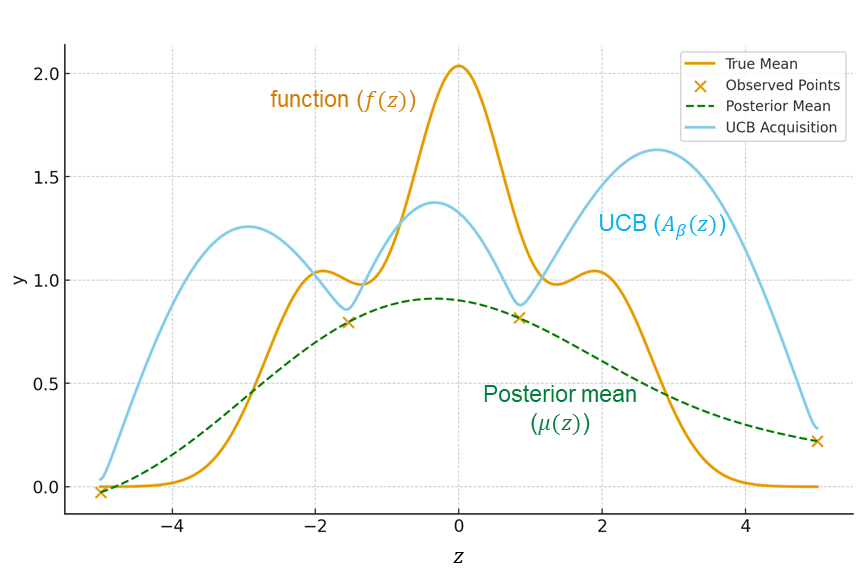}
    \caption{Illustration of the upper confidence bound (UCB) acquisition function in Bayesian optimization. The unknown objective $f(z)$ (orange) is evaluated at a set of observed points ($\times$). A surrogate model yields the posterior mean $\mu(z)$ (green dashed) and $\sigma(z)$. The UCB acquisition $A_\beta(z)=\mu(z)+\beta \sigma(z)$ (blue) trades off exploitation and exploration by favoring locations with high predicted value and high uncertainty.}
    \label{fig:UCBdetail}
\end{figure}

\begin{algorithm}[ht!]
\caption{Canonical UCB Bayesian Optimization}
\label{alg:ucb-bo}
\KwIn{Evaluation budget $T$, exploration parameter schedule $\beta$, initial history $H_0 = \emptyset$}
\KwOut{Sequence of evaluation points $\{z_t\}_{t=1}^T$ and observations $\{y_t\}_{t=1}^T$}
\For{$t = 1,\dots,T$}{
  Update surrogate model $p(f \mid H_{t-1})$ on past data\;
  Construct the UCB acquisition
  $$
      A_{\beta}(z) \;=\; \mu_{t-1}(z) \;+\; \beta\,\sigma_{t-1}(z).
  $$
  Select the next evaluation point
  \[
      z_t \;\in\; \operatorname{argmax}_{z} A_{\beta}(z).
  \]
  Evaluate $f$ at $z_t$ to obtain a noisy observation
  \[
      y_t = f(z_t) + \text{noise}.
  \]
  Augment the data: $H_t \leftarrow H_{t-1} \cup \{(z_t,y_t)\}$\;
}
\end{algorithm}
Algorithm~\ref{alg:ucb-bo} summarizes the resulting UCB-BO loop. At each iteration $t$, the surrogate is updated on $H_{t-1}$, the acquisition $A_{\beta_t}$ is constructed, the next point $z_t$ is chosen by maximizing $A_{\beta_t}$, the black-box function $f$ is evaluated at $z_t$ to obtain $y_t$, and the history is augmented. Under mild conditions, UCB-BO achieves sublinear regret and enjoys strong evaluation-efficiency guarantees: no algorithm can perform uniformly better in terms of regret scaling \citep{srinivas2012information, whitehouse2023sublinear, wang2023regret}.

In many practical settings, the global maximizer of $A_{\beta_t}$ is not available in closed form, and the acquisition is highly non-convex. A standard remedy is \emph{parallel gradient-based BO} \citep{wilson2018maximizing, papenmeier2025understanding}: one starts from multiple initial points, performs gradient ascent on $A_{\beta_t}$ from each initialization to find local maxima, and then evaluates $f$ at these local optima in parallel. This procedure has shown surprising empirical effectiveness in high-dimensional numerical optimization \citep{papenmeier2025understanding}.

\paragraph{Mapping self-improving AI to UCB-BO.}
We now conceptually align our iterative prompt-optimization problem with the BO template above. The correspondence is:
\squishlist
    \item The BO input space $\mathcal{Z}$ corresponds to the prompt space $\Pi$.
    \item The black-box function $f(z)$ corresponds to the system-level objective $
        f(\pi) \equiv J(\pi) := \mathbb{E}[r(y(\Phi(\pi), x))]$, where $x\sim\mathcal{D}_{\mathcal{X}}(\pi)$ and $r$ is the scoring rule defined in \S~\ref{sec:prelim}.
    \item The evaluation of $f(\pi)$ is the costly step: it requires deploying $\Phi(\pi)$, interacting with the environment (e.g., ad platform users), and computing $s_t$ as an empirical estimate of $J(\pi)$.
    \item The cheap computation is everything the LLM-based system can do \emph{without} new evaluations: analyzing past history, proposing textual edits, or internally comparing candidate ads.
\squishend

If we had an explicit surrogate model over prompts, providing $\mu_t(\pi)$ and $\sigma_t(\pi)$ for each $\pi\in\Pi$, we could in principle construct a UCB acquisition
\begin{equation}
    A_{\beta_t}(\pi)
    \;=\;
    \mu_t(\pi) \;+\; \beta_t \,\sigma_t(\pi),
\end{equation}
and then run parallel gradient-based BO over $\Pi$: start from multiple initial prompts, ascend the gradient $\nabla_\pi A_{\beta_t}(\pi)$ to obtain local optima, and evaluate $J(\pi)$ only at those local optima. This would give us a self-improving AI algorithm that is, by construction, evaluation-efficient. However, directly instantiating this plan in language space runs into two central challenges (originally discussed in $\S$\ref{sec:intro}).

\paragraph{Challenge 1: No native gradients in prompt space.}
Prompts are discrete sequences of tokens, not elements of a Euclidean vector space. As such, prompt space lacks the inner-product structure of a Hilbert space that underlies standard notions of directions, norms, and gradients \citep{young1988introduction}. Expressions such as $\nabla_\pi A_{\beta_t}(\pi)$ are therefore not well-defined in the usual calculus sense. 

A seemingly natural workaround is to embed prompts into a continuous space, take gradients there, and then ``project back'' to text: 
\begin{equation}
\pi \;\xrightarrow{\text{embed}}\; e=\mathsf{E}(\pi)\in\mathbb{R}^d
\;\xrightarrow{\text{gradient step}}\; e' 
\;\xrightarrow{\text{decode}}\; \pi'=\mathsf{E}^{-1}(e').
\end{equation}
where $\mathsf{E}$ is the prompt embedding. 
Such a strategy is indeed theoretically plausible thanks to the invertibility of prompt embedding  \citep{nikolaou2025language}. However, in practice, it encounters the well-known soft-to-discrete projection problem: learning the inverse prompt embedding ($\mathsf{E}^{-1}$), i.e., learning to decode an embedding into a prompt, is highly challenging \citep{cui2025automatic}. As a result, we cannot reliably implement gradient ascent in prompt space by differentiating through embeddings.

\paragraph{Challenge 2: No explicit UCB or uncertainty model.}
Even if we had a notion of gradient in language space, we would still face a second obstacle: the UCB acquisition function itself is not explicitly available. In standard BO, the surrogate model exposes both a posterior mean $\mu_t(\cdot)$ and an uncertainty estimate $\sigma_t(\cdot)$, from which we build $A_{\beta_t}(\cdot)$ and compute its gradient. 

In our setting, by contrast, there is no explicit Gaussian process or parametric surrogate over prompts. The relevant beliefs about prompt quality and uncertainty live \emph{implicitly} inside the LLM and its critic: they are encoded in how the model generalizes from past evaluations, not in an externally accessible $\mu_t$ or $\sigma_t$. Consequently, we cannot simply write down $A_{\beta_t}(\pi)$, evaluate it at arbitrary prompts, or compute $\nabla_\pi A_{\beta_t}(\pi)$ in closed form. Naive proxies for uncertainty (e.g., self-consistency variance, embedding distances) tend to be poorly calibrated or brittle, and training a separate Gaussian Process over high-dimensional text embeddings is often expensive, data-hungry, and empirically unstable.

\paragraph{Textual gradients as a partial solution.}
Recent work on \emph{textual gradients} offers a way to tackle the first challenge. The \textsc{TextGrad} framework \citep{textgrad} shows that one can approximate gradient ascent on an implicit objective by asking an auxiliary LLM \emph{critic} to propose local natural-language edits that are predicted to improve a scalar score. Applying these edits to the current prompt yields a sequence of locally improving prompts, which empirically and theoretically behaves as gradient ascent on the posterior mean $\mu(\pi)$ in an implicit embedding space thanks to bi-Lipschitzness of LLM's prompt embedding \citep{nikolaou2025language, tang2025understanding}. 

However, textual gradients as in \citet{textgrad} are inherently \emph{exploitative}: they aim to climb the posterior mean $\mu(\pi)$, ignoring epistemic uncertainty $\sigma(\pi)$. They, therefore, correspond to gradient ascent on $\mu(\pi)$, not on the UCB acquisition $A_{\beta_t}(\pi) = \mu(\pi) + \beta_t \sigma(\pi)$. To recover the evaluation-efficiency guarantees of UCB-BO, we need an update rule whose effective search direction behaves like $\nabla_\pi A_{\beta_t}(\pi)$, even though we never observe $\sigma(\pi)$ or $\nabla_\pi \sigma(\pi)$ explicitly.

\paragraph{Our approach.}
The central technical idea of this paper is to address Challenge~1 and Challenge~2 \emph{jointly} by combining textual gradients with a \textsc{Best-of-N} selection mechanism. Rather than differentiating an explicit acquisition function, we (i) use a critic LLM to generate multiple stochastic textual edits around the current prompt, (ii) apply these edits to obtain nearby candidate prompts and system outcomes, and (iii) ask the critic, conditioned on past history, to pick the most promising candidate in a \textsc{Best-of-N} fashion. 


In $\S$\ref{sec:BoNAndTheory}, we formalize this procedure as a \emph{\textsc{Best-of-N} gradient} and show that, under mild regularity conditions, the direction selected by this mechanism converges (in probability) to the ascent direction of a UCB-style acquisition function in the LLM’s implicit embedding space, with exploration parameter $\beta_N = \Theta(\sqrt{\ln N})$ (Theorem~\ref{thm:main-ucb}). This theoretical result gives us a principled way to emulate parallel gradient-based UCB-BO entirely in language space using only black-box text interactions. We then leverage this equivalence to design a concrete self-improving AI algorithm in $\S$\ref{sec:TBoN}, \textsc{TextBO}, that is evaluation-efficient, simple to implement, and readily deployable in real-world applications such as digital advertising.

\section{\textsc{Best-of-N} gradient and Theory}
\label{sec:BoNAndTheory}

As discussed in $\S$\ref{sec:bo_recast}, our aim is to implement UCB-style Bayesian Optimization in prompt space using only black-box access to an LLM critic. The theoretical challenge is to show that the combination of (i) \emph{textual gradients} and (ii) \emph{\textsc{Best-of-N}} selection actually induces gradient ascent on a UCB acquisition function in the LLM’s implicit embedding space. This section formalizes that idea and proves our main result.

We proceed in three steps. In $\S$\ref{ssec:critic-llm}, we formalize the role of the 
\emph{critic LLM}, which provides the primitive operations—pairwise judgments, textual gradients, 
and meta-reflections—that enable search in language space. 
$\S$\ref{sec:BoNgrad} then introduces the \emph{\textsc{Best-of-N} gradient}, our key mechanism for 
turning these critic capabilities into a stochastic, locally improving update rule. 
Finally, in $\S$\ref{sec:theory}, we prove that \textsc{Best-of-N} gradients asymptotically emulate 
gradient ascent on a UCB acquisition function in the LLM’s implicit embedding space. 
This establishes the theoretical justification for the \textsc{TextBO} framework developed in the next section.

\subsection{Critic LLM}
\label{ssec:critic-llm}

The use of a separate LLM as a \emph{critic} is now a standard pattern in the prompt-optimization and
self-improving AI literature: critic models are routinely used to provide structured feedback, propose improvements, or act as reward models and judges that guide search over prompts and outputs
\citep{zhou2022large, textgrad, agrawal2025gepa, zhang2025agentic, wang2025maestro, shi2024judging}.
We therefore treat the critic LLM as a natural and practically realistic component that supplies the
basic operations needed to implement Bayesian-optimization-style updates in language space.

Recall that we denote a prompt as $\pi$ and the corresponding candidate solution (e.g., an ad creative) as $\Phi(\pi)$, the optimization history up to iteration $t$ as $H_t = \{(\pi_\tau, \Phi(\pi_{\tau}), s_\tau)\}_{\tau \le t}$, where $s_\tau$ is the evaluated score of $\pi_\tau$. Then, let $M_{\text{critic}}$ denote the critic LLM, which can perform the following four operations:
\begin{enumerate}[leftmargin=0.5cm]
    \item {\it Pairwise Judge Operator.} Given two prompts $\pi^{(1)}$ and $\pi^{(2)}$, and their corresponding outcomes $\Phi(\pi^{(1)})$ and $\Phi(\pi^{(2)})$, predict which is better given reflection $R$:
    \begin{equation}
       \textsc{PairwiseJudge}(\Phi(\pi^{(1)}), \Phi(\pi^{(2)}); R, M_{\text{critic}}) \in \{\Phi(\pi^{(1)}), \Phi(\pi^{(2)})\}.
    \end{equation}

    \item {\it Meta-reflection Operator.} Compress the entire history into a small set of natural-language rules or guidelines:
    \begin{equation}
        R_t = \textsc{MetaReflect}(H_t; M_{\text{critic}}).
    \end{equation}

    \item {\it Textual Gradient Operator.} Stochastically generate a local edit $\delta$ intended to improve $\pi$ under reflection $R$:
    \begin{equation}
        \delta \sim \nabla_{\text{text}}(\pi; R, M_{\text{critic}}).
    \end{equation}

    \item {\it Apply Operator.} Apply an edit $\delta$ to obtain a new prompt:
    \begin{equation}
        \pi' = \operatorname{Apply}(\pi, \delta; M_{\text{critic}}).
    \end{equation}
\end{enumerate}

These four operators map directly onto well-documented LLM capabilities in the prompt-optimization and evaluative-feedback literature. The \textsc{PairwiseJudge} operator aligns with the growing use of LLMs as reliable comparison judges for creative and reasoning tasks \citep{shi2024judging, calderon2025alternative}. The \textsc{MetaReflect} operator captures reflection-based prompt-improvement methods that distill past outcomes into actionable guidelines (e.g., in the ad optimization case, it could suggest that “social settings outperform isolated product shots") \citep{jiang2024many, ferber2024context, wang2025maestro}. The \textsc{TextualGradient} operator corresponds to LLM-driven proposals for targeted, locally improving edits (e.g., “make the lighting warmer and emphasize people enjoying the product”), as formalized in \citet{textgrad} and applied across domains such as scientific reasoning and code refinement \citep{li2025test}. Finally, the \textsc{Apply} operator reflects standard LLM-based prompt editing widely used in prompt engineering \citep{zhou2022large, khattab2023dspy}. Together, these capabilities make an LLM critic a natural, practically grounded component for enabling BO-style search over prompt space. 


\subsection{\textsc{Best-of-N} gradient}
\label{sec:BoNgrad}

The notion of a \textsc{Best-of-N} gradient builds on the widely used \textsc{Best-of-N} strategy for test-time
alignment, where an LLM trades additional inference-time compute for higher-quality outputs by
sampling multiple candidates and then selecting the one that appears most aligned with a downstream
objective \citep{snell2024scaling, beirami2024theoretical}. In standard \textsc{Best-of-N} prompting,
an LLM generates $N$ candidate prompts, a generation model produces $N$ candidate outcomes, and a
reward model selects the highest-scoring one. While effective for choosing a single output, our goal
here is to improve the \emph{prompt} itself. To accomplish this, we introduce the concept of \emph{\textsc{Best-of-N}
gradient}. Rather than sampling $N$ full prompts, we sample $N$ distinct \emph{textual gradients}, apply them to
the current prompt, and then select the candidate whose predicted outcome is judged most promising.
This selection defines a localized “direction of improvement” in prompt space, and can be viewed as an {\it acquisition function}.

Formally, given a current prompt $\pi$ and reflection $R$, a single \textsc{Best-of-N} gradient step proceeds
as follows:
\begin{enumerate}[leftmargin=0.5cm,itemsep=2pt]
  \item[1)] \textit{Textual gradient generation.}  
  Use $M_{\text{critic}}$ to generate $N$ independent textual edits:
  \begin{equation}
      \delta^{(i)} \sim \nabla_{\text{text}}(\pi; R, M_{\text{critic}}), \qquad i=1,\dots,N.
  \end{equation}

  \item[2)] \textit{Candidate prompt generation.}  
  Apply each edit to produce $N$ candidate prompts:
  \begin{equation}
      \pi^{(i)} = \operatorname{Apply}(\pi, \delta^{(i)}; M_{\text{critic}}).
  \end{equation}

  \item[3)] \textit{Candidate system outcome generation.}  
  Produce the associated system outcomes:
  \begin{equation}
     \Phi(\pi^{(i)}), \qquad i=1,\dots,N.
  \end{equation}

  \item[4)] \textit{\textsc{Best-of-N} selection.}  
  Identify the most promising candidate using the critic:
  \begin{equation}
      i^\star = \textsc{Best-of-N}\big(\{(\pi^{(i)}, \Phi(\pi^{(i)})\}_{i=1}^N; R, M_{\text{critic}}\big).
  \end{equation}

For the \textsc{Best-of-N} selection, we employ a pairwise tournament \citep{liu2025pairjudgermperformbestofn}. 
The $N$ candidates are first randomly permuted into tournament buckets. Each match compares two 
outcomes using the \textsc{PairwiseJudge} operator, with the comparison repeated $K$ times and the 
order of presentation swapped to mitigate positional bias \citep{shi2024judging} (see the prompt 
template in Figure~\ref{fig:tornament_prompt} in Web Appendix~\ref{appssec:critic_llm_prompts} for the ad-optimization case). Ties are broken at 
random. Winners advance round by round until a single final winner remains; its associated prompt and outcome are then carried forward to the next gradient step.

\end{enumerate}

The resulting prompt $\pi^{(i^\star)}$ defines the update made by the \textsc{Best-of-N} gradient for a given step. Intuitively, each textual edit represents a noisy proposal for a local improvement direction; \textsc{Best-of-N} filters these proposals through an optimism-based rule, selecting the direction most likely to improve performance under the critic LLM’s current understanding.

\subsection{Theory: \textsc{Best-of-N} gradient's Asymptotic Equivalence to UCB gradient}
\label{sec:theory}

We now prove that the \textsc{Best-of-N} over textual gradients selects, in probability, the ascent direction of the UCB acquisition with exploration weight $\beta$ scaling as $\Theta(\sqrt{\ln N})$ (Theorem \ref{thm:main-ucb}). What allows us to do rigorous theoretical analysis is that LLM prompt embeddings are invertible and bi-Lipschitz: LLM's prompt embeddings are known to be injective, and therefore invertible \citep{nikolaou2025language}; LLM embeddings are Lipschitz \citep{tang2025understanding}, implying they are bi-Lipschitz. Hence, small textual edits can be represented as small changes in the embedding space and vice versa. Consequently, small textual edits correspond to small perturbations in the embedding space, allowing us to analyze \textsc{Best-of-N} gradient steps in $\mathbb{R}^d$ and then translate the theoretical guarantees back to the prompt space.

\subsubsection{Intuitive Explanation} 

Denote $e$ to be an embedding of a prompt. Pick a small radius $\varepsilon>0$ around $e$ and sample $N$ nearby edits $e_i=e+\varepsilon u_i$ with $u_i$ on the unit sphere. Next, denote the critic LLM's ($M_{critic}$ from $\S$\ref{ssec:critic-llm}) internal scoring of each edit, $Y_i$, as follows: 
\begin{equation}
Y_i=\mu(e_i)+\xi_i \, \sigma(e_i),
\end{equation}
where $\mu$ is the (implicit) posterior mean, $\sigma$ is the (implicit) epistemic uncertainty, and $\xi_i$ is an i.i.d. sub-Gaussian noise variable  independent of $u_i$. We note that $Y_i$ is in line with the BO scoring function from $\S$\ref{sec:bo_recast}.

Let $M_N = \max_i \xi_i$ and let $q_N = F^{-1}(1-1/N)$ be the $(1-1/N)$-quantile of the noise
distribution. For Gaussian noise, classical extreme-value theory implies that $M_N$ concentrates
around $q_N$ and that $q_N = \Theta(\sqrt{\ln N})$. In other words, as $N$ grows large, the
largest noise term among $\{\xi_i\}_{i=1}^N$ behaves like $q_N$ up to lower-order fluctuations.
Consequently, the index $i^\star$ selected by \textsc{Best-of-N} approximately maximizes
\begin{equation}
\mu(e_i) + q_N\,\sigma(e_i).
\end{equation}
Consequently, for large $N$, selecting the \textsc{Best-of-N} index $i^{\star}$ is asymptotically equivalent to selecting the maximizer of UCB-style scores with
exploration weight $q_N$.

Now consider a small sphere around $e$ in embedding space and the UCB acquisition
$A_\beta(e') = \mu(e') + \beta\,\sigma(e')$. On this sphere, there is a single dominant ascent
direction for $A_\beta$, which we denote by $v_\beta$ (the direction in which $A_\beta$ increases
the fastest to first order). Draw a narrow cone around $v_\beta$. For sufficiently large $N$, at
least one of the sampled directions $u_i$ falls inside this cone with high probability, and the
\textsc{Best-of-N} winner $e_{i^\star}$ comes from this cone. Thus, among the locally sampled perturbations $e_i=e+\varepsilon u_i$, the \textsc{Best-of-N} rule selects a candidate whose direction is (with high probability, for large enough $N$) very close to the ascent direction of the UCB acquisition function with $\beta_N:=q_N$.

\subsubsection{Rigorous Derivation}

Let $\Pi$ be the set of syntactically valid prompts. 
Suppose that there exists a text embedding $\mathsf{E}:\Pi\to\mathbb{R}^{d}$, where
$
e \;:=\; \mathsf{E}(\pi)\in\mathbb{R}^{d}
$ is the embedding of a prompt $\pi\in\Pi$. We first state the assumptions motivated by how language model embeddings behave in practice, and then state the main theorem, Theorem \ref{thm:main-ucb}. 

At each gradient step of \textsc{TextBO}, we sample edits that correspond to $u_1,\ldots,u_N \stackrel{\text{i.i.d.}}{\sim} \rho$ on the embedding space $\subset\mathbb{R}^d$, where $\rho$ is assumed to have a density bounded below by a positive constant with respect to surface measure. Assumption \ref{ass:A0} implies that small textual edits induce near‑linear moves in $\mathbb{R}^d$, consistent with the recent findings that prompt embeddings preserve phrase similarity under small changes \citep{reimers2019sentence, he2025position, tang2025understanding, nikolaou2025language}. 

\begin{assumption}[Embedding map and edit realization]
\label{ass:A0}
For the current prompt $\pi$ with $e=\mathsf{E}(\pi)$, constants $C<\infty$ and $\varepsilon_0>0$ such that for every unit vector $u\in\mathbb{R}^{d}$ and every $\varepsilon\in(0,\varepsilon_0]$ the edit operator $\mathrm{Apply}$ realizes a first–order move:
\begin{align}
\mathsf{E}\!\big(\mathrm{Apply}(\pi,\varepsilon u)\big)
= e + \varepsilon u + r(\varepsilon,u),
\qquad
\sup_{\|u\|=1}\|r(\varepsilon,u)\| \le C\,\varepsilon^2 .
\end{align}
\end{assumption}


Next, Assumption \ref{ass:A1} is a standard regularity condition: function $f:\mathbb{R}^d\to\mathbb{R}$ is of $C^{1,1}$ if it is continuously differentiable and its gradient is Lipschitz:
$\exists L<\infty$ such that $\|\nabla f(y)-\nabla f(z)\|\le L\|y-z\|$ for all $y,z$. 

\begin{assumption}[$\mu,\sigma$ are of $C^{1,1}$]
\label{ass:A1}
Let $\mu(e)$ and $\sigma(e)$ denote the mean and standard deviation of the critic LLM's internal score of embedding $e$. We assume $\mu, \sigma$ are $C^{1,1}$ locally.
We use $g=\nabla\mu(e)$ and $h=\nabla\sigma(e)$.
\end{assumption}

Assumption \ref{ass:A3} specifies a heteroskedastic sub-Gaussian model for candidate scores (with i.i.d.\ mean-zero, unit-variance, unbounded-support noise independent of the sampled directions) and an ideal \textsc{Best-of-N} choice oracle.

\begin{assumption}[Noise model and choice oracle]
\label{ass:A3}
Let $\{\xi_i\}_{i=1}^N$ be i.i.d.\ mean-zero, unit-variance, Gaussian random variables with unbounded support, independent of $\{u_i\}$.
For candidates $e_i := x + \varepsilon u_i$, the critic LLM observes its internal scoring
\begin{equation}
\label{eq:Y-def}
Y_i \;=\; \mu(e_i)\;+\;\sigma(e_i)\,\xi_i,
\end{equation}
and the \textsc{Best-of-N} selection oracle (often built using pairwise tournaments in practice; see $\S$\ref{sec:BoNgrad}) returns corresponding $i^\star\in\arg\max_{i\in[N]} Y_i$.
\end{assumption}

\begin{assumption}[Small-step regime and limit order]
\label{ass:A4}
We take limits with $N\to\infty$ first, then $\varepsilon\downarrow 0$.
\end{assumption}

\begin{lemma}[Gaussian maxima and spacing \citep{vershynin2018high}]
\label{lem:maxima}
Let $\xi_1,\dots,\xi_N$ be i.i.d.\ mean-zero, unit-variance Gaussian, $M_N=\max_{i\le N}\xi_i$, $S_N$ the second largest, and $q_N:=F^{-1}(1-1/N)$ the $(1-1/N)$-quantile of $\xi_1$. Then $q_N=\Theta(\sqrt{\ln N})$, $M_N-q_N\to 0$ in probability, and $M_N-S_N=O_p(1/q_N)$.
\end{lemma}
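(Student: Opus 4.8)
The plan is to reduce all three claims to sharp control of the upper tail of the standard Gaussian. Writing $f$ for its density and $\bar F = 1-F$ for the survival function, I would first record the Mills-ratio asymptotics $\bar F(x) = \tfrac{f(x)}{x}\bigl(1+O(x^{-2})\bigr)$ as $x\to\infty$, together with the two-sided companion $\bigl(x^{-1}-x^{-3}\bigr)f(x)\le \bar F(x)\le x^{-1}f(x)$. From these I would extract the single estimate that drives everything: for each fixed $s\in\mathbb{R}$,
\[
\frac{\bar F\!\bigl(q_N + s/q_N\bigr)}{\bar F(q_N)} \longrightarrow e^{-s}, \qquad N\to\infty ,
\]
which follows because $-\tfrac12(q_N+s/q_N)^2+\tfrac12 q_N^2 = -s + O(q_N^{-2})$ and both the density ratio and the Mills correction tend to $1$. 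The same asymptotics give, for fixed $\epsilon>0$, $\bar F(q_N\pm\epsilon)/\bar F(q_N)\asymp e^{\mp\epsilon q_N}$, which tend to $0$ and $\infty$ respectively since $q_N\to\infty$.

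For claim (i), I would substitute the defining identity $\bar F(q_N)=1/N$ into the Mills asymptotics and take logarithms, obtaining $\ln N = \tfrac12 q_N^2 + \ln q_N + \tfrac12\ln(2\pi) + o(1)$; since $\ln q_N = o(\ln N)$ this yields $q_N^2/(2\ln N)\to 1$, hence $q_N=\sqrt{2\ln N}\,(1+o(1))=\Theta(\sqrt{\ln N})$. For claim (ii), I would use $P(M_N\le x)=F(x)^N=(1-\bar F(x))^N$ together with $(1-p)^N = \exp(-Np\,(1+o(1)))$ as $p\to 0$. Evaluating at $x=q_N+\epsilon$ gives $N\bar F(q_N+\epsilon)=\bar F(q_N+\epsilon)/\bar F(q_N)\to 0$, so $P(M_N>q_N+\epsilon)\to 0$; evaluating at $x=q_N-\epsilon$ gives $N\bar F(q_N-\epsilon)\to\infty$, so $P(M_N<q_N-\epsilon)\to 0$. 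Combining the two bounds proves $M_N-q_N\to 0$ in probability.

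For the spacing in claim (iii) I would compute the limiting law of $q_N(M_N-S_N)$ directly. By exchangeability and continuity of $F$, conditioning on which coordinate attains the maximum gives the exact identity
\[
P\bigl(M_N - S_N > \delta\bigr) \;=\; N\int_{-\infty}^{\infty} f(x)\,F(x-\delta)^{N-1}\,dx .
\]
Setting $\delta = s/q_N$ and substituting $x = q_N + t/q_N$, I would use (a) $Nf(q_N)/q_N\to 1$, from $f(q_N)\sim q_N\bar F(q_N)=q_N/N$; (b) $f(q_N+t/q_N)/f(q_N)\to e^{-t}$; and (c) $F(q_N+(t-s)/q_N)^{N-1}\to \exp\!\bigl(-e^{s}e^{-t}\bigr)$, the last again via the tail-ratio estimate. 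These reduce the integral to $\int_{\mathbb{R}} e^{-t}\exp(-e^{s}e^{-t})\,dt = e^{-s}$ (substitute $w=e^{-t}$), so $P\bigl(q_N(M_N-S_N)>s\bigr)\to e^{-s}$, i.e. $q_N(M_N-S_N)$ converges in distribution to an $\mathrm{Exp}(1)$ law. In particular the sequence is tight, which is exactly $M_N-S_N = O_p(1/q_N)$.

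The main obstacle is the asymptotic evaluation of the integral in claim (iii): the pointwise limits (a)–(c) must be upgraded to genuine convergence of the integral, which requires a dominating envelope valid uniformly over the entire range of $t$ — both the left region $t\to-\infty$, where $F(x-\delta)^{N-1}$ departs from its limit, and the far-right region where the Mills approximation degrades. Establishing this domination, and controlling $(1-\bar F)^{N-1}$ by an integrable function uniformly in $N$, is the one place real care is needed; by contrast the tail estimates of the first paragraph and the arguments for (i) and (ii) are routine once those estimates are in hand. A pitfall to avoid is conflating $q_N$ with the classical extreme-value constants $b_N$, but centering directly at $q_N$ and using the exact identity $N\bar F(q_N)=1$ makes the rescaled intensity land at $e^{-t}\,dt$ without any such comparison.
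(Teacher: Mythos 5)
Your proof is essentially correct, but note that the paper does not actually prove this lemma anywhere: it is stated with a citation to Vershynin (2018) and used as an off-the-shelf fact, and the proof appendix covers only the other auxiliary lemmas (Taylor expansion, max-stability, spherical cap, thin-band coupling) and the main theorem. Your argument is therefore a self-contained replacement for that citation rather than an alternative to a proof the paper gives. Your treatment of (i) and (ii) via the Mills ratio and the identity $N\bar F(q_N)=1$ is the standard route and is fine. For (iii) you prove strictly more than the lemma asserts: the full distributional limit $q_N(M_N-S_N)\Rightarrow\mathrm{Exp}(1)$, where only tightness ($O_p(1/q_N)$) is needed. The dominated-convergence obstacle you flag is real but is resolved by truncation rather than by a global envelope: in the exact identity $P(M_N-S_N>\delta)=N\int f(x)\,F(x-\delta)^{N-1}\,dx$, split the range at $x=q_N-A/q_N$ and $x=q_N+B/q_N$. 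The left piece is bounded, using $F(x-\delta)\le F(x)$ and $N\int_{-\infty}^{y}f(x)F(x)^{N-1}dx=F(y)^N$, by $F(q_N-A/q_N)^N\to\exp(-e^{A})$; the right piece is bounded by $N\bar F(q_N+B/q_N)\to e^{-B}$; and on the compact middle region the pointwise limits (a)--(c) hold uniformly. Since both tail contributions are uniformly small in $N$ once $A,B$ are large, the limit of the integral follows, and with it the exponential law. One could even shortcut (iii) further if one only wants tightness, since $P\bigl(q_N(M_N-S_N)>s\bigr)\le\limsup_N(\text{integral})\le e^{-s}+\text{tails}$ already suffices without identifying the limit exactly.
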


\begin{theorem}[\textsc{Best-of-N} asymptotically induces a UCB gradient direction]
\label{thm:main-ucb}
Under Assumptions \ref{ass:A0}-\ref{ass:A4}, define $\beta_N:=q_N$ from Lemma~\ref{lem:maxima}. Let $i^\star\in\arg\max_i Y_i$ and set $\widehat u_N:=u_{i^\star}$. Define the step
\begin{align}
    \Delta e^{(N)} \;:=\; \mathsf{E}\!\big(\mathrm{Apply}(\pi,\varepsilon \widehat u_N)\big) - e
\;=\; \varepsilon \widehat u_N + r(\varepsilon,\widehat u_N).
\end{align}
Then, with the limit order $N\to\infty$ first and $\varepsilon\downarrow 0$ second,
\begin{align}
    \frac{\Delta e^{(N)}}{\|\Delta e^{(N)}\|}
\;=\;\widehat u_N
\;\xrightarrow{p}\;
\frac{\nabla A_{\beta_N}(e)}{\|\nabla A_{\beta_N}(e)\|}
\;=\; \frac{g+\beta_N h}{\|g+\beta_N h\|}.
\end{align}
Moreover,
\begin{align}
    A_{\beta_N}(e+\Delta e^{(N)}) \;\ge\; A_{\beta_N}(e) \;+\; \varepsilon\,\|\nabla A_{\beta_N}(e)\| \;-\; o_p(\varepsilon).
\end{align}
\end{theorem}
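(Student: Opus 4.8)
The plan is to run the entire argument in the embedding space $\mathbb{R}^{d}$, using the bi-Lipschitz invertibility of $\mathsf{E}$ together with Assumption~\ref{ass:A0} to translate the sampled textual edits into first-order moves $x_i = x + \varepsilon u_i + r(\varepsilon,u_i)$ with $\|r\|\le C\varepsilon^2$, and then to carry the resulting direction guarantee back to prompt space. First I would use Assumption~\ref{ass:A1} to linearize the surrogate statistics along each ray, $\mu(x_i)=\mu(x)+\varepsilon\langle g,u_i\rangle+O(\varepsilon^2)$ and $\sigma(x_i)=\sigma(x)+\varepsilon\langle h,u_i\rangle+O(\varepsilon^2)$. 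Substituting into the score model $Y_i=\mu(x_i)+\sigma(x_i)\xi_i$ from Assumption~\ref{ass:A3} and regrouping around the UCB gradient yields the key decomposition
\begin{align}
Y_i = \mu(x) + \beta_N\sigma(x) + \varepsilon\langle g + \beta_N h, u_i\rangle + \sigma(x_i)(\xi_i - \beta_N) + O\!\big(\varepsilon^2(1+\beta_N)\big),
\end{align}
in which the directional signal points exactly along $\nabla A_{\beta_N}(x)=g+\beta_N h$ and the remaining stochastic term is the ``noise excess'' $\sigma(x_i)(\xi_i-\beta_N)$. Reducing $\arg\max_i Y_i$ to the competition between these two terms is the heart of the proof.

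Second, I would control the noise excess through Lemma~\ref{lem:maxima}, which supplies $q_N=\Theta(\sqrt{\ln N})$, the concentration $M_N-q_N\xrightarrow{p}0$, and—critically—the top spacing $M_N-S_N=O_p(1/q_N)$, so only candidates whose noise lies within an $O_p(1/q_N)$ band below the maximum are serious contenders. A counting argument using that $\rho$ has density bounded below on the sphere shows that, for any fixed band width $t$ around $\beta_N$, the number of candidates with $\xi_i\ge\beta_N-t$ diverges as $N\to\infty$ while their directions $u_i$ remain (conditionally) spread over the sphere; hence some of them land in an arbitrarily narrow cone about $v_N:=(g+\beta_N h)/\|g+\beta_N h\|$. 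On this near-maximal-noise sub-population the directional term dominates the residual noise fluctuations, forcing the maximizer's direction into the cone, so that $\langle\widehat u_N,v_N\rangle\xrightarrow{p}1$, i.e. $\widehat u_N\xrightarrow{p}v_N$ in the stated iterated limit.

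The main obstacle is the coupling between each candidate's noise realization $\xi_i$ and its effective direction vector $g+\xi_i h$: the direction a candidate ``should'' have to be optimal depends on its own $\xi_i$, so the UCB decomposition is exact only when $\xi_i=\beta_N$. I would resolve this by first establishing $\xi_{i^\star}=\beta_N(1+o_p(1))$—a candidate with noise far below $\beta_N$ cannot win, since the diverging near-maximal-noise sub-population already contains candidates with near-optimal alignment, and among comparably aligned candidates a larger $\xi_i$ strictly increases $\sigma(x_i)\xi_i$—and then absorbing the residual discrepancy $|\xi_{i^\star}-\beta_N|\cdot\varepsilon\|h\|$ into the error. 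The limit order is essential here: taking $N\to\infty$ first lets the spacing and cone-population estimates bite, pinning the winner's noise near $\beta_N$ and its direction near $v_N$; taking $\varepsilon\downarrow0$ afterwards removes the $O(\varepsilon^2(1+\beta_N))$ curvature remainders and the $\xi_i$-versus-$\beta_N$ mismatch in the effective direction.

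Finally, the ascent inequality follows from the direction result by a single first-order expansion. Using the local $C^{1,1}$ smoothness of $A_{\beta_N}$ from Assumption~\ref{ass:A1} and $\Delta x^{(N)}=\varepsilon\widehat u_N+r(\varepsilon,\widehat u_N)$ with $\|r\|\le C\varepsilon^2$,
\begin{align}
A_{\beta_N}(x + \Delta x^{(N)}) = A_{\beta_N}(x) + \varepsilon\,\|\nabla A_{\beta_N}(x)\|\,\langle v_N, \widehat u_N\rangle + \text{(remainder)},
\end{align}
and since $\langle v_N,\widehat u_N\rangle\to1$ while the remainder (curvature plus the edit-realization error $r$) is $o_p(\varepsilon)$ in the iterated limit, the claimed bound $A_{\beta_N}(x+\Delta x^{(N)})\ge A_{\beta_N}(x)+\varepsilon\|\nabla A_{\beta_N}(x)\|-o_p(\varepsilon)$ follows.
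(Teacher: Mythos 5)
Your proposal is correct and follows essentially the same route as the paper: a first-order expansion of $\mu,\sigma$, a regrouping of $Y_i$ into a UCB directional term plus a noise-excess term, a thin-band-plus-spherical-cap coupling argument (the paper's Lemmas~\ref{lem:thin-band} and \ref{lem:spherical-cap}) forcing the winner's direction into a narrow cone about $v_N$, and the same iterated limit and first-order expansion for the ascent inequality. The only cosmetic difference is that you center the decomposition at $\beta_N=q_N$ directly, whereas the paper centers at the realized maximum $M_N$ (via the frozen-noise scores $S_i=a_i+b_iM_N$ and the max-stability lemma) and only swaps $M_N$ for $q_N$ at the end using $M_N-q_N\xrightarrow{p}0$.
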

Thus, each \textsc{Best-of-N} gradient step asymptotically performs a first-order improvement on an implicit UCB
acquisition function, with exploration strength $\beta_N$ controlled entirely by $N$. The full proof is given in Web Appendix~\ref{sec:proof}.

Theorem~\ref{thm:main-ucb} provides the central conceptual and technical contribution of this paper: it shows that a procedure that looks like a purely heuristic rule in language space \emph{implicitly implements} the same optimism-driven search direction as UCB BO in the model's embedding space. This is novel and important because it removes the main obstacle to bringing UCB BO's evaluation-efficiency guarantees into prompt optimization: we never construct an explicit surrogate model, never compute $\sigma(\cdot)$, and never differentiate a closed-form acquisition function, yet the effective update direction converges to the UCB gradient direction.
Moreover, the result establishes a new use case for the ubiquitous \textsc{Best-of-N} principle as an \emph{interpretable exploration mechanism}, where $N$ becomes a principled ``exploration knob'' for the exploration weight $\beta_N=\Theta(\sqrt{\ln N})$.
In short, Theorem~\ref{thm:main-ucb} is what turns \textsc{Best-of-N} gradient from an empirical recipe into a principled BO-style optimizer: it justifies viewing our language-space self-improvement loop as parallel gradient-based UCB-BO and is the reason \textsc{TextBO} can inherit BO's evaluation-efficiency benefits.


\section{\textsc{TextBO} Algorithm}
\label{sec:TBoN}

We now describe the self-improving AI algorithm proposed in this paper: \textsc{TextBO}. \textsc{TextBO} is designed to implement parallel gradient-based UCB-BO in the space of natural language prompts. Concretely, \textsc{TextBO} adopts the parallel gradient-based UCB-BO scheme that reuses the final gradient point from one iteration as the initialization for the next. Thus, each initial prompt gives rise to a \emph{trajectory}: a sequence of candidate prompts and outcomes generated by repeatedly taking gradient-ascent steps and then evaluating the resulting system outcome.

The overview of the \textsc{TextBO} algorithm is shown in Figure \ref{fig:tbon-summary}. 
For each iteration $t\in[1, T]$ and each trajectory $j\in[1, J]$, we evaluate one prompt; to determine the one prompt to evaluate, we take $G$ steps of \textsc{Best-of-N} gradients beforehand, without any evaluation. We present the pseudocode of the \textsc{TextBO} algorithm (Algorithm \ref{algo:tbon}) and provide the line-by-line description of the details of the procedure below.

\begin{figure}[ht!]
\centering
\scalebox{0.85}{%
\begin{tikzpicture}[
  font=\small,
  x=1cm,y=1cm,
  box/.style={draw, rounded corners=2pt, minimum width=2.4cm, minimum height=0.9cm, align=center},
  thinbox/.style={draw, rounded corners=2pt, minimum width=2.0cm, minimum height=0.8cm, align=center},
  cand/.style={draw, rounded corners=2pt, minimum width=2.2cm, minimum height=0.8cm, align=center},
  arrow/.style={-Latex, line width=0.7pt},
  faded/.style={opacity=0.45},
  sel/.style={line width=0.9pt, draw=black},
  call/.style={draw, dashed, rounded corners=3pt}
]

\node[box, faded] (it1) at (0, 4.2) {Iteration 1};
\node at (0, 3.25) {$\vdots$};
\node[box, faded] (itprev) at (0, 2.3) {Iteration $t\!-\!1$};
\node[box, sel, fill=white] (itcur) at (0, 1.2) {Iteration $t$};
\node[box, faded] (itnext) at (0, 0.0) {Iteration $t\!+\!1$};
\node at (0,-0.95) {$\vdots$};
\node[box, faded] (itT) at (0, -1.9) {Iteration $T$};

\node[above=2pt of it1] {\textbf{Optimization loop: $t=1\ldots T$}};

\node[fit=(it1)(itT), inner sep=6pt] (iterCol) {};
\draw[decorate, decoration={brace, amplitude=10pt, mirror}]
  ($(iterCol.north east)+(0.55,0)$) -- ($(iterCol.south east)+(0.55,0)$)
  node[midway, xshift=1.3cm] {};

\coordinate (col2) at (4.0,0);

\node[thinbox, faded] (j1)   at ($(col2)+(0, 3.2)$) {Trajectory $j=1$};
\node                 at ($(col2)+(0, 2.45)$) {$\vdots$};
\node[thinbox, faded] (jprev) at ($(col2)+(0, 1.7)$) {Trajectory $j\!-\!1$};
\node[thinbox, sel, fill=white] (jcur)  at ($(col2)+(0, 0.8)$) {Trajectory $j$};
\node[thinbox, faded] (jnext) at ($(col2)+(0,-0.1)$) {Trajectory $j\!+\!1$};
\node                 at ($(col2)+(0,-0.85)$) {$\vdots$};
\node[thinbox, faded] (jJ)    at ($(col2)+(0,-1.6)$) {Trajectory $J$};

\node[above=2pt of j1] {\textbf{Trajectories: $j=1\ldots J$}};

\node[fit=(j1)(jJ), inner sep=6pt] (trajCol) {};
\draw[decorate, decoration={brace, amplitude=10pt, mirror}]
  ($(trajCol.north east)+(0.65,0)$) -- ($(trajCol.south east)+(0.65,0)$)
  node[midway, xshift=1.5cm] {};

\coordinate (col3) at (8.5,0);

\node[thinbox, faded] (g1)   at ($(col3)+(0, 3.2)$) {Step $g\!=\!1$};
\node                 at ($(col3)+(0, 2.45)$) {$\vdots$};
\node[thinbox, faded] (gprev) at ($(col3)+(0, 1.7)$) {Step $g\!-\!1$};
\node[thinbox, sel, fill=white] (gcur)  at ($(col3)+(0, 0.8)$) {Step $g$};
\node[thinbox, faded] (gnext) at ($(col3)+(0,-0.1)$) {Step $g\!+\!1$};
\node                 at ($(col3)+(0,-0.85)$) {$\vdots$};
\node[thinbox, faded] (gG)    at ($(col3)+(0,-1.6)$) {Step $G$};

\node[above=2pt of g1] {\textbf{Gradient steps: $g=1\ldots G$}};

\node[fit=(g1)(gG), inner sep=6pt] (stepCol) {};
\draw[decorate, decoration={brace, amplitude=10pt, mirror}]
  ($(stepCol.north east)+(0.85,0)$) -- ($(stepCol.south east)+(0.85,0)$)
  node[midway, xshift=1.3cm] {};

\coordinate (col4) at (13.0,0.8);

\node[cand] (c1)   at ($(col4)+(0, 2.1)$) {Candidate $i=1$};
\node        (dotsT) at ($(col4)+(0, 1.3)$) {$\vdots$};
\node[cand] (ci)   at ($(col4)+(0, 0.5)$) {Candidate $i$};
\node        (dotsB) at ($(col4)+(0, -0.3)$) {$\vdots$};
\node[cand] (cN)   at ($(col4)+(0, -1.1)$) {Candidate $i=N$};

\node[above=10pt of c1] {\textbf{A \textsc{Best-of-N} gradient}};

\draw[call] ($(c1.north west)+(-0.25,0.25)$) rectangle ($(cN.south east)+(0.25,-0.25)$);
\node[align=center] at ($(c1)!0.5!(cN)+(-2.1,0)$) {};

\node[align=left, anchor=west] at ($(cN.south west)+(-0.3,-1.1)$)
{\hspace{-0.75em}Sample $N$ TextGrad edits\\
Apply edits to $\{\pi_{t,g}^{j, (i)})\}$\\
\hspace{-2em} Critic finds $\pi_{t,g}^{j, (i)})$ with best $\Phi(\pi_{t,g}^{j, (i)})$};

\node[fit=(g1)(gG)(c1)(cN), inner sep=6pt] (noevalRegion) {};
\draw[decorate, decoration={brace, amplitude=7pt, mirror}, draw=blue]
  ($(noevalRegion.south west)+(0,-0.4)$) -- ($(noevalRegion.south east)+(0,-0.4)$)
  node[midway, yshift=-15pt, text=blue]{Take $G$ \textsc{Best-of-N} gradients (No evaluation)};

\end{tikzpicture}%
}
\caption{A pictorial overview of \textsc{TextBO} (Algorithm~\ref{algo:tbon}) with $T$ iterations, $J$ trajectories, and $G$ \textsc{Best-of-N} gradient steps per iteration and trajectory.}
\label{fig:tbon-summary}
\end{figure}

\begin{figure*}[ht!]
\centering
\adjustbox{valign=t, scale=0.9}{%
  \begin{minipage}{0.9\linewidth}
 
\begin{algorithm}[H]
\SetAlgoLined
\KwIn{System $\Phi$; $\{\pi_{0}^{j}\}_{j=1}^{J}$; critic $M_{\text{critic}}$; candidates per step $N$; gradient steps $G$; iterations $T$; $J$ trajectories}
\KwOut{Optimized triples $\{(\pi^{j}_T,c^{j}_T,s^{j}_T)\}_{j=1}^{J}$}
\For{$j = 1$ \KwTo $J$}{
  $c_0^{j} \leftarrow \Phi(\pi_0^{j})$\;
  $s_0^{j} \leftarrow \text{Eval}(c_0^{j}, \mathcal{D}_{\mathcal{X}}(\pi_0))$\;
  $H_0 \leftarrow \{(\pi_0^{j},c_0^{j},s_0^{j})\}$, $R \leftarrow \emptyset$\; 
}

\For{$t = 1$ \KwTo $T$}{
  \For{$j = 1$ \KwTo $J$}{
    $\pi^j_{t, 0} \leftarrow \pi^j_{t-1}$\;
    \For{$g = 1$ \KwTo $G$}{
      \For{$i = 1$ \KwTo $N$}{
        $\delta_{t,g}^{j, (i)} \leftarrow \nabla_{\text{text}}(\pi_{t,g-1}^{j}; R, M_{\text{critic}})$ \;
        $\pi_{t,g}^{j, (i)} \leftarrow \operatorname{Apply}(\pi_{t,g-1}^{j}, \delta_{t,g}^{j, (i)})$ \;
        $c_{t,g}^{j, (i)} \leftarrow \Phi(\pi_{t,g}^{j, (i)})$ \;
      }
      $(\pi_{t,g}^{j},c_{t,g}^{j}) \leftarrow \textsc{Best-of-N}(\{(\pi_{t,g}^{j, (i)},c_{t,g}^{j, (i)})\}, R_{t-1}, M_{\text{critic}})$\;
    }
    $c_t^j\leftarrow c^j_{t,G}$, $\pi_t^j\leftarrow \pi_{t,G}^j$\;
    $s_{t}^{j} \leftarrow \text{Eval}(c_{t}^{j}, \mathcal{D}_{\mathcal{X}}(\pi_t))$\;
    $H_t \leftarrow H_{t-1} \cup \{(\pi_{t}^{j},c_{t}^{j},s_{t}^{j})\}$\;
    \If{$s_{t}^{j} < s_{t-1}^{j}$}{
      $\pi_{t}^{j} \leftarrow \pi_{t-1}^{j}$, $c_{t}^{j} \leftarrow c_{t-1}^{j}$, $s_{t}^{j} \leftarrow s_{t-1}^{j}$\;
    }
  }
  $R_t \leftarrow \textsc{MetaReflect}(H_t)$\;
}
\KwRet{$\{c_t^{j_t^*}\}_{t=1}^T$, where $j_t^*:=\operatorname{argmax}_j s_t^j$}
\caption{\textsc{TextBO}}
\label{algo:tbon}
\end{algorithm}

  \end{minipage}
}
\end{figure*}

\paragraph{Initialization ($t=0$).}
At iteration $t=0$, we first evaluate all initial prompts. For each trajectory $j=1,\dots,J$, we first generate the initial system outcome $ \Phi(\pi_0^j)$. Next, we evaluate it on the corresponding input distribution $\mathcal{D}_{\mathcal{X}}(\pi_0^j)$ to obtain a scalar score $s_0^j$. The shared history is then initialized as
$H_0 \;=\;\bigcup_{j=1}^J \{(\pi_0^j, \Phi(\pi_0^j), s_0^j)\}$, and and the shared reflection is initialized as $R_0 = \emptyset$.

\paragraph{Per-iteration update ($t \ge 1$).}
At each optimization iteration $t\in\{1,\dots,T\}$, the algorithm updates all trajectories in parallel. For each trajectory $j\in\{1,\dots,J\}$, we set the starting prompt for this iteration to be the final prompt from the previous iteration, i.e., $\pi_{t,0}^j \leftarrow \pi_{t-1}^j$. We then perform the following three steps.

\begin{itemize}[leftmargin=0.5cm,itemsep=2pt]

\item[1.] \textbf{\textsc{Best-of-N} gradients phase.} (Detailed in $\S$\ref{sec:BoNgrad}.)  
For each trajectory $j$, we take $G$ successive \textsc{Best-of-N} textual-gradient steps, which do not involve any evaluation. That is, for $j=1,\ldots J$, for $g = 1,\dots,G$:

\begin{enumerate}[leftmargin=0.7cm,itemsep=2pt]
  \item[1)] \emph{Textual gradient generation.}  
  From the current prompt $\pi_{t,g-1}^j$, query the textual-gradient operator
  \[
      \delta_{t,g}^{j,(i)} \sim \nabla_{\text{text}}(\pi_{t,g-1}^j; R_{t-1}, M_{\text{critic}})
  \]
  independently $N$ times to obtain textual gradients $\{\delta_{t,g}^{j,(i)}\}_{i=1}^N$.

  \item[2)] \emph{Candidate prompt generation.}  
  Apply each textual gradient to the current prompt $\pi_{t,g-1}^j$ to obtain candidate prompts $\{\pi_{t,g}^{j,(i)}\}_{i=1}^N$:
  \[
      \pi_{t,g}^{j,(i)} \leftarrow \operatorname{Apply}(\pi_{t,g-1}^j, \delta_{t,g}^{j,(i)}),
      \qquad i=1,\dots,N.
  \]

  \item[3)] \emph{Candidate system outcome generation.}  
  For each candidate prompt $\pi_{t,g}^{j,(i)}$, generate the corresponding system outcome:
  \[
      \Phi(\pi_{t,g}^{j,(i)}), \qquad i=1,\dots,N.
  \]

  \item[4)] \emph{\textsc{Best-of-N} selection.}  
  Using the critic model and the shared reflection $R_{t-1}$, select the candidate that is predicted to be the most promising by the critic model among
  $\{(\pi_{t,g}^{j,(i)},  \Phi(\pi_{t,g}^{j,(i)})\}_{i=1}^N$, and set the winner as
  $(\pi_{t,g}^j,  \Phi(\pi_{t,g}^{j}))$.
\end{enumerate}

After $G$ gradient steps, trajectory $j$ has an updated prompt-outcome pair $(\pi_{t,G}^j,  \Phi(\pi_{t,G}^{j}))$ for this iteration.

Geometrically, as illustrated in Figure~\ref{fig:TBoN_real}, at iteration $t$ the prompts
$\{\pi_{t-1}^j\}_{j=1}^J$ (equivalently, the initial points $\{\pi_{t,0}^j\}_{j=1}^J$) act as starting locations in the implicit UCB landscape over prompt space. For each trajectory $j$, the $G$ \textsc{Best-of-N} gradient steps climb this landscape, producing the updated prompts $\{\pi_{t,G}^j\}_{j=1}^J$ that approximate local maxima of the UCB acquisition function.\footnote{For $\{\pi_{t,G}^j\}_{j=1}^J$ to be considered local optima in the usual gradient-based BO sense, $G$ should be large enough.} These local optima are then passed to the evaluation phase (Phase~2).


\begin{figure}[ht!]
    \centering
    \includegraphics[width=0.63\linewidth]{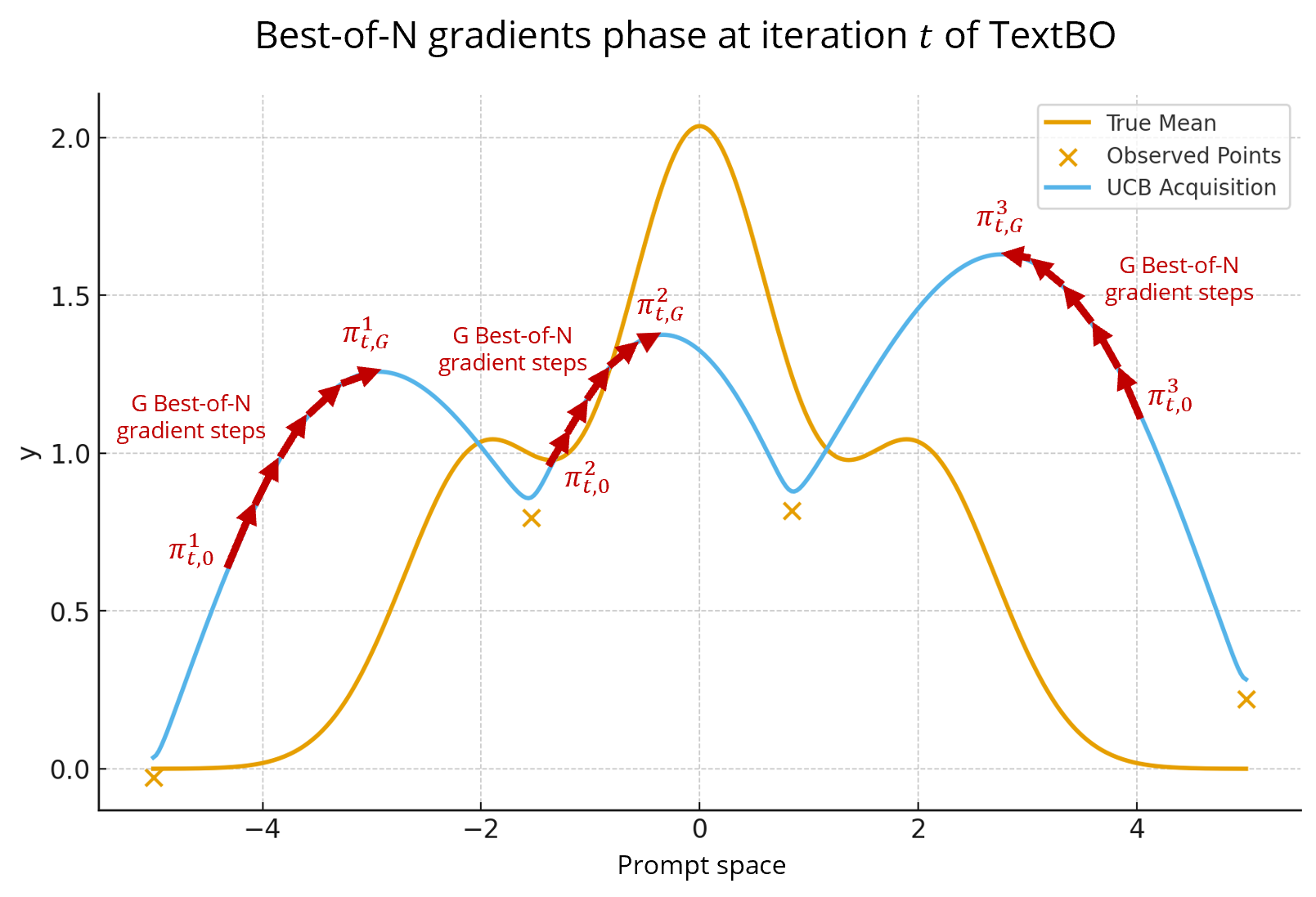}
    \caption{Illustration of how \textsc{TextBO} (Algorithm \ref{algo:tbon}) implements \textsc{Best-of-N} gradients phase at iteration $t$. The prompts identified as local optima in this phase are evaluated in the later evaluation phase. }
    \label{fig:TBoN_real}
\end{figure}

\item[2.] \textbf{Evaluation phase.}  
For each trajectory $j$:

\begin{enumerate}[leftmargin=0.7cm,itemsep=2pt]
    \item[1)] Set the trajectory’s candidate for evaluation to the final output of the \textsc{Best-of-N} gradient phase:
    \[
        \pi_t^j := \pi_{t,G}^j
    \]
    \item[2)] Evaluate $c_t^j$ to obtain the scalar score
    \begin{align}
        s_t^j
        &:= \mathrm{Eval}(\Phi(\pi_t^j), \mathcal{D}_{\mathcal{X}}(\pi_t^j)) = \frac{1}{L} \sum_{l=1}^L r\!\left( y\!\left(\Phi(\pi_t^j), x_t^{j,(l)}\right)\right), \;\;\textrm{where} \;\;  x_t^{j,(l)} \sim  \mathcal{D}_{\mathcal{X}}(\pi_t^j). 
        \label{eq:evaluation} 
    \end{align}
    Note that this is the step where the evaluation happens; and in each iteration, we conduct $J$ evaluations -- one for each trajectory.
    \item[3)] Append the new triple $(\pi_t^j, \Phi(\pi_t^j), s_t^j)$ to the shared history:
    \[
        H_t \leftarrow H_{t-1} \cup \{(\pi_t^j, \Phi(\pi_t^j), s_t^j)\}.
    \]
    \item[4)] \emph{Acceptance rule.} If the new score does not improve, revert this trajectory to its previous state:
    \[
        \text{if } s_t^j \le s_{t-1}^j \text{ then }
        (\pi_t^j, \Phi(\pi_t^j), s_t^j) \leftarrow (\pi_{t-1}^j, \Phi(\pi_{t-1}^j), s_{t-1}^j).
    \]
\end{enumerate}

This acceptance rule follows \citet{textgrad}: it enforces monotone non-decreasing scores along each trajectory by rolling back any non-improving update.

\item[3.] \textbf{Shared meta-reflection update.}  

At the end of iteration $t$, after all $J$ trajectories have been evaluated and the history updated, we recompute the global reflection:
\[
    R_t \leftarrow \textsc{MetaReflect}(H_t; M_{\text{critic}}).
\]
The operator $\textsc{MetaReflect}(H_t; M_{\text{critic}})$ compresses the full history $H_t$---prompts, system outcomes, and scores---into a concise natural-language reflection $R_t$. The reflection consists of a small set of actionable rules summarizing cross-trajectory patterns. In the ad-generation setting, such rules might take the form “emphasize social context,” “avoid cluttered backgrounds,” or “prefer warm lighting.” This reflection serves as a global memory: in subsequent iterations, the critic LLM conditions on $R_t$ when producing textual gradients and performing \textsc{Best-of-N} judgments.

There are many ways to construct such reflections. In our experiments, following \citet{li2025test}, we instruct $M_{\text{critic}}$ to compare the five lowest- and five highest-performing outcomes from previous iterations (or all available outcomes if fewer than ten exist)\footnote{Limiting the number of outcomes in the critic model's context window helps mitigate context rotting \citep{hong2025context}.}, identify recurring visual and textual patterns, and distill these patterns into a small number of generalizable rules.
\end{itemize}

\noindent After $T$ optimization iterations, we report as the algorithm’s output the sequence of best-performing outcomes
\[
\{\Phi(\pi_t^{j_{t}^*})\}_{t=1}^T,
\qquad j_t^\ast := \operatorname{argmax}_{j} s_t^j,
\]
i.e., the best trajectory at each iteration, according to the evaluation scores.


Note that, in each iteration, \textsc{TextBO} generates $N \times G$ prompt/outcome candidates for each trajectory before conducting one evaluation. This deliberate trade-off is the key to its evaluation efficiency: the algorithm uses inexpensive generation and critic-based comparisons to identify promising directions, so that each actual evaluation is performed on a small set of carefully selected, high-potential candidates. 


\section{Case Study: Digital Ad Optimization}
\label{sec:case}

We now instantiate our framework in a digital advertising setting, showing how
an advertiser (or ad platform acting on behalf of an advertiser) can use \textsc{TextBO} to automate the
 ad generation–analysis–evaluation loop. Here, the goal is to refine the ad-generation prompts
so that the resulting creatives are better aligned with a target population's preference
distribution, under a limited budget of costly ad evaluations. We specialize the general setup from $\S$\ref{sec:prelim} to ad optimization as follows:
\squishlist
    \item $\Phi$ is an ad-generation AI system that can generate ads given a prompt $\pi$.\footnote{In practice, this image-generation model can be fine-tuned for a particular brand and/or customized to the platform. Our approach takes any pre-training as given.} 

    \item For a prompt $\pi$, $\Phi(\pi)$ is the resulting ad creative deployed on the platform.
    
    \item $\mathcal{X}$ is the population of platform users, and $\mathcal{D}_{\mathcal{X}}(\pi)$ is the (unknown) distribution of users who are exposed to $\Phi(\pi)$ when this ad is run.

    \item $\mathcal{Y}$ is the space of observed ad responses (clicks, conversions, spend). For each user $x\in\mathcal{X}$, $y(\Phi(\pi),x)$ is the random ad response when $x$ is shown ad $\Phi(\pi)$.
    \item $r:\mathcal{Y}\to[0,1]$ is a scoring rule that maps responses to a normalized effectiveness score (e.g., a weighted function of click, conversion, and spend).
\squishend

The campaign objective is the same as the general objective introduced in $\S$\ref{sec:prelim}: $ J(\pi)
= \mathbb{E}\big[\,r\big(y(\Phi(\pi), x)\big)\big]$, where the expectation taken over $x \sim \mathcal{D}_{\mathcal{X}}(\pi)$ and the
stochasticity in $y(\Phi(\pi), x)$.

\subsection{Ad Campaign Scenarios}
We consider eight synthetic ad campaign scenarios that cover a diverse range of products
across distinct categories, each defined by a creative brief that outlines the strategic
and creative direction (see Web Appendix~\ref{ssec:scenarios} for the full creative briefs):
\begin{itemize}[leftmargin=0.5cm,itemsep=0pt]
    \item Scenario 1: ``GreenBite,'' a new plant-based burger patty.
    \item Scenario 2: ``AuraSonics X1,'' high-end, noise-canceling wireless earbuds.
    \item Scenario 3: ``Odyssey E-SUV,'' a new all-electric family SUV.
    \item Scenario 4: ``Oasis Eco-Lodge,'' a secluded, luxury resort with beautiful natural surroundings.
    \item Scenario 5: ``Momentum,'' a mobile-first banking app for freelancers and the gig economy.
    \item Scenario 6: ``MindGarden,'' a subscription-based meditation and mindfulness app.
    \item Scenario 7: ``Aeterno,'' a classic, automatic Swiss-made wristwatch with a heritage design.
    \item Scenario 8: ``SyncFlow,'' a project management and collaboration software platform for remote teams.
\end{itemize}

\subsection{Implementation Details}\label{ssec:implementDetail}
There are five main components of the algorithm: (1) initial prompt ($\pi_0$), and (2) Ad-generation module ($\Phi$), (3) Critic LLM and its four operators defined in $\S$\ref{ssec:critic-llm}, (4) the population distribution  $\mathcal{D}_{\mathcal{X}}(\pi)$ and the evaluation operator $\mathrm{Eval}()$, as defined in Equation \eqref{eq:evaluation}. We now describe how each of these components is implemented in our experiments. 

\paragraph{Initial prompts ($\pi_0^j$).}
For each scenario, we first created 64 high-quality prompts from its creative brief using a meta-prompt. In Web Appendix \ref{appssec:initial_prompts}, see Figure \ref{fig:init_gen_prompt} for the details of the meta-prompt and Figure~\ref{fig:best_ad_prompt} for an example of two prompts in scenario 1. Among those 64 ads, we chose five as initial prompts; we will describe the details of the choice rule in $\S$\ref{ssec:application_tbonbo_ad}.

\paragraph{Ad-generation module ($\Phi$).} 
We utilize a state-of-the-art image generation model, Google Imagen~4,\footnote{See
Vertex AI's description
(\url{https://console.cloud.google.com/vertex-ai/publishers/google/model-garden/imagen-4.0-generate-preview-06-06})
for details.} Imagen 4 delivers high-fidelity, photorealistic text-to-image generation at up to 2K resolution, and excels in rendering intricate details such as realistic textures, lighting, and camera effects. Its features include typography accuracy for legible in-image text and precise adherence to complex prompts.

\paragraph{Critic LLM and its Operators.} 

Unless otherwise noted, we use Gemini~2.5~Flash \footnote{See
Vertex AI's description
(\url{https://docs.cloud.google.com/vertex-ai/generative-ai/docs/models/gemini/2-5-flash})
for details.} as the multimodal critic model
    $M_{\text{critic}}$ and implement the four operators discussed in $\S$\ref{ssec:critic-llm} using the meta-prompts described in Web Appendix \ref{appssec:critic_llm_prompts}: (1) \textsc{PairwiseJudge} operation; see meta-prompt in Figure \ref{fig:tornament_prompt}, (2)  $\textsc{MetaReflect}$ operation; see meta-prompt in Figure \ref{fig:reflection_prompt}, (3) Textual gradient operation $\nabla_{\text{text}}$; see meta-prompt in Figure \ref{fig:prompt_improvement_prompts}, and (4) Rewrite operation \textsc{Apply}; see meta-prompt in Figure \ref{fig:apply_prompt}.

\paragraph{Evaluation module using persona data and LLM-based simulation.}We now describe how we implement the evaluation operator $\mathrm{Eval}$ in this case study. To evaluate how well an ad performs on the target population and to update prompts, we need (i) a target population ($\mathcal{D}_{\mathcal{X}}(\pi)$.) and (ii) a way to measure that population's preferences for a given ad ($\mathrm{Eval}(\cdot)$). In principle, there are three ways define the evaluation operator -- (1) field experiments/deployment in a real ad platform, (2) consumer surveys, and (3) simulated responses from a pre-defined target population. The former two options (field deployment and consumer surveys) are costly in time and resources, and can require large samples to identify effective differences across ads given low CTRs in digital ad settings; see \citep{lewis2015unfavorable, johnson2023inferno}. Thus, using these approaches for algorithm evaluation is not ideal. 

Therefore, in our experiments, we use the Twin-2k-500 persona distribution of \citet{toubia2025twin} as the target population and LLM-generated preferences over this distribution as the evaluation module. The Twin-2K-500 dataset consists of ``digital twins" or ``digital persona" of $2,058$ representative people from the United States. The dataset was assembled by giving a comprehensive multi-wave survey (consisting of a total of 500 questions) to each person in the sample, spanning demographics, personality, cognitive ability, economic preferences, classic
heuristics-and-biases tasks, and a pricing study. Thus, the resulting personas are quite rich and detailed. This persona distribution forms our target population $\mathcal{D}_{\mathcal{X}}(\pi)$. Further, consistent with a randomized experiment design, we assume that the ad does not influence the personas who are shown the ad, i.e., $\mathcal{D}_{\mathcal{X}}(\pi)$ is the same for all $\pi$s. For each persona, we pass their survey answers to a multi-modal LLM (Gemini 2.5 Flash) as context, and then ask the LLM to assess the effectiveness of a given ad for that persona. Concretely, for a given persona and ad, we ask the LLM (see Figure \ref{fig:persona_prompt} for the detailed prompt in Web Appendix \ref{appssec:persona_prompt}) to rate the ad's ``effectiveness'' on a scale of
$1$--$5$, where $1$ is least effective and $5$ is most effective. Because the LLM returns
token-level log probabilities over the discrete scores,\footnote{See Vertex AI's logprobs description (\url{https://cloud.google.com/vertex-ai/generative-ai/docs/model-reference/inference\#logprobs}) for details.} we convert these into a mean score and treat this mean (a real number such as $3.41$) as the ad's evaluation score for that persona. Averaging over personas then yields the final evaluation score used by
$\mathrm{Eval}()$.

We split the Twin-2k-500 personas into 80\% (1{,}647 personas) for training and 20\% (411 personas) for testing. During algorithm training, each evaluation of an ad $\Phi(\pi)$ proceeds as follows: we randomly sample 200 personas from the training set and simulate ad effectiveness for each persona via the LLM. When we test a final ad (e.g., after running \textsc{TextBO}), we simulate its effectiveness for all personas in the test set.

We note that important advantage of our approach: using this persona distribution with LLM-generated preferences enables fast and controlled comparisons of different algorithms without large-scale field experiments or expensive surveys.\footnote{Naive field deployment cannot be used to test the relative performance of algorithms, since an algorithm’s measured advantage would then mix its own effect with distributional effects induced by the ad platform's targeting systems.} Indeed, our approach can serve as a template for algorithm comparisons in many other settings where researchers require evaluations that reflect the variance/heterogeneity of human preferences. Further, we note that while LLM-based preference simulation may not reflect true human preferences \citep{li2025llm, peng2025mega}, it still induces a well-defined preference distribution on which we can benchmark optimization methods and algorithms. Specifically, \citet{kang2025llmpersonassubstitutefield} proves that swapping humans for personas is equivalent to changing the evaluation population (e.g., from the New York population to the Jakarta population), under two benchmark hygiene conditions: (i) methods observe only the aggregate outcome
(called \textit{aggregate-only observation} condition) and (ii) evaluation
depends only on the submitted artifact and not on
the algorithm’s identity or provenance (called \textit{algorithm-blind evaluation} condition).
Furthermore, they show that simply increasing the size of the persona dataset is sufficient to guarantee that persona simulation becomes as useful a benchmark as field experimentation. Nevertheless, we also provide evaluations of our approach that do not leverage persona datasets; see $\S$\ref{ssec:ablation} for details.

\subsection{Application of \textsc{TextBO}}
\label{ssec:application_tbonbo_ad}

We now describe how the \textsc{TextBO} algorithm is instantiated for the ad-optimization problem. Here, we choose the hyperparameters as: iterations $T=10$; trajectories $J=5$; gradient steps $G=5$; and candidates per gradient step $N=5$.

\paragraph{Initialization.}  

As described in $\S$\ref{ssec:implementDetail}, for each scenario, we first generate 64 high-quality prompts from its creative brief using a meta-prompt. Among the 64 ads for each scenario, we identify the best initial ad, \textsc{Best-of-64} (which will be used as one of the baselines but is not directly used in our algorithm; see $\S$\ref{ssec:baselines} for details), and the worst five, \textsc{Worst5-of-64}, by running a best-arm
identification bandit algorithm \citep{russo2016simple}. Specifically, we alternate
between a best-arm identification objective and a worst 5-arm identification objective
over 5{,}000 sequential random samples from the training persona set. These
\textsc{Worst5-of-64} ads serve as starting points $\{\pi_0^j\}_{j=1}^{5}$ for
\textsc{TextBO} procedure. This starting point allows us to attribute observed gains to the algorithms rather than to favorable starting prompts.\footnote{To maintain a rigorous baseline, we make these 64 prompts detailed enough to yield strong creatives. This ensures that subsequent improvements are not merely due to trivial gains from starting with an under-specified
prompts; instead, improvements reflect better alignment with the underlying target audience distribution.} Let $\{s_0^j\}_{j=1}^J$ denote the evaluation scores for these five ads. Then, the tuples $\{(\pi_0^j,\Phi(\pi_0^j),s_0^j)\}_{j=1}^J$ form the initial history $H_0$ for the algorithm.

\paragraph{\textsc{Best-of-N} textual-gradient steps.}  
Within each optimization iteration $t$, \textsc{TextBO} refines each trajectory’s prompt
$\pi_{t-1}^j$ through five steps of \textsc{Best-of-N} textual-gradients before evaluation (see
Figure \ref{fig:tbon-summary}). We follow the generic procedure from
$\S$\ref{sec:TBoN}, but here specialize the critic’s context and edits to the ad
domain.

For a given trajectory $j$ and inner step $g$, the critic model $M_{\text{critic}}$ (Gemini~2.5 Flash) receives as context: (1) the current prompt $\pi_{t,g-1}^j$ and (2) the 
current meta-reflection $R_{t-1}$. Conditioned on this context, using operation $\nabla_{\text {text }}$, it independently samples $N$ textual gradients
$\{\delta_{t,g}^{j,(i)}\}_{i=1}^N$, applies them using operation \textsc{Apply} to obtain candidate prompts
$\{\pi_{t,g}^{j,(i)}\}_{i=1}^N$. These candidate prompts are then fed to the ad generation module $\Phi$ to generate the corresponding 5 ads
$\{c_{t,g}^{j,(i)}\}_{i=1}^N = \{\Phi(\pi_{t,g}^{j,(i)})\}_{i=1}^N$. For example, suppose that the current prompt $\pi_{t,g-1}^j$ is ``\texttt{A simple, photorealistic image of a plant-based burger patty.}'', if $R_{t-1}$ emphasizes that ``social
occasions and visible grill marks tend to perform better than plain pack shots,'' then we may query the operation $\nabla_{\text {text }}$ (see Figure \ref{fig:prompt_improvement_prompts} in Web Appendix \ref{appssec:critic_llm_prompts} for the meta-prompt) to propose textual-gradient edits such as:
\squishlist
    \item \emph{Emphasize social context and enjoyment:}  
    ``Show the burger being enjoyed at an outdoor barbecue with friends, not alone on a plate.''

    \item \emph{Highlight sensory appeal and indulgence:}  
    ``Make the patty look extra juicy with clear grill marks, a toasted bun, and melty toppings.''

    \item \emph{Make sustainability salient but secondary:}  
    ``Subtly include eco-friendly cues (like a small `plant-based' tag or greenery in the background) without overpowering the food.'' 
\squishend
We then query these edits for the operation \textsc{Apply} operation to an initial prompt (see Figure \ref{fig:apply_prompt} in Web Appendix \ref{appssec:critic_llm_prompts} for the meta-prompt) may yield candidates such as:
\begin{quote}
\small
\texttt{``A photorealistic image of a juicy plant-based burger with grill marks, served at a vibrant summer barbecue with friends, everyone smiling and reaching for food.''} 
\end{quote}
\begin{quote}
\small
\texttt{``A close-up, photorealistic shot of a plant-based burger with deep grill marks, a toasted brioche bun, melty vegan cheese, fresh toppings, and steam rising to suggest warmth and juiciness.''} 
\end{quote}
\begin{quote}
\small
\texttt{``A photorealistic image of a sizzling plant-based burger with rich grill marks, served on a rustic wooden table with subtle greenery in the background and a small plant-based tag, conveying indulgence without sacrificing sustainability.''}
\end{quote}

To select which candidate to keep for the next gradient step, \textsc{TextBO} applies the \textsc{Best-of-N} rule from $\S$\ref{sec:BoNgrad} to the ads
$\{\Phi(\pi_{t,g}^{j,(i)})\}_{i=1}^N$. Specifically, following \citet{liu2025pairjudgermperformbestofn}, we
implement \textsc{Best-of-N} via a pairwise tournament: in each match, the critic compares two
creatives side by side and predicts which will perform better using operator \textsc{PairwiseJudge}, given the campaign goals
and the current reflection $R_{t-1}$ (see Figure~\ref{fig:tornament_prompt} in Web
Appendix~\ref{appssec:critic_llm_prompts} for the meta-prompt). The tournament winner prompt and its corresponding ad, $\left(\pi_{t, g}^j, \Phi\left(\pi_{t, g}^j\right)\right)$, are carried forward to the next gradient step. Repeating this for $g=1,\dots,G$ yields the final refined
prompt $\pi_t^j$ and ad $\Phi(\pi_t^j)$ for trajectory $j$ at iteration $t$.

\paragraph{Evaluation as the costly step.}   
After the $G$ gradient steps, \textsc{TextBO} performs a single expensive evaluation per
trajectory (each of which consists of assessing the ad for 200 randomly sampled personas from the training persona set) by deploying  $\Phi(\pi_t^j)$ to get the evaluation result $s_t^j$ using the evaluation
operator $\mathrm{Eval}$ from Equation \eqref{eq:evaluation} (instantiated using the evaluation module discussed in 
$\S$\ref{ssec:implementDetail}). The new triple $(\pi_t^j,\Phi(\pi_t^j),s_t^j)$ is appended to $H_t$, and the usual
acceptance rule is applied: if $s_t^j \le s_{t-1}^j$, the trajectory reverts to
$(\pi_{t-1}^j,\Phi(\pi_{t-1}^j),s_{t-1}^j)$, ensuring non-decreasing scores along each
trajectory.

\paragraph{Meta-reflection shared across trajectories.}  
After all $J$ trajectories have been evaluated at iteration $t$, the algorithm updates the
global meta-reflection $R_t = \textsc{MetaReflect}(H_t;M_{\text{critic}})$. Concretely, $M_{\text{critic}}$ is prompted to review a subset of
the best- and worst-performing ads and prompts in $H_t$ and to summarize the key patterns
that distinguish successful creatives from unsuccessful ones (see
Figure~\ref{fig:reflection_prompt} in Web Appendix~\ref{appssec:critic_llm_prompts} for the meta-prompt). For example, it may
infer that scenes with social eating and visible grill marks outperform isolated pack
shots, or that emphasizing ``satisfying'' and ``juicy'' tends to improve effectiveness.
This reflection is then injected into the context of subsequent textual-gradient
generation and \textsc{Best-of-N} comparisons, allowing trajectories to share accumulated
knowledge even as they explore different regions of the prompt space.

\paragraph{Reported outcome.}  
After $T=10$ iterations, the system reports the sequence of best-performing ads
$\{\Phi\left(\pi_t^{j_t^*}\right)\}_{t=1}^T$, where $j_t^\ast = \arg\max_j s_t^j$ is the trajectory that
achieved the highest score at iteration $t$.

\section{Experiments for the Digital Ad optimization}
\label{sec:experiments}

In this section, we discuss the numerical performance of \textsc{TextBO} for the ad-optimization case (discussed in \S \ref{sec:case}).  In $\S$\ref{ssec:baselines}, we first discuss the baseline algorithms we compare against, and then in $\S$\ref{ssec:exp_design}, we describe the experiment design. We present the main results in $\S$\ref{ssec:results_analysis} and then finally end with an ablation study in $\S$\ref{ssec:ablation}.

\subsection{Baselines}
\label{ssec:baselines}

We consider two state-of-the-art algorithms in the AI literature as baselines -- (1) \textsc{Best-of-N} \citep{snell2024scaling} and (2) \textsc{GEPA} \citep{agrawal2025gepa}. 

\textsc{Best-of-N} is one of the most popular and well-performing test-time alignment methods that has been reported to outperform reinforcement learning fine-tuning methods under large enough $N$ \citep{gao2023scaling, mudgal2023controlled, eisenstein2023helping, beirami2024theoretical}. For any given scenario, this algorithm samples $N$ candidate outputs from a fixed generator and returns the single candidate with the highest predicted score under a judge/reward model, trading extra inference-time compute for higher expected quality \citep{snell2024scaling, beirami2024theoretical}. In our setting, recall that we generate a set of 64 initial ads (see $\S$\ref{ssec:implementDetail} on the details of the initial ad set generation). Then, we use the \textit{training} dataset of personas and the $\textrm{Eval()}$ operator in conjunction witha best-arm identification algorithm to identify the \textsc{Best-of-64} ad (as described in  $\S$\ref{ssec:application_tbonbo_ad}).  Finally, we evaluate \textsc{Best-of-64}'s performance on the \textit{testing} persona dataset; this score serves as our \textsc{Best-of-64} baseline for that scenario.

\textsc{GEPA} is a reflective and evolutionary prompt optimization algorithm that has demonstrated superior performance compared to state-of-the-art reinforcement fine-tuning techniques such as GRPO \citep{liu2024deepseek}. In our implementation of \textsc{GEPA}, we use the same set of initial prompts, critic models, ad generation modules, and evaluation operators we use for \textsc{TextBO} (see $\S$\ref{ssec:implementDetail} for details). Further, for each of the eight scenarios, we start \textsc{GEPA} with the same Worst5-of-64 prompts as \text{TextBO}, i.e., the worst five initial prompts we identified in $\S$\ref{ssec:implementDetail}. We also run \textsc{GEPA} for the same number of iterations ($T=10$). Here, keeping the components of the algorithms and the starting points the same across both algorithms ensures a fair comparison.  
Notably, when we adapt \textsc{GEPA} to the digital ad optimization problem setup (where we cannot store and retrieve problem instances (e.g., personas) freely anytime we want as in agentic AI benchmarks of \cite{agrawal2025gepa}), it effectively reduces to \textsc{TextBO} with $G=1$ and $N=1$\footnote{In $\S$\ref{sec:agenticExp}, we adapt the principles of \textsc{Best-of-N} gradient multiple gradient steps of \textsc{TextB} for \textsc{GEPA} and test it for agentic AI benchmark experiments \citep{agrawal2025gepa} for which \textsc{GEPA} was tested.}; see Web Appendix \ref{appssec:GEPA_implementation} for the implementation details of the adapted \textsc{GEPA}.

\subsection{Experiment Design}
\label{ssec:exp_design} 

Our experiment is designed to test whether we can start from ex-post least-aligned ads (\textsc{Worst5-of-64}) and apply \textsc{TextBO} to outperform the ex-post best-aligned ad (\textsc{Best-of-64}). Here, “alignment” is measured by the evaluation score induced by our persona-based evaluation environment (see $\S$\ref{ssec:implementDetail}). This design choice matters because the \textsc{Best-of-64} baseline is already a strong, ex-post selection benchmark: it is the best-performing ad within a reasonably rich, human-like initial pool of 64 creatives generated from the same creative brief \citep{snell2024scaling, beirami2024theoretical}. 
By initializing optimization from the opposite end of the same pool (\textsc{Worst5-of-64}), we attribute improvements to the algorithm’s performance rather than to a favorable starting prompt. Relatedly, we are also interested in comparing the performance of \textsc{TextBO} with \textsc{GEPA}. To do so, we also need to ensure that the comparison between \textsc{GEPA} and \textsc{TextBO} is fair and both use similar starting points and algorithm components. Therefore, for each scenario, we proceed as follows:
\squishlist
    \item \textit{Construct a common initial candidate pool.} Using the scenario’s creative brief, we generate 64 detailed prompts and their corresponding ad images. All algorithms start with this initial pool of prompts and ads. 
    \item \textit{Define ex-post baselines and initializations within the pool.} Using the training persona set and the fixed evaluation operator $\mathrm{Eval}(\cdot)$, we identify (i) the single best ad in the pool (\textsc{Best-of-64}) and (ii) the five worst ads (\textsc{Worst5-of-64}) via best-arm identification (see $\S$\ref{ssec:application_tbonbo_ad}). We use \textsc{Worst5-of-64} as the starting set $\{\pi_0^j\}_{j=1}^5$ for both \textsc{TextBO} and  \textsc{GEPA}, while \textsc{Best-of-64} serves as the ex-post “best-aligned” benchmark.
    \item \textit{Hold the algorithm components and evaluation environment fixed across methods.} All methods use the same (i) scenarios and initial pool, (ii) ad generator $\Phi$, (iii) critic model $M_{\text{critic}}$, and (iv) persona-based evaluation operator. This ensures that differences in outcomes are attributable to the optimization procedure, rather than the evaluation environment. 
    \item \textit{Evaluate under a common budget and report out-of-sample performance.} During optimization steps in \textsc{GEPA} and \textsc{TextBO}, each costly evaluation uses the same protocol (randomly sampled 200 personas from the training set; see $\S$\ref{ssec:implementDetail}). For reporting, we evaluate the resulting ads on the held-out 411 test persona set and compare progress trajectories and final performance across methods.
\squishend

\subsection{Results and Analysis}
\label{ssec:results_analysis}
We now present the main results from the experiments. Note that while the algorithm is designed to minimize regret, we cannot directly plot regret here since we don't have true $\pi^{\star}$ (or the the prompt that maximizes the score), a priori, for any of the scenarios. Therefore, throughout, we report experiment results in terms of empirical performance scores.



\begin{figure}[!ht]
    \centering
    \includegraphics[width=0.70\linewidth]{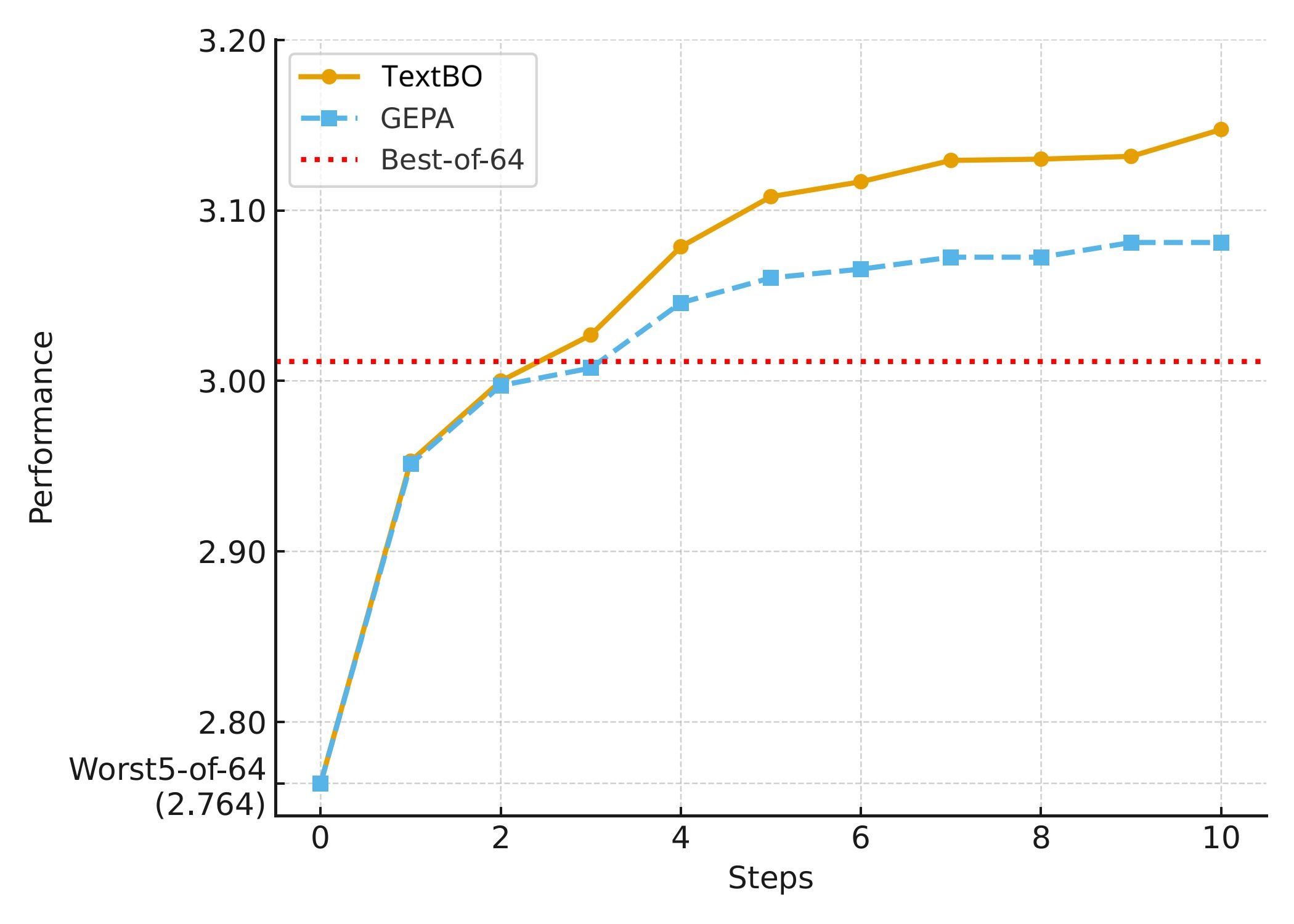}
    \caption{Progress comparison of \textsc{TextBO} and \textsc{GEPA} with \textsc{Best-of-64} baseline. \textsc{TextBO} implements parallel \textsc{TextBO} with $J=5$. The performance score (the $y$-axis) is the average of eight scenarios' mean evaluation score for the testing persona set. }
    \label{fig:performance}
\end{figure}


\paragraph{Main results and trend analysis:} Figure \ref{fig:performance} shows the results of our numerical experiments over 10 iterations: it plots how much \textsc{TextBO} and \textsc{GEPA} improved their performance from the initial \textsc{Worst5-of-N} score across eight scenarios. (For the detailed data used to generate this plot, see Tables \ref{tab:steps_cs} and \ref{tab:steps_cs_gepa} in Web Appendix \ref{appssec:detailedExpResult}.)  At each step (the $x$-axis), the performance score (the $y$-axis) is the average of eight scenarios' mean evaluation score for the test persona dataset.\footnote{\textsc{Best-of-N} remains constant because, as discussed earlier, it is simply the best among the initial pool of ads and this baseline does not involve generation of new ads.} For example, the step 2 score is the average of the eight step 2 mean evaluation scores.
Both \textsc{TextBO} and \textsc{GEPA} improve consistently across steps, with \textsc{TextBO} demonstrating faster early gains and achieving a significantly better final performance than \textsc{GEPA}. Notably, by the third step on average, which is equivalent to 600 persona evaluations, both \textsc{TextBO} and \textsc{GEPA} outperform the \textsc{Best-of-64} baseline. Further, by step 8, both algorithms have reached stable performance.


\begin{table}[!ht]
\centering
\begin{threeparttable}

\begin{tabular}{lccc}
\toprule
 \textit{Effect} & Estimate & Std. err & T-stat \\[2pt]

$\beta_1$: \textsc{GEPA} gain $(t=10)$ from \textsc{Worst5-of-64} 
  & 0.3168*** & 0.0325 & 9.752 \\

$\beta_2$: \textsc{TextBO} gain - \textsc{GEPA} gain $(t=10)$ from \textsc{Worst5-of-64}  
  & 0.0743*** & 0.0248 & 2.991 \\

$\beta_1+\beta_2$: \textsc{TextBO} gain $(t=10)$ from \textsc{Worst5-of-64}
  & 0.3911*** & 0.0355 & 11.011 \\

\midrule
\multicolumn{4}{l}{\textit{Summary statistics}} \\
Observations      & \multicolumn{3}{c}{13,152} \\
Personas  & \multicolumn{3}{c}{411} \\
Scenarios & \multicolumn{3}{c}{8} \\
\bottomrule
\end{tabular}

\begin{tablenotes}[flushleft]
\footnotesize
\item Notes: Two-way cluster-robust standard errors. 
\item * $p<0.10$, ** $p<0.05$, *** $p<0.01$.
\end{tablenotes}

\end{threeparttable}

\caption{Two-way fixed effect (TWFE) analysis result with persona and scenario fixed effects with two-way clustered SEs. Standard error of $\beta_1+\beta_2$ is computed using the Delta method.}
\label{tab:twfe_matrix}
\end{table}

\paragraph{Statistical analysis:} Figure \ref{fig:performance} shows the mean of means (i.e., scores are averaged over personas in test dataset and then over the eight scenarios). Since both the set of personas and scenarios are common across all algorithms, we cannot directly plot standard errors. Therefore, we now present a statistical analysis using the raw scores at the persona-scenario level and examine whether the performance of \textsc{TextBO} is statistically significant different from that of \textsc{GEPA}. We consider the Two-way Fixed Effect (TWFE) model:
\begin{equation}
    \text{score(persona, scenario, t)} = \alpha + \beta_1 \mathbf{1}\{t=10\} + \beta_2 \mathbf{1}\{\mathrm{TextBO}\}\,\mathbf{1}\{t=10\} + \text{persona FE} + \text{scenario FE} + \varepsilon, \nonumber
\end{equation}
where $\mathbf{1}\{t=10\}$ is the indicator variable of whether it is $t=0$ (\textsc{GEPA} and \textsc{TextBO}'s initial starting point, i.e., \textsc{Worst5-of-64} score) or $t=10$ (\textsc{GEPA} and \textsc{TextBO}'s final score), $\beta_1$ is the \textsc{GEPA}'s gain from \textsc{Worst5-of-64} after $t=10$, $\beta_2$ is the \textsc{TextBO}'s advantage over GEPA after $t=10$, and $\beta_1+\beta_2$ is the \textsc{TextBO}'s gain from \textsc{Worst5-of-64} after $t=10$.  Further, we control for persona and scenario fixed effects to account for persistent differences in personas and scenarios (across both algorithms). The standard errors are two-way clustered  \citep{cameron2011robust, cameron2015practitioner}. 

The results from this TWFE model are shown in Table \ref{tab:twfe_matrix}. We see that both \textsc{TextBO}'s and \textsc{GEPA} show statistically significant gains from the starting point (\textsc{Worst5-of-64}) after ten iterations. In addition, at $T=10$, \textsc{TextBO} shows a 23.5\% more gain in score compared to \textsc{GEPA}. Together, these findings provide two key takeaways: (1) both \textsc{GEPA} and \textsc{TextBO} are able to self-improve and generate ads better aligned with the target population, and (2) the performance of \textsc{TextBO} is significantly better than that of \textsc{GEPA} for the same number of evaluations, and this difference is both statistically significant and meaningful in magnitude.  



\paragraph{Qualitative analysis:} So far, we have shown that \textsc{TextBO} is able to self-improve and generate ads that align well with the persona distribution within a few iterations. However, one could argue that this can be simply due to a weak set of initial prompts, i.e., a poor set of \textsc{Worst5-of-N} prompts and corresponding ad images. We examine this alternative explanation by examining the prompts and ad creatives/images.

First, one natural way to rule out this explanation is to compare the quality of the prompts at the starting point and end point of the algorithm, and show that the prompts at the starting point are grammatically correct, meaningful, and sensible and that the differences in the prompts at $t=0$ and $t=10$ stem from differences in the message and focus of the prompt rather than grammar/structure. We confirm that this in indeed the case by comparing the prompts at the starting and end points of the algorithm for all the scenarios and trajectories. For example, see the prompt for the \textsc{Worst-of-64} prompt for Scenario 1 in Figure \ref{fig:best_ad_prompt} in Web Appendix \ref{appssec:initial_prompts} and the prompt for \textsc{TextBO} at the last step in Figure \ref{fig:TBoN_ad_prompt} in Web Appendix \ref{appssec:final_tbonbo_gepa_prompts}.  

A second way to rule this out this alternative explanation is to compare the generated ad images and verify that the differences among ads across algorithms and (over the iterations, within a algorithm) stem from the message and representation, and not from the image quality. To that end, in Figure \ref{fig:ad_images}, we show the ad images at the start and end points for both \textsc{GEPA} and \textsc{TextBO}, as well the \textsc{Best-of-N} baseline (for three scenarios). There is no clear visual quality differences among these ad images; the differences are mainly in the ads' message and how it is represented. This further suggests that  the improvements from \textsc{TextBO} (and \textsc{GEPA}) over the \textsc{Worst5-of-N} starting point can be attributed to alignment, and are not due to trivial quality gains from correcting poor or under-specified initial prompts.


\begin{figure*}[ht]
  \centering
  \setlength{\tabcolsep}{3pt}
  \renewcommand{\arraystretch}{1.05}
  \footnotesize
  \resizebox{0.92\textwidth}{!}{%
  \begin{tabular}{M{0.11\textwidth} M{0.26\textwidth} M{0.26\textwidth} M{0.26\textwidth}}
    & \shortstack{GreenBite\\(Plant-based patty)} & \shortstack{MindGarden\\(Meditation app)} & \shortstack{AuraSonics X1\\(Noise-canceling earbuds)}
    \\
    \shortstack{\textsc{Worst5-of-64}\\(starting point)}
     &
      \shortstack{\includegraphics[width=\linewidth]{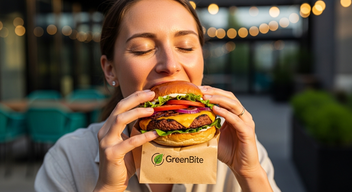}\\Score: 2.672} &
      \shortstack{\includegraphics[width=\linewidth]{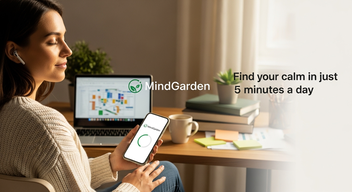}\\Score: 2.670} &
      \shortstack{\includegraphics[width=\linewidth]{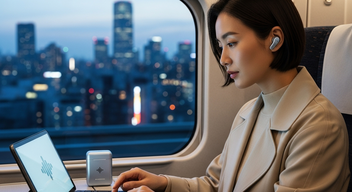}\\Score: 2.765} \\
    \textsc{Best-of-64} &
      \shortstack{\includegraphics[width=\linewidth]{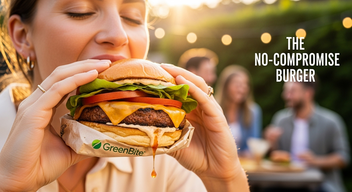}\\Score: 2.898} &
      \shortstack{\includegraphics[width=\linewidth]{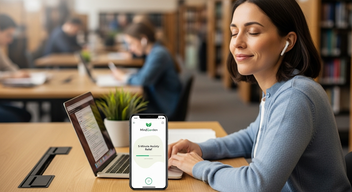}\\Score: 2.930} &
      \shortstack{\includegraphics[width=\linewidth]{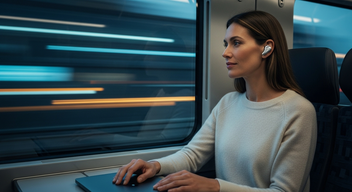}\\Score: 2.934} \\
    \shortstack{\textsc{GEPA}\\(after T=10)} &
      \shortstack{\includegraphics[width=\linewidth]{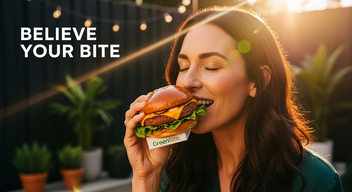}\\Score: 3.046} &
      \shortstack{\includegraphics[width=\linewidth]{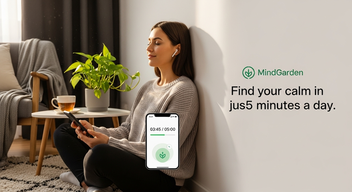}\\Score: 3.068} &
      \shortstack{\includegraphics[width=\linewidth]{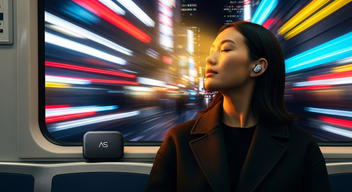}\\Score: 3.028} \\
     \shortstack{\textsc{TextBO}\\(after T=10)} &
      \shortstack{\includegraphics[width=\linewidth]{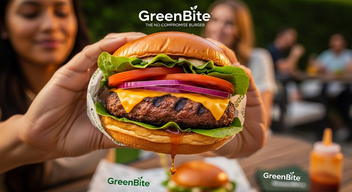}\\Score: 3.168} &
      \shortstack{\includegraphics[width=\linewidth]{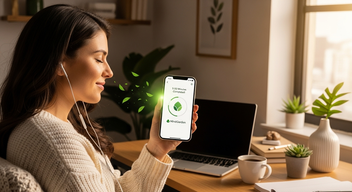}\\Score: 3.132} &
      \shortstack{\includegraphics[width=\linewidth]{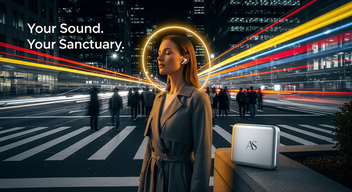}\\Score: 3.104} \\
  \end{tabular}}
  \caption{\textsc{TextBO} and baselines' generated ad images for three fictional brands: GreenBite burger patty (Scenario 1), MindGarden meditation app (Scenario 6), and AuraSonics noise-canceling earbuds (Scenario 2). For the ad images of the other five scenarios, see Figure \ref{fig:ad_images_more} in Web Appendix \ref{appssec:detailedExpResult}.}
  \label{fig:ad_images}
\end{figure*}

\begin{figure}[ht!]
    \centering
    \includegraphics[width=0.7\linewidth]{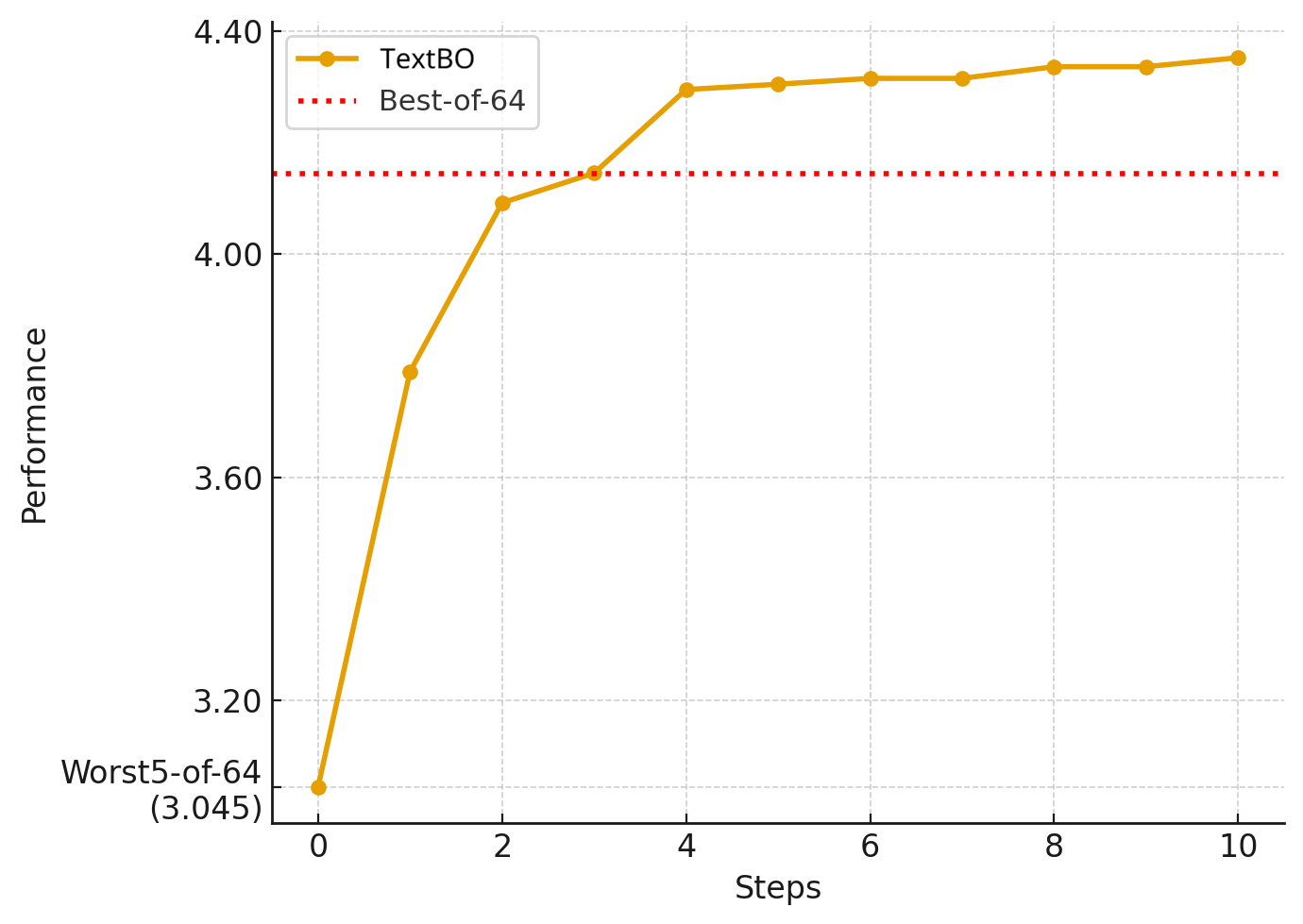}
    \caption{Ablation study with \textsc{TextBO} with $J=5$ and \textsc{Best-of-64} for Gemini 2.5 Flash as the evaluator, with temperature 0. The performance score is the average of eight scenarios' mean evaluation score at that step. }
    \label{fig:Flashperformance}
\end{figure}



\subsection{Ablation Study}
\label{ssec:ablation}

So far, in all our experiments, we used the persona dataset for evaluation and the goal was to optimize the ads serve to the target population captured by the persona dataset \citep{toubia2025twin}. However, a natural question here is whether \textsc{TextBO} can still perform well if we simply use a different target population/evaluator without access to extensive persona information during training. (Recall that each persona consists of a large amount of information, with 500 survey questions accounting for around $50{,}000$ tokens.) Therefore, we run a no-persona ablation experiment in which the evaluator is a single LLM (Gemini 2.5 Flash) with no auxiliary persona context. Concretely, at each iteration, we present only the ad image candidate to a LLM judge (Gemini 2.5 Flash)  together with the same 1–5 effectiveness rubric, set the temperature to 0, and compute the same scalar score as the log-probability–weighted expectation over $\{1,\dots,5\}$ as before.\footnote{With temperature $=0$, the judge’s output is deterministic for a fixed image and rubric; the logprob vector is still available, so the expected score is stable across repeated calls.} This collapses the evaluation to a deterministic mapping between ads and scores; it removes heterogeneity from personas and stochasticity from sampling, allowing us to test whether \textsc{TextBO}'s performance depends on specific persona information-related setup. To ensure that there is no leakage of preferences, we used GPT-5 instead of Gemini 2.5 Flash for critic model $M_{critic}$ (i.e., the operators used for generating meta-reflection, generating textual gradients, and making the \textsc{Best-of-N} choices.). 


We again compare \textsc{TextBO} ($J{=}5$ trajectories) against the \textsc{Best-of-64} initial baseline.  The results from this exercise are shown in Figure \ref{fig:Flashperformance} plots \textsc{TextBO} across 10 optimization steps starting from the \textsc{Worst5-of-64} prompts under the ablation setup.\footnote{Here, \textsc{Worst5-of-64} and \textsc{Best-of-64} are different from those of the main experiments, as the preference distribution is different.} 
As before, we see that \textsc{TextBO} outperforms \textsc{Best-of-64} baseline from optimization step 3 onwards, and the overall trend in Figure \ref{fig:Flashperformance} is similar to that in Figure \ref{fig:performance}. The main difference is that the evaluation scores without persona information are more optimistic than scores with persona information for both \textsc{TextBO} and \textsc{Best-of-64}. This is understandable, since \citet{peng2025mega} find that persona information does not significantly improve LLMs' overall ability to predict preferences and mainly adds variance to the predictions. Since variance makes the learning problem harder, it is natural that the main experiment, which uses personas, shows slower improvement than the pure-LLM case here. Overall, these findings suggest that when learning the target population's preferences is difficult, the method may require more iterations and larger training samples to optimize effectively.

\section{Experiments for Agentic AI Benchmarks}
\label{sec:agenticExp}

In $\S$\ref{sec:experiments}, we saw that \textsc{TextBO} significantly outperforms \textsc{GEPA} for ad optimization experiments. Recall that the adapted version of \textsc{GEPA} in the ads settings effectively reduced to \textsc{TextBO} with $N=1$ and $G=1$. As such, one of the main takeaways from those experiments is that \textsc{Best-of-N} gradient sampling and multiple gradient steps can effectively improve the performance of self-improving AI under limited evaluation budget. 

The natural follow-up question is whether \textsc{Best-of-N} gradient sampling and multiple gradient steps will also improve the performance of standard \textsc{GEPA} used in the agentic AI benchmarks considered in \citet{agrawal2025gepa}. Therefore, we now present a series of experiments, where we augment \textsc{GEPA} with the key theoretical and algorithmic innovations from our approach  -- \textsc{Best-of-N} gradient sampling and multiple gradient steps (as described in $\S$\ref{sec:BoNAndTheory}) . 
In our augmented version of \textsc{GEPA}, denoted as \textsc{TextBO-GEPA}, at each iteration, we take $G=5$ gradient steps, and at each step, we employ \textsc{Best-of-N} textual gradients with $N=5$. (In standard \textsc{GEPA}, each time we improve a prompt, we only take a single textual gradient step.) See Web Appendix $\S$\ref{appssec:agentic_GEPA_implementation} for details.



\subsection{Experiment Design}\label{ssec:agentExpDesign}

To rigorously evaluate \textsc{GEPA} and \textsc{TextBO-GEPA}, we closely replicate three agentic AI benchmarks and their corresponding experimental setups from \cite{agrawal2025gepa}.\footnote{\cite{agrawal2025gepa} also included IFBench \cite{ifbench_bench}, for which they report that none of the algorithm show improvement. Therefore we exclude it in our analysis.}  In all the experiments, we used Qwen3-8B \citep{qwen3_technical_report}, the same model and setup used in \cite{agrawal2025gepa} (decoding temperature of 0.6, top-p of 0.95, and top-k of 20 for training as well as inference). As reported in \cite{agrawal2025gepa}, \textsc{GEPA} saturates after $7,500$ evaluations (rollouts), so we conduct experiments up to $7,500$ evaluations. 

We now briefly describe the three benchmark tasks:\\
\noindent \textbf{HotpotQA}\; \citet{hotpotqa_bench} provides HotpotQA, a set of diverse, explainable multi-hop question answering tasks comprising approximately 113k Wikipedia-based examples. This task requires reasoning over multiple supporting documents and providing sentence-level supporting facts to enable intense supervision. We follow the implementation details described in \cite{agrawal2025gepa}, except for the retriever module, which was not specified in the paper. So we conservatively set it to BM25 \citep{robertson2009probabilistic}, one of the most widely used retrievers. We use 150 examples for training, 300 for validation, and 300 for testing.

\noindent \textbf{HoVer}\; \citet{hover_bench} provides HoVer, a set of many-hop fact-extraction and claim-verification tasks, constructed from the same 113k Wikipedia-based examples used in HotpotQA. It tests complex reasoning that requires aggregating evidence from multiple sentences across disparate documents, often involving diverse reasoning graphs and long-range dependencies. Again, we follow the implementation details described in \cite{agrawal2025gepa}, except for the retriever module, which was again not specified in the paper, and was set to BM25 \citep{robertson2009probabilistic}. Again, we use 150 examples for training, 300 for validation, and 300 for testing.

\noindent \textbf{PUPA}\; \citet{papillon_bench}  provides PUPA, a set of privacy-conscious delegation tasks which address real-world user queries by orchestrating an ensemble of trusted and untrusted models. The primary objective is to maintain a high response quality, comparable to that of proprietary models, while minimizing exposure of personally identifiable information (PII) to the untrusted backend. Again, we follow the implementation details described in \cite{agrawal2025gepa}, except that we use case-insensitive substring matching instead of LLM-as-a-judge to check whether each PII exists in the output. We use 111 training examples, 111 for validation, and 221 for testing.

\subsection{Results and Analysis}
\label{ssec:agentResultsAnalysis}

\begin{figure}[t]
  \centering
  \begin{subfigure}[t]{0.49\textwidth}
    \centering
    \includegraphics[width=\linewidth]{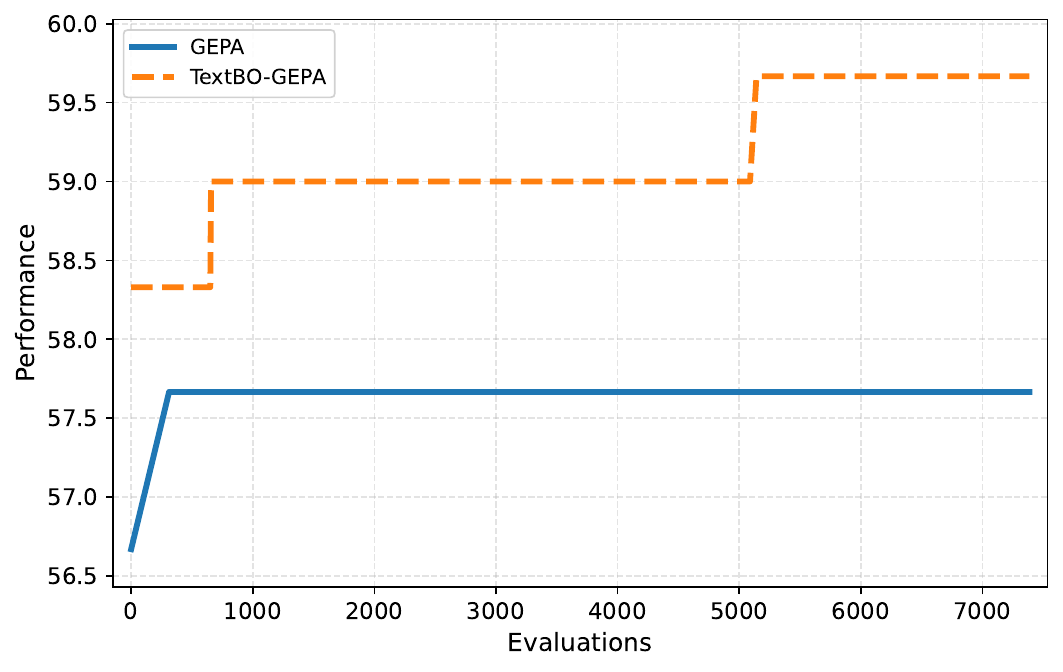}
    \caption{HotpotQA, Qwen3 8B}
    \label{fig:hotpot}
  \end{subfigure}\hfill
  \begin{subfigure}[t]{0.49\textwidth}
    \centering
    \includegraphics[width=\linewidth]{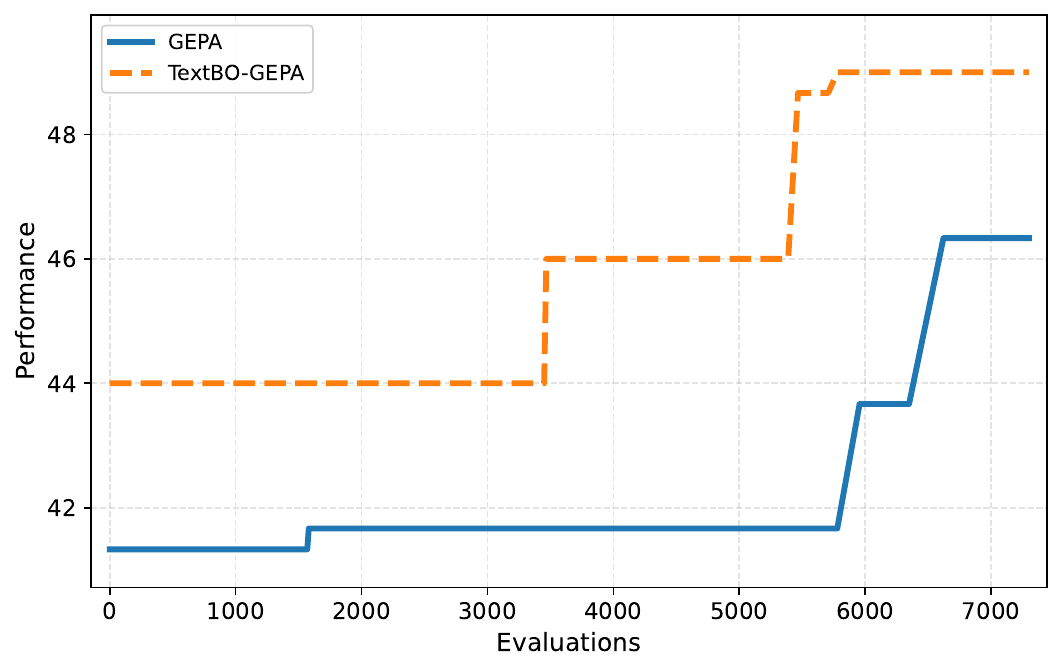}
    \caption{HoVer, Qwen3 8B}
    \label{fig:hover}
  \end{subfigure}

  \vspace{0.8em} 

  \begin{subfigure}[t]{0.49\textwidth}
    \centering
    \includegraphics[width=\linewidth]{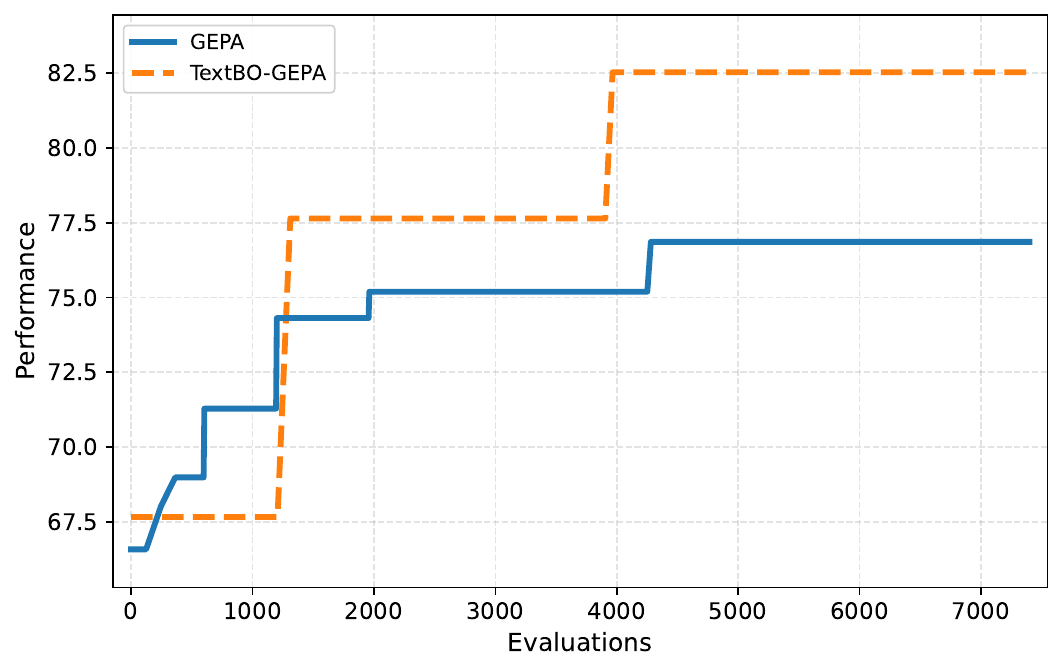}
    \caption{PUPA, Qwen3 8B}
    \label{fig:pupa}
  \end{subfigure}
  \caption{Experiment results for three agentic AI benchmarks considered in \cite{agrawal2025gepa}. As more evaluations are conducted, \textsc{TextBO-GEPA} (\textsc{GEPA} with \textsc{Best-of-N} gradient sampling and multiple gradient steps) consistently outperform vanilla \textsc{GEPA}.}
  \label{fig:agentic_results_graphs}
\end{figure}

Figure \ref{fig:agentic_results_graphs} and Table \ref{tab:agentic_result_tab} present the results of agentic AI benchmark experiments. We plot the improvement from their initial scores for both \textsc{TextBO-GPEA} and \textsc{GEPA}. Relative to the baseline, \textsc{TextBO-GEPA} delivers larger gains than \textsc{GEPA} across all datasets. On HotpotQA, the improvement rises from +1.00 to +3.00 points, which is a $200 \%$ increase in the baseline-relative gain. On Hover, the gain increases from +5.00 to +7.67 points $(+53.4 \%)$. On PUPA, it grows from +10.28 to +15.95 points $(+55.2 \%)$. Overall, the aggregate improvement increases from +5.43 to +8.87 points, corresponding to a $63.4 \%$ increase in baseline-relative improvement. Considering that language model evaluation benchmarks often have standard deviations well below 1 point, and 95\% CI sizes often around 1–2 points \citep{madaan2024quantifying},these results illustrate that \textsc{TextBO-GEPA} consistently significantly outperforms \textsc{GEPA} across the agentic AI benchmarks.

\begin{table}[htbp!]
  \centering
  \scalebox{0.9}{%
    \begin{tabular}{lccccc}
      \toprule
      \textbf{Model} & \textbf{HotpotQA} & \textbf{Hover} & \textbf{PUPA} & \textbf{Aggregate} & \textbf{Improvement} \\
      \midrule
      \multicolumn{6}{l}{\textbf{Qwen3-8B}} \\
      \midrule
      Baseline & 56.67 & 41.33 & 66.58 & 54.86 & --- \\
      \textsc{GEPA} & 57.67 & 46.33 & 76.86 & 60.29 & +5.43 \\
      \textsc{TextBO-GEPA} & \textbf{59.67} & \textbf{49.00} & \textbf{82.53} & \textbf{63.73} & \textbf{+8.87} \\
      \bottomrule
    \end{tabular}%
  }
  \caption{Results on HotpotQA, Hover, and PUPA for Qwen3-8B after ~7,500 evaluations. Aggregate is the mean across the three datasets; Improvement is vs.\ Baseline aggregate.}
  \label{tab:agentic_result_tab}
\end{table}

\section{Conclusion}
\label{sec:conclusion}

Evaluation-efficient self-improving AI is increasingly important in societal and business settings where real-world feedback (e.g., human responses, field experiments) is slow and costly, even as candidate generation and analysis have become cheap. In this paper, we address this gap by proposing \textsc{TextBO}, a simple evaluation-efficient prompt-optimization framework that combines TextGrad-style local textual updates with \textsc{Best-of-N} gradient selection. Our main theoretical result shows that \textsc{Best-of-N} over locally sampled textual edits induces, in probability, ascent directions of a UCB-style acquisition function. This provides a principled bridge between language-space self-improvement and classical UCB Bayesian optimization: \textsc{TextBO} emulates parallel gradient-based UCB-BO in the implicit embedding space induced by the LLM, while operating purely in language space and without constructing an explicit surrogate or calibrated uncertainty model. Empirically, we find that \textsc{TextBO} improves evaluation efficiency in ad-creative prompt optimization, and our ablations indicate that the gains are not driven by a particular persona dataset or by starting from unusually weak prompts. We additionally find that the same \textsc{Best‑of‑N} multi‑step mechanism strengthens \textsc{GEPA} in agentic benchmark settings: \textsc{TextBO‑GEPA} exhibits larger improvements than \textsc{GEPA} across evaluation steps on agentic AI benchmarks.

Overall, this work makes four contributions to literature on AI, optimization, and marketing. First, we elevate \emph{evaluation efficiency} as a first-order design objective for self-improving AI deployed in real-world decisions. Second, we introduce \textsc{TextBO}, a simple and practical language-space optimization procedure that combines textual gradients with \textsc{Best-of-N} selection. Third, we provide theory showing that \textsc{Best-of-N} textual-gradient steps implement optimism-driven UCB-style exploration in an implicit embedding space, yielding evaluation-efficiency guarantees. Fourth, we demonstrate empirical gains relative to strong baselines in both the digital advertising case study and the agentic AI benchmarks.

From a managerial perspective, evaluation-efficient self-improving AI can sharply reduce the marginal cost of iteration in data-driven decision making and enable even small businesses and resource constrained organizations to adopt AI tools (in ad optimization, promotion design, and other marketing activities). As generation and analysis become automated, our framework offers firms the tools and ability to sustain continuous optimization loops that adapt campaigns to evolving market conditions and consumer responses.

\singlespacing
\bibliographystyle{plainnat}
\bibliography{references} 

\newpage
\begin{appendices}

\setcounter{table}{0}
\setcounter{figure}{0}
\setcounter{equation}{0}
\setcounter{page}{0}
\renewcommand{\thetable}{A\arabic{table}}
\renewcommand{\thefigure}{A\arabic{figure}}
\renewcommand{\theequation}{A\arabic{equation}}
\renewcommand{\thepage}{\roman{page}}
\pagenumbering{roman}

\section{GP-UCB and Its Evaluation Efficiency}
\label{appsec:GP-UCB}

\paragraph{Setup.}

Let $f$ be a unknown, black-box function that maps a compact and convex feasible region $\mathcal{D} \subset \mathbb{R}^d$ to $\mathbb{R}$. At each time $t$, a sampling algorithm $\mathcal{A}$ selects a point $x_t \in \mathcal{D}$ to evaluate. This selection is adaptive, based on the history of previously chosen points and their observed outcomes. The evaluation at $x_t$ yields a noisy observation $y_t=f\left(x_t\right)+\epsilon_t$, where $\epsilon_t$ is a random noise term, where $\left\{\epsilon_t: t=1,2, \ldots\right\}$ are independent, sub-Gaussian random variables. The unknown function $f$ is assumed to be an element of a Reproducing Kernel Hilbert Space (RKHS), $\mathcal{N}_{\Psi}(\mathcal{D})$, induced by a stationary kernel $\Psi$. Functions within this RKHS possess a specific smoothness property, the nature of which is determined by the kernel used.  The space $\mathcal{N}_{\Psi}(\mathcal{D})$ is endowed with an inner product and is formally defined as the set of functions 
$$
\mathcal{N}_{\Psi}(\mathcal{D}):=\left\{g: \mathcal{D} \mapsto \mathbb{R} \mid g(\mathbf{x})=\sum_{j=1}^{\infty} c_j \Psi\left(\mathbf{x}-\mathbf{x}_j\right), \text { for }\left\{c_j\right\} \subset \mathbb{R} \text { and }\left\{\mathbf{x}_j\right\} \subset \mathcal{D} \text { such that }\|g\|_{\mathcal{N}_{\Psi}(\mathcal{D})}<\infty\right\}
$$
where the norm is given by:
$
\|g\|_{\mathcal{N}_{\Psi}(\mathcal{D})}:=\left(\sum_{j, l=1}^{\infty} c_j c_l \Psi\left(\mathbf{x}_j-\mathbf{x}_l\right)\right)^{\frac{1}{2}}
$. 
For any two functions $g_1(\mathbf{x})=\sum_{j=1}^{\infty} a_j \Psi\left(\mathbf{x}-\mathbf{x}_j\right)$ and $g_2(\mathbf{x})=\sum_{l=1}^{\infty} b_l \Psi\left(\mathbf{x}-\mathbf{x}_l\right)$ in the space, their inner product is
$
\left\langle g_1, g_2\right\rangle_{\mathcal{N}_{\Psi}(\mathcal{D})}:=\sum_{j, l=1}^{\infty} a_j b_l \Psi\left(\mathbf{x}_j-\mathbf{x}_l\right)
$.
A key feature of RKHS space is the reproducing property, $\langle g, \Psi(\mathbf{x}-\cdot)\rangle_{\mathcal{N}_{\Psi}(\mathcal{D})}=g(\mathbf{x})$, which holds for all $\mathbf{x} \in \mathcal{D}$. The norm of the function $f$ in this space is assumed to be bounded such that $\|f\|_{\mathcal{N}_{\Psi}(\mathcal{D})} \leq B$ for some constant $B>0$.

\paragraph{The objectives.}
 The performance of $\mathcal{A}$ is often evaluated through two types of regret over a time horizon $T$. One is \textit{cumulative regret}, defined as
 \begin{align}
     \mathcal{R}_C(T ; f, \mathcal{A}):=\sum_{t=1}^T\left[\max _{\mathbf{x} \in \mathcal{D}} f(\mathbf{x})-f\left(\mathbf{x}_t\right)\right]
 \end{align}
The other is \textit{simple regret}, defined as
\begin{align}
    \mathcal{R}_S(T ; f, \mathcal{A}):=\max _{\mathbf{x} \in \mathcal{D}} f(\mathbf{x})-f\left(\mathbf{x}^{(T)}\right)
\end{align}
where $\mathbf{x}^{(T)}$ is the output of the algorithm $\mathcal{A}$ after taking $T$ function evaluations.

\paragraph{The lower bound.}
\cite{scarlett2017lower} showed that a lower bound exists on the regret of \textit{any} algorithm for $f \in \mathcal{N}_{\Psi}(\mathcal{D})$.  Specifically, for any constant $B>0$,
\begin{align}
    \inf _{\mathcal{A}} \sup _{\|f\|_{\mathcal{N}_{\Psi}(\mathcal{D})} \leq B} \mathbb{E}\left[\mathcal{R}_C(T ; f, \mathcal{A})\right] = \begin{cases} \Omega (T^{\frac{\nu+d}{2 \nu+d}}), & \text { for Matérn kernels, } \\ \Omega ( T^{\frac{1}{2}} \ln ^{\frac{d}{2}}(T)), & \text { for SE kernels. }\end{cases}
\end{align}
\begin{align}
    \inf _{\mathcal{A}} \sup _{\|f\|_{\mathcal{N}_{\Psi}(\mathcal{D})} \leq B} T\left(\epsilon, f, \mathcal{A}\right) = \begin{cases} \Omega\left(\left(\frac{1}{\epsilon}\right)^{2+d / \nu}\right), & \text { for Matérn kernels, } \\ \Omega\left(\frac{1}{\epsilon^2}\left(\log \frac{1}{\epsilon}\right)^{d / 2}\right), & \text { for SE kernels. }\end{cases}
\end{align}
where Matérn kernels are defined as
\begin{align}
\Psi_{\mathrm{M}}\left(\mathbf{x}-\mathbf{x}^{\prime}\right):=\frac{1}{\Gamma(\nu) 2^{\nu-1}}\left(\frac{2 \sqrt{\nu}\left\|\mathbf{x}-\mathbf{x}^{\prime}\right\|_2}{\ell}\right)^\nu K_\nu\left(\frac{2 \sqrt{\nu}\left\|\mathbf{x}-\mathbf{x}^{\prime}\right\|_2}{\ell}\right), \quad \mathbf{x}, \mathbf{x}^{\prime} \in \mathcal{D}
\end{align}
and SE kernels are defined as
\begin{align}
\Psi_{\mathrm{SE}}\left(\mathbf{x}-\mathbf{x}^{\prime}\right)=\exp \left(-\frac{\left\|\mathbf{x}-\mathbf{x}^{\prime}\right\|_2^2}{2 \ell^2}\right), \quad \mathbf{x}, \mathbf{x}^{\prime} \in \mathcal{D}
\end{align}
where $\nu>0$ is the smoothness parameter, $\ell>0$ is the length-scale parameter, $\Gamma(\cdot)$ is the gamma function, $K_\nu(\cdot)$ is the modified Bessel function of the second kind of order $\nu$, and $\|\cdot\|_2$ denotes the Euclidean norm. 

\paragraph{GP-UCB algorithm.} The Gaussian Process Upper Confidence Bound (GP-UCB) algorithm, introduced in the seminal work of \cite{srinivas2012information}, operates from a Bayesian perspective by placing a Gaussian Process (GP) prior on the unknown objective function~$f$. This GP is characterized by a zero-mean function and a symmetric, positive-definite covariance function, or kernel, $k: \mathcal{D} \times \mathcal{D} \mapsto \mathbb{R}$, where $k\left(\mathbf{x}, \mathbf{x}^{\prime}\right)=\Psi\left(\mathbf{x}-\mathbf{x}^{\prime}\right)$ for all $\mathbf{x}, \mathbf{x}^{\prime} \in \mathcal{D}$.  For any finite set of points $\{\mathbf{x}_1, \ldots, \mathbf{x}_t\} \subset \mathcal{D}$, the corresponding function values $(f(\mathbf{x}_1), \ldots, f(\mathbf{x}_t))^{\top}$ are assumed to follow a multivariate normal distribution with a zero mean vector and a covariance matrix with entries $[\mathbf{K}_t]_{jl} = k(\mathbf{x}_j, \mathbf{x}_l)$. Under the assumption of independent, zero-mean Gaussian observation noise with variance $\sigma^2$, conditioning the GP prior on a set of observations $\mathbf{D}_t:=\{(\mathbf{x}_j, y_j) : j=1, \ldots, t\}$ yields a closed-form posterior distribution. The posterior mean and variance at any point $\mathbf{x} \in \mathcal{D}$ are given by:
\begin{align}
\mathbb{E}\left[f(\mathbf{x}) \mid \mathbf{D}_t\right]  =\mathbf{k}_t^{\top}(\mathbf{x})\left(\mathbf{K}_t+\sigma^2 \mathbf{I}_t\right)^{-1} \mathbf{y}_t,
    \operatorname{Var}\left[f(\mathbf{x}) \mid \mathbf{D}_t\right]  =k(\mathbf{x}, \mathbf{x})-\mathbf{k}_t(\mathbf{x})^{\top}\left(\mathbf{K}_t+\sigma^2 \mathbf{I}_t\right)^{-1} \mathbf{k}_t(\mathbf{x}),
\end{align}
where $\mathbf{k}_t(\mathbf{x})$ is the vector of covariances between~$\mathbf{x}$ and the observed points, $\mathbf{K}_t$ is the covariance matrix of the observed points, $\mathbf{y}_t$ is the vector of observed values, and~$\mathbf{I}_t$ is the identity matrix. Inspired by upper confidence bound methods in the multi-armed bandit literature (Auer et al., 2002), GP-UCB employs an optimistic acquisition strategy to select subsequent evaluation points. At each step $t$, the next point $\mathbf{x}_{t+1}$ is chosen by maximizing an upper confidence bound on the function's value
$
    \mathbf{x}_{t+1}=\underset{\mathbf{x} \in \mathcal{D}}{\arg \max }\left(\mathbb{E}\left[f(\mathbf{x}) \mid \mathbf{D}_t\right]+\sqrt{\beta_t \operatorname{Var}\left[f(\mathbf{x}) \mid \mathbf{D}_t\right]}\right).
$
This acquisition function naturally balances exploitation, driven by the posterior mean $\mathbb{E}\left[f(\mathbf{x}) \mid \mathbf{D}_t\right]$, with exploration, driven by the posterior standard deviation $\sqrt{\operatorname{Var}\left[f(\mathbf{x}) \mid \mathbf{D}_t\right]}$. The tunable parameter $\beta_t > 0$ explicitly governs this trade-off, making its specification critical to the algorithm's performance.

\paragraph{The evaluation efficiency of GP-UCB.} 
\cite{whitehouse2023sublinear, wang2023regret} showed that GP-UCB \cite{srinivas2012information} regret upper bound for both expected cumulative regret and simple regret almost tightly matches the lower bound provided by \cite{scarlett2017lower}.  Specifically, for any constant $B>0$, the GP-UCB algorithm $\mathcal{A}_{\text{GPUCB}}$ achieves
\begin{align}
    \sup _{\|f\|_{\mathcal{N}_{\Psi}(\mathcal{D})} \leq B} \mathbb{E}\left[\mathcal{R}_C(T ; f, \mathcal{A}_{\text{GPUCB}})\right] = \begin{cases} O( T^{\frac{\nu+d}{2 \nu+d}}), & \text { for Matérn kernels, } \\ O( T^{\frac{1}{2}} \ln ^{\frac{d}{2}}(T)) & \text { for SE kernels. }\end{cases}
\end{align}
\begin{align}
    \sup_{\|f\|_{\mathcal{N}_{\Psi}(\mathcal{D})} \leq B} T(\epsilon, f, \mathcal{A}_{\mathrm{GPUCB}}) = 
\begin{cases}
O\left(\left(\frac{1}{\epsilon}\right)^{2+d/\nu}\right), & \text{for Matérn kernels,} \\
O\left(\frac{1}{\epsilon^2} \cdot \left(\log\frac{1}{\epsilon}\right)^{d+3}\right), & \text{for SE kernels.}
\end{cases}
\end{align}
\cite{wang2023regret}'s approach to proving the regret optimality of GP-UCB follows the two-step framework. The first critical component is to construct a high-probability uniform error bound that quantifies the difference between the true objective function $f(x)$ and its posterior mean estimate $\mu_t(x)$ at any given step $t$. This bound takes the form $|f(x) - \mu_t(x)| \le \sqrt{\beta_t} \sigma_t(x)$ for all $x \in \mathcal{D}$, where $\sigma_t(x)$ is the posterior standard deviation and $\beta_t$ is a carefully chosen exploration parameter. The second component involves bounding the cumulative sum of the instantaneous regrets, which, under the uniform error bound, can be related to the sum of the posterior variances $\sum_{t=1}^T \sigma_{t-1}^2(x_t)$. This sum is, in turn, bounded by the maximal information gain, $\gamma_T$. The final regret bound is thus a function of $T$, $\beta_T$, and an upper bound on $\gamma_T$. \cite{wang2023regret} employs tools from empirical process theory and decomposes the estimation error, $f(x) - \mu_t(x)$, into a bias term and a random error term. By leveraging the connection between Gaussian process regression and kernel ridge regression, along with properties of the Fourier transform for stationary kernels, they show that the bias term is bounded by $||f||_{\mathcal{N}_{\Psi}(\mathcal{D})} \sigma_t(x)$. The more challenging random error term is handled by viewing it as an empirical process indexed by a class of functions and bounding its supremum. They bound the $\epsilon$-entropy of this function class, which allows for a high-probability bound on the random error. With the new uniform error bound, we can select a much smaller, dimension-independent exploration parameter $\beta_t$ that grows only logarithmically with $T$. Combining this sharper $\beta_t$ with the tightest known bounds on the maximal information gain $\gamma_T$ for Matérn and SE kernels, we can achieve the tight cumulative regret of GP-UCB.

\clearpage
\section{Proofs}
\label{sec:proof}

\subsection{Auxiliary lemmas}

\begin{lemma}[Uniform first-order expansion]
\label{lem:taylor}
Let $g=\nabla\mu(e)$ and $h=\nabla\sigma(e)$. There exist $C>0$ and $\varepsilon_0>0$ such that for all $\varepsilon\in(0,\varepsilon_0]$ and all $u\in\mathbb{R}^{d}$,
\begin{align}
\mu(e+\varepsilon u) &= \mu(e) + \varepsilon\, g^\top u \;+\; r_\mu(\varepsilon,u), & |r_\mu(\varepsilon,u)| &\le C \varepsilon^2, \label{eq:taylor-mu}\\
\sigma(e+\varepsilon u) &= \sigma(e) + \varepsilon\, h^\top u \;+\; r_\sigma(\varepsilon,u), & |r_\sigma(\varepsilon,u)| &\le C \varepsilon^2. \label{eq:taylor-sig}
\end{align}
\end{lemma}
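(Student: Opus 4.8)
The plan is to derive both expansions from the standard quadratic (``descent'') bound implied by the local $C^{1,1}$ regularity assumed in Assumption~\ref{ass:A1}. Since $\mu$ and $\sigma$ are continuously differentiable with Lipschitz gradient on a neighborhood of $x$, I would first fix a closed ball $\bar B(x,\varepsilon_0)$ contained in that neighborhood and let $L_\mu,L_\sigma$ denote the gradient-Lipschitz constants of $\mu,\sigma$ on this ball. Choosing $\varepsilon_0>0$ small enough guarantees that $x+\varepsilon u\in\bar B(x,\varepsilon_0)$ for every unit vector $u$ and every $\varepsilon\in(0,\varepsilon_0]$ (the intended range, consistent with the unit-vector sampling in Assumption~\ref{ass:A0}), which is exactly the uniformity the statement requires.

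Next I would establish the generic single-function bound. For a $C^{1,1}$ function $f$ with gradient-Lipschitz constant $L$ on $\bar B(x,\varepsilon_0)$, the fundamental theorem of calculus along the segment $t\mapsto x+t\varepsilon u$ gives
\begin{align}
f(x+\varepsilon u)-f(x)-\varepsilon\,\nabla f(x)^\top u
=\int_0^1\big[\nabla f(x+t\varepsilon u)-\nabla f(x)\big]^\top(\varepsilon u)\,dt .
\end{align}
Applying Cauchy--Schwarz and the Lipschitz estimate $\|\nabla f(x+t\varepsilon u)-\nabla f(x)\|\le L\,t\varepsilon\|u\|$ under the integral yields the uniform remainder bound
\begin{align}
\big|f(x+\varepsilon u)-f(x)-\varepsilon\,\nabla f(x)^\top u\big|
\;\le\;\tfrac{L}{2}\,\varepsilon^2\|u\|^2 .
\end{align}

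Then I would instantiate this with $f=\mu$ (so $\nabla f(x)=g$) and with $f=\sigma$ (so $\nabla f(x)=h$), defining $r_\mu(\varepsilon,u)$ and $r_\sigma(\varepsilon,u)$ as the respective left-hand sides in \eqref{eq:taylor-mu}--\eqref{eq:taylor-sig}, and set $C:=\tfrac12\max(L_\mu,L_\sigma)$. Since $\|u\|=1$, this delivers $|r_\mu(\varepsilon,u)|\le C\varepsilon^2$ and $|r_\sigma(\varepsilon,u)|\le C\varepsilon^2$ simultaneously, with the same $C$ and $\varepsilon_0$, which is the claim.

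The computation is entirely routine; the only point requiring genuine care is the uniformity of the remainder over \emph{all} directions $u$, rather than for a fixed $u$. This is handled precisely by fixing $\varepsilon_0$ at the outset so that the whole segment $\{x+t\varepsilon u:t\in[0,1]\}$ remains inside the ball on which a single Lipschitz constant controls the gradient, ensuring that $C$ and $\varepsilon_0$ are independent of $u$. I expect no further obstacle, since the local $C^{1,1}$ hypothesis supplies exactly the ingredient needed for the descent bound.
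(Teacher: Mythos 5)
Your proposal is correct and follows essentially the same route as the paper's proof: both control the first-order Taylor remainder via the local Lipschitz continuity of $\nabla\mu$ and $\nabla\sigma$ on a ball around $x$, restricting to unit directions $u$ and taking $C$ as a maximum over the two Lipschitz constants. The only cosmetic difference is that you use the integral form of the remainder (yielding the slightly sharper constant $\tfrac{L}{2}$), whereas the paper invokes the mean-value theorem directly (yielding $L$); this is immaterial to the result.
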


\begin{proof}
By the mean-value theorem with Lipschitz gradients, for some $\xi$ between $e$ and $e+\varepsilon u$,
\(
\mu(e+\varepsilon u)=\mu(e)+\nabla\mu(\xi)^\top(\varepsilon u).
\)
Hence
\(
\mu(e+\varepsilon u)=\mu(e)+\varepsilon g^\top u + \varepsilon (\nabla\mu(\xi)-\nabla\mu(e))^\top u,
\)
and $|\nabla\mu(\xi)-\nabla\mu(e)|\le L_\mu \|\xi-e\| \le L_\mu \varepsilon$, giving $|r_\mu|\le L_\mu \varepsilon^2$. The proof for $\sigma$ is identical with $L_\sigma$. Taking $C:=\max\{L_\mu,L_\sigma\}$ yields the stated bounds with the same $C$.
\end{proof}

\begin{lemma}[Max-stability under bounded perturbations]
\label{lem:argmax-stability}
Let $(f_i)_{i\le N}$ and $(g_i)_{i\le N}$ be real arrays. Let $i_f\in\arg\max_i f_i$ and $i_g\in\arg\max_i g_i$. If for some $\Delta\ge 0$,
\begin{align}
    g_{i_f} \ge f_{i_f} - \Delta
\quad\text{and}\quad
f_{i_g} \ge g_{i_g} - \Delta,
\end{align}
then $g_{i_f} \ge \max_i g_i - \Delta$ and $f_{i_g} \ge \max_i f_i - \Delta$. In particular, $g_{i_f} \ge \max_i g_i - \Delta$ implies that any maximizer of $f$ is a $\Delta$-approximate maximizer of $g$.
\end{lemma}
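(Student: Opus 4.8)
The plan is a short, purely order-theoretic argument: the statement is a deterministic fact about two finite real arrays, so none of the earlier probabilistic or Taylor-type lemmas are invoked. The mechanism is a telescoping chain that threads the two supplied one-sided bounds through the defining optimality inequalities of the two argmaxes.

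First I would unfold the maxima using the definitions of the indices, namely $\max_i f_i = f_{i_f}$ and $\max_i g_i = g_{i_g}$, so that the two targets reduce to comparing $g_{i_f}$ with $g_{i_g}$ and $f_{i_g}$ with $f_{i_f}$. For the $g$-side I would start from the hypothesis $g_{i_f}\ge f_{i_f}-\Delta$, insert the optimality inequality $f_{i_f}\ge f_{i_g}$ (valid because $i_f$ maximizes $f$), and finish with the second hypothesis $f_{i_g}\ge g_{i_g}-\Delta$, giving
\begin{align}
g_{i_f}\;\ge\; f_{i_f}-\Delta\;\ge\; f_{i_g}-\Delta\;\ge\; g_{i_g}-2\Delta .
\end{align}
The $f$-side conclusion follows from the mirror chain, swapping $f\leftrightarrow g$ and $i_f\leftrightarrow i_g$ and using $g_{i_g}\ge g_{i_f}$. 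This certifies that each array's maximizer is a near-maximizer of the other array.

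The ``in particular'' clause then requires no further work: an inequality of the form $g_{i_f}\ge\max_i g_i-\delta$ is, by definition, exactly the statement that any maximizer of $f$ is a $\delta$-approximate maximizer of $g$. There is no genuine obstacle here; the only point demanding care is directional bookkeeping, since each hypothesis is a one-sided bound and must be chained in the correct orientation through the optimality relations. I would also flag that the naive telescoping accumulates both $\Delta$-gaps, so the clean bound it produces is $\max_i g_i-2\Delta$ rather than $\max_i g_i-\Delta$; recovering the single-$\Delta$ form stated in the lemma would require the two slacks to coincide (e.g.\ when a single realized gap controls both hypotheses), which is the one spot I would double-check against the intended application in Theorem~\ref{thm:main-ucb}.
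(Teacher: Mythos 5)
Your telescoping chain is exactly the paper's intended mechanism, and your directional bookkeeping is correct where the paper's is not: the caveat you flagged at the end is not a quibble but a genuine error in the paper. The paper's own proof is the chain
\begin{align}
g_{i_f} \;\ge\; f_{i_f} - \Delta \;=\; \max_i f_i - \Delta \;\ge\; g_{i_g} - \Delta \;=\; \max_i g_i - \Delta,
\end{align}
whose middle step $\max_i f_i - \Delta \ge g_{i_g} - \Delta$ requires $\max_i f_i \ge \max_i g_i$ — a fact not among the hypotheses. The second hypothesis only gives $\max_i f_i \ge f_{i_g} \ge g_{i_g} - \Delta$, which is precisely how the second $\Delta$ enters your chain. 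Indeed the lemma as stated is false: take $N=2$, $f=(1,\,0.9)$, $g=(0.5,\,1.8)$, $\Delta=1$, so $i_f=1$, $i_g=2$; both hypotheses hold ($0.5\ge 0$ and $0.9\ge 0.8$), yet $g_{i_f}=0.5 < \max_i g_i - \Delta = 0.8$. The correct conclusion under these hypotheses is the one you proved, $g_{i_f}\ge \max_i g_i - 2\Delta$ and symmetrically $f_{i_g}\ge \max_i f_i - 2\Delta$.

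Concerning your closing worry about Theorem~\ref{thm:main-ucb}: the error is harmless there, for the reason you anticipated. In the theorem the two arrays are $Y_i$ and $Y_i^{(0)}$ with a \emph{uniform} perturbation bound $|Y_i - Y_i^{(0)}| = |R_i| \le C\varepsilon^2(1+M_N)$ for every $i$, so each slack in your chain is at most $C\varepsilon^2(1+M_N)$ and the accumulated error $2C\varepsilon^2(1+M_N)$ is exactly the quantity the theorem carries forward; in fact the theorem's proof derives $Y_{i^\star}^{(0)} \ge \max_i Y_i^{(0)} - 2C\varepsilon^2(1+M_N)$ directly before citing the lemma, so the citation is only packaging. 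The clean fix is either to restate the lemma with $2\Delta$ in the conclusion (your version), or to replace the two pointwise hypotheses by a uniform hypothesis $\sup_i|f_i-g_i|\le \Delta/2$; under either repair, your proof is the one that should stand in place of the paper's.
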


\begin{proof}
Since $g_{i_f} \ge f_{i_f} - \Delta = \max_i f_i - \Delta \ge g_{i_g} - \Delta = \max_i g_i - \Delta$, the first claim holds. The second is symmetric.
\end{proof}

\begin{lemma}[Spherical cap coverage]
\label{lem:spherical-cap}  For $v\in\mathbb{S}^{d-1}$ and $\eta\in(0,1)$,
define $C(v,\eta):=\{u\in\mathbb{S}^{d-1}: v^\top u \ge 1-\eta\}$. Let $U_1,\ldots,U_N\stackrel{\text{i.i.d.}}{\sim}\rho$ as above. Then $\max_{i\le N} v^\top U_i \to 1$ almost surely as $N\to\infty$. Moreover, for any deterministic sequence $\eta_N\downarrow 0$ with $N \cdot \rho\{u: v^\top u \ge 1-\eta_N\}\to\infty$, we have
\begin{align}
    \mathbb{P}\!\left(\max_{i\le N} v^\top U_i \ge 1-\eta_N\right)\to 1.
\end{align}
\end{lemma}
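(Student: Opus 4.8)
The plan is to reduce both assertions to a single elementary estimate: the probability that none of the $N$ i.i.d.\ samples falls in a given spherical cap. Writing $p_N := \rho(C(v,\eta_N)) = \rho\{u:v^\top u\ge 1-\eta_N\}$, the event $\{\max_{i\le N} v^\top U_i < 1-\eta_N\}$ is exactly the event that every $U_i$ misses the cap $C(v,\eta_N)$. By independence this factorizes, and using $1-x\le e^{-x}$ I obtain
\[
\mathbb{P}\!\left(\max_{i\le N} v^\top U_i < 1-\eta_N\right)=(1-p_N)^N\le e^{-N p_N}.
\]
The hypothesis $N p_N\to\infty$ then forces the right-hand side to $0$, which is precisely the ``moreover'' claim; this disposes of the quantitative statement directly.

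For the almost-sure convergence I would argue as follows. Since $v^\top U_i\le 1$ for all $i$, the running maximum $M_N:=\max_{i\le N} v^\top U_i$ is non-decreasing and bounded above by $1$, hence converges almost surely to some limit $L\le 1$. Fix $\eta>0$ and set $p:=\rho(C(v,\eta))$; here the assumption that $\rho$ has a density bounded below by a positive constant with respect to surface measure is essential, because it guarantees $p\ge c\cdot\mathrm{surf}(C(v,\eta))>0$ for the fixed cap of positive surface measure. Because $M_N$ is non-decreasing, the event $\{L<1-\eta\}$ is contained in $\{M_N<1-\eta\ \text{for all }N\}=\bigcap_{i\ge 1}\{v^\top U_i<1-\eta\}$, whose probability equals $\lim_{n}(1-p)^n=0$ since $0\le 1-p<1$. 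Thus $\mathbb{P}(L<1-\eta)=0$ for every $\eta>0$, and a union bound over $\eta=1/k$, $k\in\mathbb{N}$, gives $\mathbb{P}(L<1)\le\sum_k\mathbb{P}(L<1-1/k)=0$, i.e.\ $L=1$ almost surely.

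The only delicate point --- and the place where the structural assumption on $\rho$ enters --- is establishing $p=\rho(C(v,\eta))>0$ for each fixed $\eta\in(0,1)$; once this positivity is in hand, both parts follow immediately from independence and the exponential tail bound. I would note that the almost-sure statement can alternatively be seen as the special case $\eta_N\equiv\eta$ of the quantitative estimate (for which $N p_N=Np\to\infty$ trivially), upgraded from convergence in probability to almost-sure convergence via the monotonicity of $M_N$, but I find the direct containment argument above cleaner and would present that.
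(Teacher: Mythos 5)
Your proof is correct, and its core estimate coincides with the paper's: both reduce everything to the factorization $\mathbb{P}\!\left(\max_{i\le N} v^\top U_i < 1-\eta\right) = (1-\rho(C(v,\eta)))^N \le e^{-N\rho(C(v,\eta))}$ together with positivity of the cap measure guaranteed by the density lower bound on $\rho$; in particular your treatment of the ``moreover'' claim is identical to the paper's. Where you diverge is the almost-sure statement. The paper fixes $\eta$, notes $\sum_{N\ge 1} e^{-N\rho(C(\eta))}<\infty$, invokes Borel--Cantelli to get $\mathbb{P}(\max_i v^\top U_i < 1-\eta \ \text{i.o.})=0$, and then intersects over a countable sequence $\eta_m\downarrow 0$. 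You instead exploit monotonicity of the running maximum $M_N$: it converges surely to some limit $L\le 1$, the event $\{L<1-\eta\}$ is contained in the event that every sample misses a fixed cap of positive measure, and that event has probability $\lim_n (1-p)^n = 0$ by independence and continuity of measure; a union bound over $\eta=1/k$ finishes. Your route is slightly more elementary --- it avoids Borel--Cantelli entirely, and it makes explicit something the paper's argument does not use: since the events $\{M_N<1-\eta\}$ are nested decreasing in $N$, ``infinitely often'' coincides with the plain intersection, so summability is more machinery than the problem requires. Both arguments rest on exactly the same positivity input $\rho(C(v,\eta))>0$, which you correctly identify as the only point where the structural assumption on $\rho$ enters; the finer scaling $\rho(C(\eta))\asymp \eta^{(d-1)/2}$ quoted in the paper's proof is not actually needed for this lemma.
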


\begin{proof}
Let $C(\eta)=\{u\in\mathbb{S}^{d-1}: v^\top u \ge 1-\eta\}$ be a spherical cap. Since $\rho$ has a density bounded below, $\rho(C(\eta)) \asymp \eta^{\frac{d-1}{2}}$ as $\eta\downarrow 0$. Then
\(
\mathbb{P}(\max_i v^\top U_i < 1-\eta)
= (1-\rho(C(\eta)))^N \le \exp(-N\rho(C(\eta))).
\)
For any fixed $\eta\in(0,1)$, $\sum_{N\ge1}\mathbb{P}(\max_i v^\top U_i < 1-\eta)
\le \sum_{N\ge1} e^{-N\rho(C(\eta))} < \infty$,
so by Borel–Cantelli, $\mathbb{P}(\max_i v^\top U_i < 1-\eta\ \text{i.o.})=0$.
Intersecting over a countable sequence $\eta_m\downarrow 0$ yields
$\max_{i\le N} v^\top U_i \to 1$ almost surely. Choosing $\eta_N\downarrow 0$ so that $N\rho(C(\eta_N))\to\infty$ yields the result, and taking $\eta=\eta_N\to 0$ proves the first statement.
\end{proof}

\begin{lemma}[Near-maximum coupling in a thin band]
\label{lem:thin-band}
Let $M_N=\max_i \xi_i$ and define $v_N:=(g+M_N h)/\|g+M_N h\|$. Also, $S_i:=a_i+b_iM_N$, $Y_i^{(0)}:=a_i+b_i\xi_i$, $b_i:=\sigma(e)+\varepsilon h^\top U_i$. Fix any sequence $c_N\uparrow\infty$ with $c_N/q_N\to 0$ and set $\delta_N:=c_N/q_N$.
Then with probability $\to 1$ there exists an index $j$ such that
\begin{align}
    \xi_j \ge M_N-\delta_N
\quad\text{and}\quad
v_N^\top U_j \ge 1-\eta_N,
\end{align}
whenever $\eta_N\downarrow 0$ satisfies
\begin{align}
    N\,\mathbb{P}(\xi\ge M_N-\delta_N)\,\rho\{u: v_N^\top u\ge 1-\eta_N\}\to\infty.
\end{align}
Moreover, for this $j$,
\begin{align}
    S_j-Y_j^{(0)}=b_j(M_N-\xi_j)\le b_{\max}\,\delta_N.
\end{align}
\end{lemma}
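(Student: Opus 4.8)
The plan is to condition on the noise and exploit the independence between $\{\xi_i\}$ and $\{U_i\}$ granted by Assumption~\ref{ass:A3}. Let $\mathcal{F}_\xi:=\sigma(\xi_1,\dots,\xi_N)$. Given $\mathcal{F}_\xi$, the maximum $M_N$, the direction $v_N=(g+M_N h)/\|g+M_N h\|$, and the band index set $A:=\{i:\xi_i\ge M_N-\delta_N\}$ are all determined, whereas the directions $\{U_i\}_{i\in A}$ remain i.i.d.\ $\rho$. Hence, writing $K:=|A|$ and $p_N:=\rho\{u:v_N^\top u\ge 1-\eta_N\}$, the conditional probability that \emph{no} band index aligns with $v_N$ is exactly $(1-p_N)^K\le \exp(-Kp_N)$. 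The whole argument then reduces to showing that $Kp_N\to\infty$ with probability tending to one, after which I recover the first claim and the \emph{moreover} bound follows by definition.

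First I would lower-bound the two factors separately. Since $\rho$ has a density bounded below on the sphere, the spherical-cap estimate in the proof of Lemma~\ref{lem:spherical-cap} gives a deterministic lower bound $p_N\ge \underline{p}(\eta_N)\asymp \eta_N^{(d-1)/2}$, uniform over the random center $v_N$. For $K$ I would use the classical Gaussian extreme-value scaling $M_N-q_N=O_p(1/q_N)$ (the same regime underlying Lemma~\ref{lem:maxima}): choose a slack $\epsilon_N=\omega_N/q_N$ with $\omega_N\to\infty$ but $\omega_N=o(c_N)$, so that $\mathbb{P}(M_N>q_N+\epsilon_N)\to 0$. On the event $\{M_N\le q_N+\epsilon_N\}$ the random threshold satisfies $M_N-\delta_N\le q_N+\epsilon_N-\delta_N=:\tau_N$, hence $K\ge \#\{i:\xi_i\ge\tau_N\}=:K'$, a $\mathrm{Binomial}(N,\bar{F}(\tau_N))$ count with a \emph{deterministic} threshold. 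A Mills-ratio expansion of the Gaussian tail gives $\log\bar{F}(\tau_N)-\log\bar{F}(q_N)\approx -q_N(\epsilon_N-\delta_N)$, so that $\lambda_N:=N\bar{F}(\tau_N)\asymp e^{c_N-\omega_N}\to\infty$ because $c_N-\omega_N\to\infty$; this is exactly where the band-width condition $\delta_N q_N=c_N\to\infty$ enters, guaranteeing the band is much thicker than the typical top spacing $1/q_N$. A Chernoff lower-tail bound then yields $K\ge K'\ge \lambda_N/2$ with probability $\to 1$.

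Combining the bounds on the good event $G_N:=\{M_N\le q_N+\epsilon_N,\ K\ge \lambda_N/2\}$ gives $Kp_N\ge (\lambda_N/2)\,\underline{p}(\eta_N)$, and the hypothesis $N\,\mathbb{P}(\xi\ge M_N-\delta_N)\,p_N\to\infty$—which pins down precisely the product of the expected band count $N\,\mathbb{P}(\xi\ge M_N-\delta_N)\asymp e^{c_N}$ and the cone measure $p_N$—forces $(\lambda_N/2)\,\underline{p}(\eta_N)\to\infty$ once the negligible factor $e^{-\omega_N}$ from my choice $\omega_N=o(c_N)$ is absorbed. Therefore $\mathbb{P}(\exists\,j\in A:\ v_N^\top U_j\ge 1-\eta_N)\ge 1-\mathbb{P}(G_N^c)-\exp(-(\lambda_N/2)\underline{p}(\eta_N))\to 1$, which is the first assertion. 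The \emph{moreover} statement is then immediate: for such a $j$ the definitions of $S_j$ and $Y_j^{(0)}$ give $S_j-Y_j^{(0)}=b_j(M_N-\xi_j)$, and $M_N-\xi_j\le\delta_N$ together with $b_j\le b_{\max}:=\max_i b_i$ (note $b_j=\sigma(x)+\varepsilon h^\top U_j>0$ for small $\varepsilon$ since $\sigma(x)>0$) yields $S_j-Y_j^{(0)}\le b_{\max}\,\delta_N$.

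The main obstacle is the coupling between the target direction $v_N$ and the band $A$: both are built from the extremes of the \emph{same} noise sample, so one cannot treat the alignment event as a free cap-coverage problem at face value. Conditioning on $\mathcal{F}_\xi$ is exactly what severs this dependence, because by Assumption~\ref{ass:A3} the directions are independent of the noise and therefore remain i.i.d.\ $\rho$ once $v_N$ and $A$ are frozen. The secondary difficulty is quantitative—ensuring the band carries enough points for the cone to be hit almost surely in the limit—which is handled by the Mills-ratio tail estimate together with the freedom, afforded by the $O_p(1/q_N)$ scaling of $M_N-q_N$, to let the slack $\omega_N$ grow slowly enough that $c_N-\omega_N\to\infty$.
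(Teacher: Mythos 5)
Your proof follows the same core mechanism as the paper's: condition on $\sigma(\xi_1,\dots,\xi_N)$ so that the band, $M_N$, and $v_N$ are frozen while the directions remain i.i.d.\ $\rho$, then bound the conditional miss probability by $\exp(-Kp_N)$ and show the exponent diverges via a Chernoff-controlled count at a deterministic threshold times a spherical-cap estimate. Where you genuinely differ is the anchoring of that deterministic threshold. You anchor from \emph{above} ($\tau_N=q_N+\epsilon_N-\delta_N\ge M_N-\delta_N$ on $\{M_N\le q_N+\epsilon_N\}$), so every counted point really lies in the band $\{\xi_i\ge M_N-\delta_N\}$ demanded by the statement. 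The paper anchors from \emph{below} ($t_N=q_N-2\delta_N\le M_N-\delta_N$ on $E_N$), which makes the hypothesis transfer immediate (since $\bar{F}(t_N)\ge\bar{F}(M_N-\delta_N)$, there is no exponential loss), but the aligned index it exhibits is then only guaranteed to satisfy $\xi_j\ge t_N$; the paper's closing inference from $t_N\le M_N-\delta_N$ to $\xi_j\ge M_N-\delta_N$ runs the inequality in the wrong direction, so your anchoring is the one that actually delivers the stated band membership.

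The gap in your argument is the ``absorption'' of $e^{-\omega_N}$. Your bound is, up to constants, $Kp_N\ge(\lambda_N/2)p_N$ with $\lambda_N\asymp e^{c_N-\omega_N}$, hence $Kp_N\gtrsim e^{-2\omega_N}H_N$, where $H_N:=N\,\mathbb{P}(\xi\ge M_N-\delta_N)\,p_N$ is the hypothesis quantity (a second factor $e^{-\omega_N}$ appears when relating $\bar{F}(\tau_N)$ to $\bar{F}(M_N-\delta_N)$, which also needs the two-sided bound $|M_N-q_N|\le\omega_N/q_N$). The hypothesis only gives $H_N\xrightarrow{p}\infty$ at an unspecified, possibly arbitrarily slow, rate, while your stated constraints on the slack, $\omega_N\to\infty$ and $\omega_N=o(c_N)$, do not tie $\omega_N$ to that rate at all. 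Concretely, if $\eta_N$ is such that $H_N\asymp\log\log N$ and you take $\omega_N=\log\log N$ (legitimate under your constraints once $c_N\gg\log\log N$), then $e^{-2\omega_N}H_N\to 0$ and your exponent does not diverge. The repair is to choose the slack adaptively: since $H_N\xrightarrow{p}\infty$, extract a deterministic sequence $T_N\to\infty$ with $\mathbb{P}(H_N\ge T_N)\to 1$, then set $\omega_N=\min\{\tfrac14\log T_N,\sqrt{c_N}\}$, which satisfies $\omega_N\to\infty$, $\omega_N=o(c_N)$, and $e^{-2\omega_N}T_N\to\infty$. A second, smaller slip: your final exponent uses the deterministic cap bound $\underline{p}(\eta_N)$, but the hypothesis controls the random cap mass $p_N$, and $\rho$ is only assumed bounded \emph{below} (not above), so divergence of $e^{c_N}p_N$ does not imply divergence of $e^{c_N}\underline{p}(\eta_N)$; you should carry $p_N$ itself through the exponent and conclude via $(\lambda_N/2)p_N\xrightarrow{p}\infty$ together with dominated convergence, exactly as the paper handles its random exponent $Z_N$. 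With these two repairs, your conditioning step, Chernoff count, Mills-ratio estimate, and the ``moreover'' computation are all correct and complete the proof.
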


\begin{proof}
Define $p_{\mathrm{cap},N}:=\rho\{u: v_N^\top u\ge 1-\eta_N\}$ and the event
\begin{align}
    E_N:=\{M_N \ge q_N-\delta_N\}.
\end{align}
By Lemma~\ref{lem:maxima}, $M_N-q_N\to 0$ in probability and $\delta_N\to 0$, hence $\mathbb{P}(E_N)\to 1$.

Set the deterministic threshold $t_N:=q_N-2\delta_N$. On $E_N$ we have $M_N-\delta_N\ge q_N-2\delta_N=t_N$, hence by monotonicity of the tail,
\begin{equation}
\label{eq:tail-monotone}
\mathbb{P}(\xi\ge t_N)\;\ge\;\mathbb{P}(\xi\ge M_N-\delta_N)\quad\text{on }E_N.
\end{equation}
Let
\begin{align}
    K_N:=\sum_{i=1}^N \mathbf{1}\{\xi_i\ge t_N\}\sim\mathrm{Binomial}\!\big(N,\,p_N\big),
\qquad p_N:=\mathbb{P}(\xi\ge t_N).
\end{align}
A Chernoff bound gives
\begin{equation}
\label{eq:Chernoff-K}
\mathbb{P}\!\left(K_N<\tfrac12 Np_N\right)\;\le\;\exp\!\left(-\tfrac18 Np_N\right).
\end{equation}
Consider
\begin{align}
    L_N\;:=\;\sum_{i=1}^N \mathbf{1}\{\xi_i\ge t_N\}\,\mathbf{1}\{v_N^\top U_i\ge 1-\eta_N\}.
\end{align}
Conditional on $\{\xi_i\}_{i=1}^N$ (and hence on $K_N$ and $v_N$), the variables $\{\mathbf{1}\{v_N^\top U_i\ge 1-\eta_N\}\}_{i=1}^N$ are i.i.d.\ Bernoulli$(p_{\mathrm{cap},N})$ and independent of $\{\xi_i\}$. Therefore,
\begin{equation}
\label{eq:LN-binomial}
L_N\,\big|\,\{\xi_i\}\;\sim\;\mathrm{Binomial}\!\big(K_N,\,p_{\mathrm{cap},N}\big),
\qquad
\mathbb{P}\big(L_N=0\,\big|\,\{\xi_i\}\big)\;=\;(1-p_{\mathrm{cap},N})^{K_N}\;\le\;\exp\!\big(-K_N p_{\mathrm{cap},N}\big).
\end{equation}

We now bound $\mathbb{P}(L_N=0)$ by splitting on $\{K_N\ge \tfrac12 Np_N\}$ and $E_N$:
\begin{align}
\mathbb{P}(L_N=0)
&\le \mathbb{E}\!\left[\exp\!\big(-K_N p_{\mathrm{cap},N}\big)\,\mathbf{1}_{\{K_N\ge \frac12 Np_N\}\cap E_N}\right]
\;+\;\mathbb{P}\!\left(K_N<\tfrac12 Np_N\right)\;+\;\mathbb{P}(E_N^c) \nonumber\\
&\le \mathbb{E}\!\left[\exp\!\big(-\tfrac12 Np_N\,p_{\mathrm{cap},N}\big)\,\mathbf{1}_{E_N}\right]
\;+\;\exp\!\big(-\tfrac18 Np_N\big)\;+\;o(1) \nonumber\\
&\le \mathbb{E}\!\left[\exp\!\big(-\tfrac12 Np_N\,p_{\mathrm{cap},N}\big)\right]
\;+\;\exp\!\big(-\tfrac18 Np_N\big)\;+\;o(1). \label{eq:three-terms-corrected}
\end{align}
On $E_N$, \eqref{eq:tail-monotone} yields $p_N \ge \mathbb{P}(\xi\ge M_N-\delta_N)$. By the lemma’s hypothesis,
\[
Z_N:=N p_N\,p_{\mathrm{cap},N} \;\ge\; N\,\mathbb{P}(\xi\ge M_N-\delta_N)\,p_{\mathrm{cap},N}\;\xrightarrow{p}\;\infty.
\]
Since $0\le e^{-\frac12 Z_N}\le 1$, for any fixed $T>0$,
\[
\mathbb{E}\!\left[e^{-\frac12 Z_N}\right]
\le e^{-\frac12 T}+\mathbb{P}(Z_N\le T)\;\longrightarrow\; e^{-\frac12 T}.
\]
Letting $T\to\infty$ gives $\mathbb{E}[e^{-\frac12 Z_N}]\to 0$. Moreover, $Z_N\le Np_N$ since $p_{\mathrm{cap},N}\le 1$. If $Np_N$ were bounded along a subsequence, then $Z_N$ would be bounded along that subsequence, contradicting $Z_N\xrightarrow{p}\infty$.
Hence $Np_N\to\infty$, so $\exp(-\tfrac18 Np_N)\to 0$. Therefore the second term in \eqref{eq:three-terms-corrected} vanishes, and $\mathbb{P}(L_N=0)\to 0$.

Therefore, with probability tending to one there exists $j$ such that $\xi_j\ge t_N$ and $v_N^\top U_j\ge 1-\eta_N$. Since $t_N\le M_N-\delta_N$ on $E_N$, this $j$ also satisfies $\xi_j\ge M_N-\delta_N$. Finally,
\(
S_j-Y_j^{(0)}=b_j(M_N-\xi_j)\le b_{\max}\delta_N
\)
holds deterministically by $M_N-\xi_j\le\delta_N$ and the definition of $b_{\max}$.
\end{proof}

\subsection{Proof of Theorem \ref{thm:main-ucb}}

\begin{proof}\;
\\
By Lemma~\ref{lem:taylor},
\begin{align}
Y_i
&= \mu(e)+\varepsilon g^\top u_i + r_\mu(\varepsilon,u_i)
+ \big(\sigma(e)+\varepsilon h^\top u_i + r_\sigma(\varepsilon,u_i)\big) \xi_i \nonumber\\
&= a_i + b_i \xi_i + R_i, \label{eq:Y-decomp}
\end{align}
where $a_i:=\mu(e)+\varepsilon g^\top u_i$, $b_i:=\sigma(e)+\varepsilon h^\top u_i$, and
\(
R_i:= r_\mu(\varepsilon,u_i) + \xi_i r_\sigma(\varepsilon,u_i).
\)
Then $|R_i|\le C\varepsilon^2 (1+|\xi_i|)$ for all $i$.
Define \(S_i:=a_i+b_i M_N\), \(b_{\max}:=\max_{1\le i\le N}|b_i|\), and \(U_i:=u_i\).

\noindent Now let $Y_i^{(0)}:=a_i+b_i \xi_i$ and $i_0\in\arg\max_i Y_i^{(0)}$. From \eqref{eq:Y-decomp},
\[
Y_{i^\star} \ge Y_{i_0} \implies
Y_{i^\star}^{(0)} \ge Y_{i_0}^{(0)} - |R_{i^\star}| - |R_{i_0}|.
\]
By the bound on $R_i$ and $|\xi_i|\le M_N:=\max_j \xi_j$,
\begin{equation}
\label{eq:delta-eps}
Y_{i^\star}^{(0)} \ge \max_i Y_i^{(0)} - 2C\varepsilon^2(1+M_N).
\end{equation}
Symmetrically, $\max_i Y_i^{(0)} \ge Y_{i^\star}^{(0)} \ge \max_i Y_i^{(0)} - 2C\varepsilon^2(1+M_N)$.
By Lemma~\ref{lem:argmax-stability}, $i^\star$ is a $2C\varepsilon^2(1+M_N)$-approximate maximizer of $Y^{(0)}$.

\noindent Now let $j$ be the index from Lemma~\ref{lem:thin-band} so that $\xi_j\ge M_N-\delta_N$ and $v_N^\top U_j\ge 1-\eta_N$.
By Lemma~\ref{lem:taylor},
\[
\max_{i\le N} S_i - S_j
\;\le\;
\varepsilon \|g+M_N h\|\,(1- v_N^\top U_j)\;+\;2C\varepsilon^2(1+M_N)
\;\le\;
\varepsilon \|g+M_N h\|\,\eta_N\;+\;2C\varepsilon^2(1+M_N).
\]
Also,
\[
S_j - S_{i^\star}
= (S_j-Y_j^{(0)}) + (Y_j^{(0)}-Y_{i^\star}^{(0)}) + (Y_{i^\star}^{(0)}-S_{i^\star})
\;\le\;
b_{\max}\delta_N \;+\; 2C\varepsilon^2(1+M_N)\;+\;\underbrace{(Y_{i^\star}^{(0)}-S_{i^\star})}_{\le 0},
\]
where the last term is $\le 0$ provided $b_i\ge 0$ for all $i$, which holds by Assumption~A4 for $\varepsilon\le \varepsilon_0:=\sigma(e)/(2\|h\|)$ when $\sigma(e)>0$ since
$b_i=\sigma(e)+\varepsilon h^\top U_i \ge \sigma(e)-\varepsilon\|h\|\ge \tfrac12\sigma(e)>0$.
Therefore,
\begin{equation}
\label{eq:maxS-gap}
\max_{i\le N} S_i - S_{i^\star}
\;\le\;
\varepsilon \|g+M_N h\|\,\eta_N \;+\; b_{\max}\delta_N \;+\; 4C\varepsilon^2(1+M_N).
\end{equation}

\noindent
Set:
\[
\Delta'_N(\varepsilon)\;:=\; b_{\max}\,\frac{c_N}{q_N} \;+\; 4C\varepsilon^2(1+M_N).
\]

\noindent 
By Lemma~\ref{lem:taylor},
\[
S_i \;=\; \mu(e) + M_N \sigma(e) + \varepsilon (g+M_N h)^\top u_i + \tilde r_i,
\qquad |\tilde r_i|\le C\varepsilon^2(1+M_N).
\]
Hence for any \(i\),
\[
S_i - S_{i^\star}
= \varepsilon \|g+M_N h\|\,(v_N^\top U_i - v_N^\top \widehat u_N) + (\tilde r_i - \tilde r_{i^\star}),
\]
so
\begin{equation}
\label{eq:two-sided}
\big|\, (S_i - S_{i^\star}) - \varepsilon \|g+M_N h\|\,(v_N^\top U_i - v_N^\top \widehat u_N)\,\big|
\;\le\; 2C\varepsilon^2(1+M_N).
\end{equation}
Taking \(i\) that attains \(\max_i S_i\) and using \eqref{eq:maxS-gap} yields
\begin{equation}
\label{eq:gap-bound-final}
\max_{i\le N} v_N^\top U_i - v_N^\top \widehat u_N
\;\le\;
\eta_N
\;+\;
\frac{\Delta'_N(\varepsilon)}{\varepsilon \|g+M_N h\|}.
\end{equation}
Thus
\begin{equation}
\label{eq:angle-split-final}
1 - v_N^\top \widehat u_N
\;\le\;
\underbrace{1-\max_{i\le N} v_N^\top U_i}_{\to 0\ \text{a.s.\ by Lemma \ref{lem:spherical-cap}}}
\;+\;
\eta_N
\;+\;
\frac{\Delta'_N(\varepsilon)}{\varepsilon \|g+M_N h\|}.
\end{equation}

Assume $\|h\|>0$ (the $\|h\|=0$ case is addressed in Remark~\ref{rem:edge}). Using
\[
\|g+M_N h\| \;\ge\; M_N\|h\| - \|g\|
\]
and Lemma~\ref{lem:maxima}, $M_N\to\infty$ and $M_N\asymp q_N$. Also $b_{\max} \le \sigma(e) + \varepsilon \|h\| + C \varepsilon^2$. Hence
\[
\frac{\Delta'_N(\varepsilon)}{\varepsilon\|g+M_N h\|}
= O_p\!\Big(\tfrac{c_N}{\varepsilon q_N^2\|h\|}\Big)+O\!\Big(\tfrac{\varepsilon}{\|h\|}\Big).
\]
Choosing, e.g., $c_N=\tfrac12\log\log N$ and any $\eta_N\downarrow 0$ with
\[
N\,e^{c_N}\,\rho\{u: v_N^\top u \ge 1-\eta_N\}\to\infty
\]
makes the RHS of \eqref{eq:angle-split-final} equal to $o_p(1)+O(\varepsilon)$ for fixed $\varepsilon>0$; then let $\varepsilon\downarrow 0$.

\noindent 
With $N\to\infty$ first and fixed $\varepsilon>0$, \eqref{eq:angle-split-final} implies $1 - v_N^\top \widehat u_N \xrightarrow{p} 0$. Then $\varepsilon\downarrow 0$ removes the $O(\varepsilon)$ residual, yielding $v_N^\top \widehat u_N \xrightarrow{p} 1$. Since $M_N-q_N\to 0$ in probability by Lemma~\ref{lem:maxima}, the unit vectors
\(
v_N=(g+M_N h)/\|g+M_N h\|
\)
and
\(
\tilde v_N:=(g+q_N h)/\|g+q_N h\|
\)
satisfy $\|v_N-\tilde v_N\|\to 0$ in probability. Hence $\widehat u_N \xrightarrow{p} \tilde v_N$, which is the claimed gradient direction of $A_{\beta_N}$ with $\beta_N=q_N$.

\noindent  
By Lemma~\ref{lem:taylor},
\[
A_{\beta_N}(e+\Delta e^{(N)}) = A_{\beta_N}(e) + \varepsilon \nabla A_{\beta_N}(e)^\top \widehat u_N + O(\varepsilon^2).
\]
Write $\nabla A_{\beta_N}(e) = g+q_N h$. Using $\tilde v_N^\top \widehat u_N \xrightarrow{p} 1$ gives
\(
\nabla A_{\beta_N}(e)^\top \widehat u_N
= \|\nabla A_{\beta_N}(e)\|\, \tilde v_N^\top \widehat u_N
= \|\nabla A_{\beta_N}(e)\|(1 - o_p(1)).
\)
Thus
\(
A_{\beta_N}(e+\Delta e^{(N)})
\ge A_{\beta_N}(e) + \varepsilon \|\nabla A_{\beta_N}(e)\| - o_p(\varepsilon),
\)
as claimed.
\end{proof}

\begin{remark}[Edge cases]\label{rem:edge} If $\|h\|=0$, then $\nabla A_{\beta_N}(e)=g$ and the direction reduces to $\nabla\mu(e)$. The Bayesian Optimization setup does not allow $\sigma(e)=0$. 
\end{remark}

\clearpage
\section{Appendix for Section~\ref{sec:case}}

\subsection{Creative Briefs for the Scenarios}
\label{ssec:scenarios}

Below are the creative briefs for the eight ad campaigns. Each creative brief outlines the strategic and creative direction for each scenario. The selected campaigns cover a diverse range of products across distinct categories. Each campaign addresses a unique consumer need—whether related to sustainability, luxury, technology, or lifestyle—ensuring comprehensive coverage of various market segments. From environmentally-conscious millennials to affluent, experience-driven travelers, the approach accounts for the full spectrum of consumer interests. By focusing on different consumer needs, the structure enables a holistic marketing strategy that captures both niche and broad demands, providing a well-rounded framework for the study.

\subsubsection{Scenario 1: GreenBite Plant-Based Burger}
\squishlist
    \item {\it Product:} "GreenBite," a new plant-based burger patty.
    \item {\it Background:} The market for plant-based food is growing, but many consumers remain skeptical about taste, believing they must sacrifice flavor for health.
    \item {\it The Challenge:} Convince flexitarians that GreenBite offers the juicy, satisfying experience of a traditional beef burger with no compromise.
    \item {\it Core Insight:} Consumers want to eat better for themselves and the planet, but they fear it means giving up the foods they love.
    \item {\it Single-Minded Proposition (SMP):} The delicious, no-compromise burger experience that's better for you and the planet.
    \item {\it Reasons to Believe (RTB):} Made with savory pea protein; sears and tastes like real beef; free of soy and GMOs.
    \item {\it Desired Response:} {\it Think}: "I can finally have a plant-based burger that tastes like the real thing.", {\it Feel:} Satisfied, vibrant, and guilt-free. {\it Do:} Purchase GreenBite for their next barbecue.
    \item {\it Brand Personality \& Tone:} Positive, confident, and mouth-watering.
\squishend

\subsubsection{Scenario 2: AuraSonics X1 Earbuds}
\squishlist
    \item {\it Product:} "AuraSonics X1," high-end, noise-canceling wireless earbuds.
    \item {\it Background:} The rise of hybrid work and urban density has increased daily auditory overload, making focus a precious commodity.
    \item {\it The Challenge:} Cut through the saturated audio market by positioning the X1 not as a gadget, but as an essential tool for mental clarity.
    \item {\it Core Insight:} In a world of constant noise, true luxury is the ability to control your own soundscape.
    \item {\it Single-Minded Proposition (SMP):} AuraSonics X1 creates a personal sanctuary for focus and immersion.
    \item {\it Reasons to Believe (RTB):} Adaptive Active Noise Cancellation; studio-quality audio; all-day battery life; minimalist aluminum design.
    \item {\it Desired Response:} {\it Think:} "This is the solution I need to escape the daily chaos." {\it Feel:} Calm, empowered, and sophisticated. {\it Do:} Visit the website to learn more.
    \item {\it Brand Personality \& Tone:} Minimalist, intelligent, and calm.
\squishend

\subsubsection{Scenario 3: Odyssey E-SUV}
\squishlist
    \item {\it Product:} The "Odyssey E-SUV," a new all-electric family SUV.
    \item {\it Background:} Many families want to switch to electric vehicles but are concerned about sacrificing space, safety, and range for sustainability.
    \item {\it The Challenge:} Position the Odyssey E-SUV as the first EV that meets all the practical and adventurous needs of a modern family, making the switch to electric feel like an upgrade, not a compromise.
    \item {\it Single-Minded Proposition (SMP):} The future of family adventure is here: all-electric, zero compromise.
    \item {\it Reasons to Believe (RTB):} 500km range; top-tier safety rating; spacious three-row seating; all-wheel drive capability.
    \item {\it Desired Response:} {\it Think:} "An electric car can actually handle my family's active lifestyle." {\it Feel:} Adventurous, optimistic, and secure. {\it Do:} Schedule a test drive.
    \item {\it Brand Personality \& Tone:} Inspiring, capable, and forward-thinking.
\squishend

\subsubsection{Scenario 4: Oasis Eco-Lodge}
\squishlist
    \item {\it Product:} "Oasis Eco-Lodge," a secluded, luxury resort operating in harmony with its natural surroundings.
    \item {\it Background:} The luxury travel market often equates opulence with excess. A growing segment of affluent travelers seeks experiences that are both exclusive and responsible.
    \item {\it The Challenge:} Redefine luxury as a seamless integration with nature, promising an experience that is restorative for both the guest and the environment.
    \item {\it Core Insight:} For those who have everything, true luxury is not more things, but a deeper connection to something pure and serene.
    \item {\it Single-Minded Proposition (SMP):} Rediscover tranquility in a luxury that respects nature.
    \item {\it Reasons to Believe (RTB):} Secluded private bungalows; farm-to-table dining with locally sourced ingredients; carbon-neutral operations.
    \item {\it Desired Response:} {\it Think:} "This is a truly special escape, not just another five-star hotel." {\it Feel:} Serene, exclusive, and rejuvenated. {\it Do:} Book a stay.
    \item {\it Brand Personality \& Tone:} Elegant, peaceful, and understated.
\squishend

\subsubsection{Scenario 5: Momentum Digital Banking}
\squishlist
    \item {\it Product:} "Momentum," a mobile-first banking app for freelancers and the gig economy.
    \item {\it Background:} Traditional banks are not built for the fluctuating incomes and unique business needs of self-employed professionals.
    \item {\it The Challenge:} Position Momentum as the essential financial co-pilot for the independent worker, simplifying complexity and providing stability.
    \item {\it Core Insight:} Freelancers love their freedom but feel anxious about their financial instability. They need a tool that brings order to their financial chaos.
    \item {\it Single-Minded Proposition (SMP):} Banking that is as flexible and entrepreneurial as you are.
    \item {\it Reasons to Believe (RTB):} Automated tax-saving tools; integrated invoicing and payment tracking; instant business expense categorization.
    \item {\it Desired Response:} {\it Think:} "Finally, a bank that gets the way I work." {\it Feel:} Empowered, organized, and financially confident. {\it Do:} Download the app and open an account.
    \item {\it Brand Personality \& Tone:} Modern, empowering, and simple.
\squishend

\subsubsection{Scenario 6: MindGarden Meditation App}
\squishlist
    \item {\it Product:} "MindGarden," a subscription-based meditation and mindfulness app.
    \item {\it Background:} While many are interested in mindfulness for stress-relief, they are often intimidated by the practice, seeing it as difficult or time-consuming.
    \item {\it The Challenge:} Make meditation feel accessible and achievable for absolute beginners, removing the barriers of time and perceived difficulty.
    \item {\it Core Insight:} People want the benefits of mindfulness but are convinced they don't have the time or ability to practice it correctly.
    \item {\it Single-Minded Proposition (SMP):} Find your calm in just 5 minutes a day.
    \item {\it Reasons to Believe (RTB):} Guided 5-minute sessions for specific needs (anxiety, focus); progress tracking to build a habit; simple, jargon-free instructions.
    \item {\it Desired Response:} {\it Think:} "I can do this. 5 minutes is easy." {\it Feel:} Calm, supported, and hopeful. {\it Do:} Start a free trial.
    \item {\it Brand Personality \& Tone:} Gentle, approachable, and encouraging.
\squishend

\subsubsection{Scenario 7: Aeterno Watch}
\squishlist
    \item {\it Product:} The "Aeterno," a classic, automatic Swiss-made wristwatch with a heritage design.
    \item {\it Background:} In an age of smartwatches that are obsolete in a few years, there is a renewed appreciation for objects with permanence and enduring value.
    \item {\it The Challenge:} Reassert the relevance of the classic mechanical watch as a symbol of taste, craftsmanship, and timelessness.
    \item {\it Core Insight:} In a disposable world, true status comes from owning something permanent that tells a story.
    \item {\it Single-Minded Proposition (SMP):} A legacy on your wrist. Craftsmanship that transcends time.
    \item {\it Reasons to Believe (RTB):} Swiss-made automatic movement; sapphire crystal glass; timeless design inspired by 1950s classics.
    \item {\it Desired Response:} {\it Think:} "This is a beautiful object that I will own forever." {\it Feel:} Prestigious, sophisticated, and discerning. {\it Do:} Locate an authorized dealer.
    \item {\it Brand Personality \& Tone:} Elegant, timeless, and confident.
\squishend

\subsubsection{Scenario 8: SyncFlow B2B Software}
\squishlist
    \item {\it Product:} "SyncFlow," a project management and collaboration software platform for remote teams.
    \item {\it Background:} Remote work has increased flexibility but also created challenges in team alignment, communication, and project visibility.
    \item {\it The Challenge:} Cut through a crowded SaaS market by focusing on the outcome of "effortless collaboration" rather than just listing features.
    \item {\it Core Insight:} Managers don't want another tool to manage; they want the feeling of clarity and momentum that comes from a team working in perfect sync.
    \item {\it Single-Minded Proposition (SMP):} Bring your remote team together for effortless collaboration and remarkable results.
    \item {\it Reasons to Believe (RTB):} Centralized dashboards with real-time project status; integrated communication channels; automated workflows and reporting.
    \item {\it Desired Response:} {\it Think:} "This could finally solve our remote work chaos and get everyone on the same page." {\it Feel:} Organized, in control, and successful. {\it Do:} Sign up for a team demo.
    \item {\it Brand Personality \& Tone:} Professional, efficient, and innovative.
\squishend

\clearpage

\subsection{Prompts for Initial Ad Generation}
\label{appssec:initial_prompts}
\begin{figure}[ht]
\centering
\begin{tcolorbox}[
    colback=promptbg!100!white,
    sharp corners=south, 
    boxrule=0.25mm, 
    fonttitle=\bfseries, 
    title={\textbf{Meta-prompt used for generating prompts for initial 64 ads generation}}, 
    width=\textwidth, 
    enhanced,
    drop shadow
    ]
\scriptsize
\setlength{\parskip}{0pt}
\setlength{\itemsep}{0pt}
\textbf{CREATIVE BRIEF:}\\
\noindent \texttt{\{creative$\_$brief\}}
\medskip
\\
Based on this creative brief, generate a creative, structured, and descriptive prompt for a generative AI model (e.g., Imagen) that will produce a brand-aligned and scroll-stopping advertisement image suitable for an Instagram feed.\\
A successful prompt must be constructed using the following components and principles:\\
0. (Most important!) Prompt NEVER includes any description of kids. Kids are not allowed to generate in Imagen 4. Even if creative\_brief includes description of "family" or "kids", never include kids in the prompt. Only create adults if humans are included.\\
1. Key Message (The "Why"): This is the foundational element that guides all other components. It defines the core idea or feeling the ad must communicate. Before writing the rest of the prompt, clearly articulate the message the ad will deliver. This message will act as the "North Star" for all subsequent creative choices. It may be desirable to put the message as the text overlay.\\
2. Core Components (The "What"):\\
- This should completely depend on the key message.\\
- Scene \& Environment: Based on the key message, establish a clear, relatable setting that aligns with the brand's lifestyle appeal.\\
- Action \& Narrative: Based on the key message and the scene, describe a dynamic but clear action or interaction to create a sense of a captured moment and tell a micro-story.\\
- Composition \& Framing: Specify the camera shot, angle, and framing (e.g., "low-angle shot," "dynamic medium shot," "close-up on the shoe").\\
3. Stylistic Qualities (The "How"):\\
- This should completely depend on the key message and the scene.\\
- Photography Style: Define the overall aesthetic (e.g., "photorealistic," "cinematic," "professional product photography," "lifestyle action shot").\\
- Lighting: Be specific about the lighting to set the mood (e.g., "warm golden hour light," "bright morning sun," "dramatic side-lighting").\\
- Color Palette \& Tone: Guide the color scheme and emotional feel (e.g., "vibrant and energetic colors," "empowering and motivational tones," "clean and modern palette").\\
- Atmosphere \& Feeling: Aim to evoke a specific feeling aligned with the brand (e.g., "a feeling of effortless performance," "an atmosphere of vibrant energy," "a sense of supreme comfort").\\
4. What to Avoid:\\
- Vagueness: Use specific, descriptive terms instead of "nice" or "good."\\
- Contradictory Elements: Ensure all elements work together harmoniously.\\
- Over-stuffing: Focus on a single, clear message without too many competing objects.\\
5. What to include:\\
- Logo: create a logo based on creative brief, and naturally place it.\\
- Text: If you are including a text message, prompt for "negative space for text overlay".
\end{tcolorbox}

\caption{Meta-prompt for generating the prompts for the initial 64 ads.}
\label{fig:init_gen_prompt}
\end{figure}


\begin{figure}[ht]
\centering
\begin{tcolorbox}[
    colback=promptbg!100!white,
    sharp corners=south, 
    boxrule=0.25mm, 
    fonttitle=\bfseries, 
    title={\textbf{Worst-of-64 prompt for GreenBite (plant-based burger patty) ad scenario.}}, 
    width=\textwidth, 
    enhanced,
    drop shadow
    ]
\scriptsize
\setlength{\parskip}{0pt}
\setlength{\itemsep}{0pt}
**Key Message:** To visually communicate the core idea: "Experience the juicy, no-compromise satisfaction of a real burger, made better for you and the planet."\\
**Image Generation Prompt:** A dynamic, commercial lifestyle photograph for an Instagram feed ad.\\
**Core Components:**\\
* **Scene \& Environment:** A stylish, modern outdoor patio during a late afternoon barbecue. The background is softly blurred, showing contemporary patio furniture, lush green potted plants, and the warm glow of string lights just beginning to turn on.\\
* **Subject \& Action:** The focal point is a vibrant, stylish woman in her early thirties, captured in a candid moment of pure satisfaction. She is taking a bite of a huge, delicious-looking plant-based burger. Her eyes are closed in bliss, and a small, genuine smile plays on her lips, conveying an authentic "this is amazing" reaction. Juice from the burger is subtly visible on her hand to emphasize its succulence. She is dressed in a casual but chic linen shirt.\\
* **Product:** The GreenBite burger is the hero. It's a thick, perfectly seared patty with grill marks, nestled in a glossy brioche bun. It’s layered with glistening melted cheddar cheese, crisp green lettuce, a thick slice of ripe red tomato, and caramelized onions spilling out slightly. The burger is partially wrapped in a simple, brown kraft paper sheet which has a small, clean logo on it: "GreenBite" in a modern sans-serif font next to a stylized green leaf.\\
* **Composition \& Framing:** A medium close-up shot, angled slightly upwards to make the moment feel triumphant and empowering. The camera uses a shallow depth of field, keeping the woman’s expression and the burger in sharp focus while the background is beautifully out of focus (bokeh). The composition is balanced, with a clear area of soft background on the upper right side, creating negative space for a text overlay.\\
**Stylistic Qualities:**\\
* **Photography Style:** High-end, photorealistic commercial food photography with a warm, authentic lifestyle feel. The image should be incredibly sharp and detailed, especially the texture of the burger patty and the fresh ingredients.\\
* **Lighting:** Bathed in the warm, dreamy glow of the golden hour. The sunlight comes from the side, creating soft highlights and long, gentle shadows, emphasizing the textures of the food and giving her skin a radiant look.\\
* **Color Palette \& Tone:** A rich and mouth-watering color palette. Saturated and vibrant, but natural. Warm tones of orange, gold, and red from the sun and the food are balanced by the deep greens of the background foliage and the crispness of the lettuce.\\
* **Atmosphere \& Feeling:** The image radiates a feeling of guilt-free indulgence, vibrant well-being, and ultimate satisfaction. The atmosphere is confident, positive, and irresistibly appetizing, making the viewer crave that exact experience.\\

\end{tcolorbox}

\begin{tcolorbox}[
    colback=promptbg!100!white,
    sharp corners=south, 
    boxrule=0.25mm, 
    fonttitle=\bfseries, 
    title={\textbf{Best-of-64 prompt for GreenBite (plant-based burger patty) ad scenario.}}, 
    width=\textwidth, 
    enhanced,
    drop shadow
    ]
\scriptsize
\setlength{\parskip}{0pt}
\setlength{\itemsep}{0pt}
**1. Key Message (The "Why"):** To visually capture the peak moment of guilt-free indulgence, proving that a plant-based burger can deliver an unbelievably juicy, satisfying, and no-compromise flavor experience identical to a classic beef burger. The text overlay will read: "The No-Compromise Burger."\\
**2. Core Components (The "What"):**\\
* **Scene \& Environment:** A vibrant, sun-drenched backyard barbecue scene, shot with a very shallow depth of field. The background is a beautiful, soft-focus bokeh of friends laughing (adults only, 25-40 age range), string lights, and lush greenery, suggesting a warm, sociable atmosphere without distracting from the main subject.\\
* **Action \& Narrative:** A stylish woman in her early 30s, with a look of pure, blissful satisfaction, is taking the first, epic bite of a burger. Her eyes are closed in enjoyment. Her hands grip the burger firmly but gently. A single, perfect drop of sauce is starting to fall from the side of the burger, caught in mid-air, emphasizing its juiciness.\\
* **Composition \& Framing:** A dramatic, macro close-up, food photography style. The shot is framed tightly on the lower half of the woman's face and her hands holding the burger, making the burger the undeniable hero. The angle is slightly low to give the burger an epic, monumental feel. The right side of the frame has clean, out-of-focus background elements, creating intentional negative space for a text overlay.\\
**3. Stylistic Qualities (The "How"):**\\
* **Photography Style:** Hyper-realistic, professional commercial food photography. Every detail is rendered with extreme clarity: the char marks on the patty, the condensation on the lettuce, the glistening melt of the plant-based cheese, and the texture of the toasted brioche bun. Shot with a high-end DSLR camera and a macro lens.\\
* **Lighting:** Warm, radiant golden hour lighting. The sun acts as a dramatic backlight, creating a halo effect around the burger and the woman’s hands, and creating glistening specular highlights on the juicy patty and sauce. The lighting is bright, positive, and makes the food look incredibly appetizing.\\
* **Color Palette \& Tone:** A mouth-watering and vibrant palette. Rich, savory browns of the seared "GreenBite" patty, deep reds of a ripe tomato slice, vibrant green of crisp lettuce, and a warm golden-yellow of the melted cheese and toasted bun. The overall tone is confident, delicious, and energetic.\\
* **Atmosphere \& Feeling:** The atmosphere is one of ultimate satisfaction and pure, unadulterated foodie bliss. It feels warm, authentic, and completely satisfying, eliminating any doubt about the quality and taste of the plant-based option.\\
**4. Inclusions \& Details:**\\
* **The Burger Details:** The GreenBite patty is thick, with a perfect, glistening sear and visible char marks, looking indistinguishable from high-quality ground beef. It's topped with melting vegan cheddar cheese, a leaf of crisp butter lettuce, a thick slice of heirloom tomato, and a creamy aioli sauce.\\
* **Logo Placement:** The burger is partially wrapped at the base with a small square of branded, unbleached butcher paper. The "GreenBite" logo—a simple, modern green leaf icon next to the word "GreenBite" in a clean sans-serif font—is subtly visible on the paper.\\
* **Negative Space:** Ensure the composition leaves a clean, uncluttered area of soft-focus background on the right third of the image for text overlay.\\

\end{tcolorbox}

\caption{Prompts for the best and worst ad among the initial 64 ads generated using the meta-prompt in Figure \ref{fig:init_gen_prompt}.}
\label{fig:best_ad_prompt}
\end{figure}

\clearpage

\subsection{Prompts used for Critic LLM Operators}
\label{appssec:critic_llm_prompts}

\begin{figure}[ht]
\centering
\begin{tcolorbox}[
    colback=promptbg!100!white,
    sharp corners=south, 
    boxrule=0.25mm, 
    fonttitle=\bfseries, 
    title={\textbf{Best-of-N Gradient steps' pairwise tournament prompt for ads.}}, 
    width=\textwidth, 
    enhanced,
    drop shadow
    ]
\scriptsize
\setlength{\parskip}{0pt}
\setlength{\itemsep}{0pt}
contents = [image1, image2, comparison$\_$prompt]
\\
comparison$\_$prompt = 
\\``You are evaluating two advertisement images for mobile Instagram ads.
Which image would be more effective at engaging users and driving clicks?

RESPOND WITH ONLY THE NUMBER 1 OR 2. NOTHING ELSE.
\\
1 = first image is better
\\
2 = second image is better
''
\end{tcolorbox}

\caption{$M_{\text{critic}}$'s pairwise tournament prompt for digital marketing example.}
\label{fig:tornament_prompt}
\end{figure}

\begin{figure}[ht]
\centering
\begin{tcolorbox}[
    colback=promptbg!100!white,
    sharp corners=south, 
    boxrule=0.25mm, 
    fonttitle=\bfseries, 
    title={\textbf{Meta reflection prompt for ads}}, 
    width=\textwidth, 
    enhanced,
    drop shadow
    ]
\scriptsize
\setlength{\parskip}{0pt}
\setlength{\itemsep}{0pt}

You are an expert at analyzing visual patterns in advertising performance.

\medskip

{VISUAL ANALYSIS TASK:}\\
I will show you images from the lowest-scoring and highest-scoring ad iterations. Your task is to identify specific visual patterns that distinguish effective from ineffective ads.
\medskip

{VISUAL EXAMPLES -- BEST VS WORST PERFORMING:}

\medskip

\noindent {RANK 1/10 WORST PERFORMING (Score: 2.14/5.0):}\\
\noindent {Prompt excerpt: <prompt>...}\\
\noindent [Image]

\medskip

\noindent {RANK 2/10 LOWER HALF (Score: 2.52/5.0):}\\
\noindent {Prompt excerpt: <prompt>...}\\
\noindent [Image]

\medskip
$\ldots$
\medskip

\noindent {RANK 9/10 UPPER HALF (Score: 3.81/5.0):}\\
\noindent {Prompt excerpt: <prompt>...}\\
\noindent [Image]

\medskip

\noindent {RANK 10/10 BEST PERFORMING (Score: 4.15/5.0):}\\
\noindent {Prompt excerpt: <prompt>...}\\
\noindent [Image]

\medskip

Based on your visual analysis, identify patterns that correlate with higher effectiveness scores:
\begin{enumerate}[leftmargin=*, itemsep=0pt, topsep=2pt]
\item Visual composition and framing differences
\item Lighting conditions and mood variations
\item Color palettes and visual tone patterns
\item Subject positioning and action effectiveness
\item Brand integration approaches
\item Environmental and atmospheric elements
\end{enumerate}

\medskip

RESPONSE FORMAT:\\
Provide a structured analysis of visual patterns observed, focusing on what distinguishes high-performing from low-performing ads.

\end{tcolorbox}

\caption{Meta reflection prompt for digital marketing example.}
\label{fig:reflection_prompt}
\end{figure}


\begin{figure}[ht]
\centering

\begin{tcolorbox}[
    colback=promptbg!100!white,
    sharp corners=south,
    boxrule=0.25mm,
    fonttitle=\bfseries,
    title={\textbf{Textual gradient generating meta-prompt for ads (at $t>0$)}},
    width=\textwidth,
    enhanced,
    drop shadow
]
\scriptsize
\setlength{\parskip}{0pt}
\setlength{\itemsep}{0pt}

You are an expert at optimizing image generation prompts for advertising effectiveness.

\medskip

\textbf{CURRENT PROMPT TO IMPROVE:}\\
\noindent \texttt{\{current\_prompt\}}

\medskip

\textbf{PERFORMANCE ANALYSIS FROM PREVIOUS ITERATIONS:}\\
\noindent \texttt{\{performance\_analysis\}}

\medskip

\textbf{TASK:} Based on the performance analysis above, generate specific, actionable improvements to make the current prompt more effective.

Focus on implementing the successful visual patterns identified in the analysis while avoiding the ineffective elements.

\medskip

Provide \textbf{3--5} specific, implementable suggestions for improvement. (It can be addition of a new prompt part, deletion of existing prompt part, or rewriting a prompt part).
Each suggestion should reference insights from the performance analysis.

\medskip

\textbf{RESPONSE FORMAT:}
\begin{enumerate}[leftmargin=*, itemsep=0pt, topsep=2pt]
\item \text{[Specific improvement suggestion based on performance analysis]}
\item \text{[Specific improvement suggestion based on performance analysis]}
\item $\ldots$
\end{enumerate}

Be concrete and actionable. Focus on changes that will meaningfully improve ad effectiveness based on the analysis.

\end{tcolorbox}

\begin{tcolorbox}[
    colback=promptbg!100!white,
    sharp corners=south,
    boxrule=0.25mm,
    fonttitle=\bfseries,
    title={\textbf{Textual gradient generating meta-prompt for ads (at $t=0$)}},
    width=\textwidth,
    enhanced,
    drop shadow
]
\scriptsize
\setlength{\parskip}{0pt}
\setlength{\itemsep}{0pt}

You are an expert at optimizing image generation prompts for advertising effectiveness.

\medskip

\textbf{CURRENT PROMPT TO IMPROVE:}\\
\noindent \texttt{\{current\_prompt\}}

\medskip

\textbf{TASK:} Generate specific, actionable improvements to make this ad prompt more effective.
Focus on elements that will increase engagement and appeal to the target audience.

\medskip

\textbf{Consider improvements in:}
\begin{enumerate}[leftmargin=*, itemsep=0pt, topsep=2pt]
\item Visual composition and framing
\item Emotional appeal and messaging
\item Color palette and lighting
\item Subject positioning and action
\item Brand integration and logo placement
\item Text overlay space and readability
\end{enumerate}

\medskip

Provide \textbf{3--5} specific, implementable suggestions for improvement.
Each suggestion should be concrete and actionable.

\medskip

\textbf{RESPONSE FORMAT:}
\begin{enumerate}[leftmargin=*, itemsep=0pt, topsep=2pt]
\item \text{[Specific improvement suggestion based on performance analysis]}
\item \text{[Specific improvement suggestion based on performance analysis]}
\item $\ldots$
\end{enumerate}

Be concise but specific. Focus on changes that will meaningfully improve ad effectiveness.

\end{tcolorbox}

\caption{Meta prompts for improving ad image-generation prompts (with and without performance analysis).}
\label{fig:prompt_improvement_prompts}
\end{figure}


\begin{figure}[ht]
\centering

\begin{tcolorbox}[
    colback=promptbg!100!white,
    sharp corners=south,
    boxrule=0.25mm,
    fonttitle=\bfseries,
    title={\textbf{Meta-prompt for applying textual gradients}},
    width=\textwidth,
    enhanced,
    drop shadow
]
\scriptsize
\setlength{\parskip}{0pt}
\setlength{\itemsep}{0pt}

You are an expert at revising image generation prompts based on improvement suggestions.

\medskip

\textbf{ORIGINAL PROMPT:}\\
\noindent \texttt{\{current\_prompt\}}

\medskip

\textbf{IMPROVEMENT SUGGESTIONS:}\\
\noindent \texttt{\{gradient\}}

\medskip

\textbf{TASK:} Rewrite the prompt incorporating the improvement suggestions while maintaining the core message and structure.

\medskip

\textbf{REQUIREMENTS:}
\begin{itemize}[leftmargin=*, itemsep=0pt, topsep=2pt]
\item Keep the prompt structure and format similar to the original
\item Integrate the improvement suggestions naturally
\item Maintain coherence and readability
\item Ensure the prompt is optimized for image generation
\item Keep the prompt length reasonable (not too long)
\end{itemize}

\medskip

\textbf{OUTPUT CONSTRAINT:}\\
Return ONLY the revised prompt, no explanations or additional text.

\end{tcolorbox}

\caption{Meta-prompt for applying improvement suggestions to revise an image-generation ad prompt.}
\label{fig:apply_prompt}
\end{figure}

\clearpage

\subsection{Prompt for Evaluation Based on Persona}
\label{appssec:persona_prompt}

\begin{figure}[!ht]
\centering
\begin{tcolorbox}[
    colback=promptbg!100!white,
    sharp corners=south, 
    boxrule=0.25mm, 
    fonttitle=\bfseries, 
    title={\textbf{Persona prompt for simulating ad effectiveness.}}, 
    width=\textwidth, 
    enhanced,
    drop shadow
    ]
\scriptsize
\setlength{\parskip}{0pt}
\setlength{\itemsep}{0pt}

SYSTEM: $\{$
\\
You are an AI assistant. Your task is to answer the TASK as if you are the individual described in the `Persona Profile' (which contains their past survey responses). Remain consistent with the persona's past survey responses and stated characteristics. Carefully follow any instructions provided for the new question, including formatting requirements.
\\
\}

PERSONA DATA: $\{$
\\
Which part of the United States do you currently live in?
\\
Question Type: Single Choice
\\
Options:
\\
1 - Northeast (PA, NY, NJ, RI, CT, MA, VT, NH, ME)
\\
2 - Midwest (ND, SD, NE, KS, MN, IA, MO, WI, IL, MI, IN, OH)
\\
3 - South (TX, OK, AR, LA, KY, TN, MS, AL, WV, DC, MD, DE, VA, NC, SC, GA, FL)
\\
4 - West (WA, OR, ID, MT, WY, CA, NV, UT, CO, AZ, NM)
\\
5 - Pacific (HI, AK)
\\
Answer: 2 - Midwest (ND, SD, NE, KS, MN, IA, MO, WI, IL, MI, IN, OH)
\\
What is the highest level of schooling or degree that you have completed?
\\
Question Type: Single Choice
\\
Options:
\\
1 - Less than high school
\\
2 - High school graduate
\\
3 - Some college, no degree
\\
4 - Associate's degree
\\
5 - College graduate/some postgrad
\\
6 - Postgraduate
\\
Answer: 3 - Some college, no degree
\\
$\ldots$ (Many other survey questions and answers) $\ldots$
\\
Suppose you were given \$5 and had to offer to another (anonymous) person a way to split the money. The other person can either accept or reject your offer. If the
other person accepts your offer, you would each receive the amount you proposed. If the other person rejects your offer, you would both receive 
\$0. How much would you
 offer to the other person?
\\
Question Type: Single Choice
\\
Options:
\\
1 - \$0
\\
2 - \$1
\\
3 - \$2
\\
4 - \$3
\\
5 - \$4
\\
6 - \$5
\\
Answer: 3 - \$2
\\
$\ldots$ (Many other survey questions and answers) $\ldots$
\\
$\}$
\medskip
\\
AD IMAGE: [image]
\medskip
\\
TASK:

Return only one item from ["1","2","3","4","5"] for ad effectiveness.\\
Effective Score Scale Definition:\\
1: Extremely Unlikely. The persona would actively ignore or be annoyed by this ad.\\
2: Unlikely. The persona would likely scroll past without a second thought.
\\
3: Mediocre. It is hard to decide whether the personal would click or don't click.\\
4: Likely. The persona is intrigued and has a good chance of clicking to learn more.\\
5: Extremely Likely. The persona is the ideal target; a click is almost certain.\\
No explanation. Just the score.
\end{tcolorbox}

\caption{Prompt for simulating the effectiveness of a given ad-persona combination.}
\label{fig:persona_prompt}
\end{figure}

\clearpage

\clearpage

\section{Appendix for Section \ref{sec:experiments} }
\label{appsec:experiments}

\subsection{Implementation Details of \textsc{GEPA} for digital ad optimization}
\label{appssec:GEPA_implementation}

\paragraph{Implementation overview}
\textsc{GEPA} \citep{agrawal2025gepa} is an iterative, prompt-optimization-based, self-improving AI method. The key idea of \textsc{GEPA} is to ``combine reflection with evolution''. Here, an evolution is the same concept as taking a textual gradient in \cite{textgrad}, and reflection is the same reflection \textsc{TextBO} (\$\ref{sec:TBoN}) uses. \textsc{GEPA} iterates over three steps in each optimization iteration: Step 1) choosing a prompt and evaluating it; Step 2) updating the meta-reflection; and Step 3) improving the prompt. 

\textsc{GEPA} optimizes the prompt for a set of \textit{problem instances}. In standard agentic benchmarks considered in \cite{agrawal2025gepa}, a problem instance is a single task in a benchmark (together with whatever information is needed to evaluate it) that can be stored and revisited.
In our digital ad setting, a problem instance corresponds to an individual ad viewer (a persona): each evaluation asks how effective a given ad is for a specific ad viewer (i.e., an ad--ad viewer combination), and the overall score for an ad is computed by averaging effectiveness ratings over a random sample of ad viewers (personas).

Unlike the agentic benchmarks, in ad optimization, ad platforms \emph{do not} have the ability to store and retrieve a particular ad viewer at will. Instead, ad platforms (or managers) test an ad to a population of ad viewers and receive an aggregated evaluation signal (e.g., an average effectiveness rating across a sampled set of personas), which prevents us from maintaining instance-level records, such as which prompts win on which ad viewer (in our case, a persona). Therefore, to apply \textsc{GEPA} for digital ad optimization, we need to adapt it.

For Step 1), in agentic AI experiments considered in \S \ref{sec:agenticExp} and \cite{agrawal2025gepa}), where we can store, retrieve, or evaluate individual problem instances), it uses instance-level information to (i) maintain a Pareto archive of which prompts in the candidate prompt set currently win on which problem instances, (ii) stochastically sample a non-dominated candidate prompt (favoring candidates that win on more instances), and (iii) evaluate that sampled candidate by rolling it out on a minibatch of problem instances. On the other hand, in the digital ad optimization experiment (\S \ref{sec:case}), where algorithms have access only to aggregated evaluation results, this step's procedure simplifies to that of \textsc{TextBO}: evaluate the prompt across the entire test persona set.

In Step 2), we employ the same meta-reflection procedure as used for \textsc{TextBO}.

For Step 3), we take one step of evolution (or equivalently, one step of textual gradient) of the prompt we evaluated in Step 1) and accept it only if it has a better evaluation than the original prompt. For agentic AI experiments considered in $\S$\ref{sec:agenticExp} and \cite{agrawal2025gepa}) where we have access to individual problem instances, when the offspring prompt is accepted, we append it to the candidate prompt set and don't throw away the original prompt from the candidate set. On the other hand, in digital ad optimization in $\S$\ref{sec:case}, where we cannot maintain the instance-level Pareto archive and thus cannot construct the frequency-weighted, non-dominated sampling distribution, we discard the original prompt from the candidate set. This is because, if we were to keep appending accepted prompts, the candidate prompt set would grow without bound, and the evaluation budget per prompt would be uniformly diluted. Therefore, when the offspring prompt is accepted, we discard the original prompt, keeping the candidate prompt set size constant. 

Notably, after adapting \textsc{GEPA} to the digital ad optimization problem setup, where we cannot store and retrieve personas freely, it effectively reduces to \textsc{TextBO} with $G=1$ and $N=1$; we describe the pseudocode of the adapted version of \textsc{GEPA} in Algorithm \ref{algo:gepa-ad}.

\paragraph{Shared components with \textsc{TextBO}.}
To ensure a fair comparison, we keep the non-algorithmic components identical to those used for
\textsc{TextBO} (\S~\ref{ssec:implementDetail}): (i) the same scenario-specific initial pool of 64
prompts generated from the creative brief, (ii) the same ad-generation module $\Phi$ (Imagen~4), (iii) the
same critic model $M_{\text{critic}}$ (Gemini~2.5 Flash) and the same meta-prompts for reflection and
prompt rewriting, and (iv) the same persona-based evaluation operator $\mathrm{Eval}(\cdot)$ induced by Twin-2k-500 personas and LLM-scored effectiveness ratings.

\paragraph{Candidate prompt pool (``population'').}
In the ad setting, each prompt $\pi$ produces a single ad image $\Phi(\pi)$ and a single scalar score
$s=\mathrm{Eval}(\Phi(\pi),\mathcal{D}_{\mathcal{X}})$. As we described earlier, we implement \textsc{GEPA} with a fixed-size candidate pool (population) of size $J=5$.
We initialize this pool using the same \textsc{Worst5-of-64} prompts described in
$\S$\ref{ssec:application_tbonbo_ad}.
For each candidate $j\in\{1,\dots,J\}$ we maintain a triple
$(\pi_t^j, \Phi(\pi_t^j), s_t^j)$ where $s_t^j$ is the empirical evaluation score.

\paragraph{Evaluation}
Each time we evaluate an ad image $\Phi(\pi)$ during training, we follow the same protocol as in
\S~\ref{ssec:implementDetail}: we randomly sample 200 personas from the \emph{training} persona set and compute
the mean effectiveness score across those personas (using the same 1--5 rubric and the same
logprob-to-expectation conversion). This yields the scalar feedback used by \textsc{GEPA} for selection.
For reporting final performance (plots/tables), we evaluate selected ads on the held-out test personas.

\paragraph{Meta-reflection update.}
After collecting evaluations, \textsc{GEPA} constructs a global meta-reflection $R_t$ using the same
$\textsc{MetaReflect}$ operator as \textsc{TextBO} (\S~\ref{sec:TBoN} and
\S~\ref{ssec:implementDetail}). Concretely, the critic is prompted to compare a subset of the
best- and worst-performing creatives observed so far (subject to context-length limits) and to distill
recurring patterns into a small set of actionable rules (e.g., ``prefer warm lighting,'' ``show social
context,'' ``avoid clutter''). This reflection is then used to guide subsequent prompt evolution steps.

\paragraph{Evolution (mutation) step and acceptance rule.}
Given a current candidate prompt $\pi_t^j$ and reflection $R_t$, we generate a single offspring prompt
by (i) sampling a targeted textual improvement using the critic (a TextGrad-style local edit
\citep{textgrad}) and (ii) rewriting the prompt accordingly:
\[
\delta_t^j \sim \nabla_{\text{text}}(\pi_t^j;R_t,M_{\text{critic}}),
\qquad
\tilde{\pi}_t^j = \operatorname{Apply}(\pi_t^j,\delta_t^j).
\]
We then generate the corresponding offspring creative $\Phi(\tilde{\pi}_t^j)$ and evaluate it
to obtain $\tilde{s}_t^j=\mathrm{Eval}(\Phi(\tilde{\pi}_t^j),\mathcal{D}_{\mathcal{X}})$.
We use an elitist acceptance rule:
if $\tilde{s}_t^j > s_t^j$, the offspring replaces the parent in the pool; otherwise the parent is kept.
This keeps the pool size constant across iterations (preventing unbounded growth of candidates under a
fixed evaluation budget) while ensuring that each candidate lineage is non-decreasing in its accepted scores.

\paragraph{Full procedure and hyperparameters.}
As in \textsc{TextBO}, we run \textsc{GEPA} for $T=10$ iterations with pool size $J=5$ per scenario.
At each iteration, we evolve each of the $J$ candidates once (one offspring per candidate), evaluate the
offspring using 200 sampled training personas, apply the acceptance rule above, and then update the shared
meta-reflection. For comparison plots and tables, we track the best-performing candidate in the pool at
each iteration,
$j_t^\star := \arg\max_{j} s_t^j$, and report the corresponding ad $\Phi(\pi_t^{j_t^\star})$.

\begin{algorithm}[htp!]
\caption{\textsc{GEPA} (adapted for ad optimization)}
\label{algo:gepa-ad}
\SetAlgoLined
\KwIn{System $\Phi$; initial prompts $\{\pi_0^{j}\}_{j=1}^{J}$; critic $M_{\text{critic}}$; iterations $T$; pool size $J$}
\KwOut{Best ads $\{\Phi(\pi_t^{j_t^\star})\}_{t=0}^{T}$, where $j_t^\star=\arg\max_j s_t^j$}
\For{$j=1$ \KwTo $J$}{
  $c_0^{j} \leftarrow \Phi(\pi_0^{j})$\;
  $s_0^{j} \leftarrow \text{Eval}(c_0^{j}, \mathcal{D}_{\mathcal{X}})$ 
}
$H_0 \leftarrow \{(\pi_0^{j},c_0^{j},s_0^{j})\}_{j=1}^{J}$,\quad $R_0 \leftarrow \emptyset$\;

\For{$t = 0$ \KwTo $T-1$}{
  $R_t \leftarrow \textsc{MetaReflect}(H_t; M_{\text{critic}})$\;
  \For{$j=1$ \KwTo $J$}{
    $\delta_t^{j} \leftarrow \nabla_{\text{text}}(\pi_t^{j}; R_t, M_{\text{critic}})$\;
    $\tilde{\pi}_{t+1}^{j} \leftarrow \operatorname{Apply}(\pi_t^{j}, \delta_t^{j})$\;
    $\tilde{c}_{t+1}^{j} \leftarrow \Phi(\tilde{\pi}_{t+1}^{j})$\;
    $\tilde{s}_{t+1}^{j} \leftarrow \text{Eval}(\tilde{c}_{t+1}^{j}, \mathcal{D}_{\mathcal{X}})$\;
    \eIf{$\tilde{s}_{t+1}^{j} > s_t^{j}$}{
      $(\pi_{t+1}^{j},c_{t+1}^{j},s_{t+1}^{j}) \leftarrow (\tilde{\pi}_{t+1}^{j},\tilde{c}_{t+1}^{j},\tilde{s}_{t+1}^{j})$\;
    }{
      $(\pi_{t+1}^{j},c_{t+1}^{j},s_{t+1}^{j}) \leftarrow (\pi_t^{j},c_t^{j},s_t^{j})$\;
    }
  }
  $H_{t+1} \leftarrow H_t \cup \{(\pi_{t+1}^{j},c_{t+1}^{j},s_{t+1}^{j})\}_{j=1}^{J}$\;
}
\KwRet{$\{\Phi(\pi_t^{j_t^\star})\}_{t=0}^{T}$}
\end{algorithm}

\clearpage

\subsection{Detailed Experiment Results}
\label{appssec:detailedExpResult}

\begin{table}[!ht]
\centering
\small
\setlength\tabcolsep{4.5pt}

\scalebox{0.755}{
\begin{tabular}{cccccccccc|c}
\multicolumn{11}{c}{\large \textbf{TextBO}} \\
\toprule
 &  & \textbf{Aeterno} & \textbf{Aurasonics} & \textbf{Greenbite} & \textbf{Mindgarden} & \textbf{Momentum} & \textbf{Oasis} & \textbf{Odyssey} & \textbf{Syncflow} & \textbf{Average} \\
\midrule
\multicolumn{2}{l}{\textsc{Worst5-of-N}} 
& 2.942 (0.025) & 2.793 (0.023) & 2.706 (0.024) & 2.722 (0.024) 
& 2.644 (0.024) & 2.839 (0.025) & 2.828 (0.025) & 2.641 (0.023) & {2.739} \\
\midrule
\multirow{10}{*}{\textbf{Steps}} 
& 1  & 3.098 (0.028) & 2.813 (0.022) & 3.001 (0.027) & 2.980 (0.027) 
      & 2.799 (0.027) & 3.062 (0.028) & 2.943 (0.027) & 2.927 (0.023) & {2.932} \\
& 2  & 3.125 (0.026) & 2.969 (0.027) & 3.041 (0.026) & 2.980 (0.027) 
      & 2.907 (0.025) & 3.107 (0.026) & 2.943 (0.027) & 2.927 (0.023) & 2.982 
\\
& 3  & 3.125 (0.026) & 3.041 (0.025) & 3.041 (0.026) & 2.980 (0.027) 
      & 2.907 (0.025) & 3.131 (0.019) & 3.063 (0.019) & 2.927 (0.023) & 3.013 
\\
& 4  & 3.194 (0.029) & 3.041 (0.025) & 3.041 (0.026) & 3.132 (0.022) 
      & 2.907 (0.025) & 3.324 (0.028) & 3.063 (0.019) & 2.927 (0.023) & 3.062 
\\
& 5  & 3.194 (0.029) & 3.041 (0.025) & 3.076 (0.026) & 3.132 (0.022) 
      & 2.907 (0.025) & 3.324 (0.028) & 3.063 (0.019) & 3.127 (0.022) & 3.096 
\\
& 6  & 3.257 (0.023) & 3.041 (0.025) & 3.076 (0.026) & 3.132 (0.022) 
      & 2.907 (0.025) & 3.324 (0.028) & 3.070 (0.022) & 3.127 (0.022) & 3.097 
\\
& 7  & 3.257 (0.023) & 3.104 (0.028) & 3.076 (0.026) & 3.132 (0.022) 
      & 2.907 (0.025) & 3.324 (0.028) & 3.107 (0.027) & 3.127 (0.022) & 3.111 
\\
& 8  & 3.257 (0.023) & 3.104 (0.028) & 3.076 (0.026) & 3.132 (0.022) 
      & 2.913 (0.026) & 3.324 (0.028) & 3.107 (0.027) & 3.127 (0.022) & 3.112 
\\
& 9  & 3.257 (0.023) & 3.104 (0.028) & 3.076 (0.026) & 3.132 (0.022) 
      & 2.926 (0.026) & 3.324 (0.028) & 3.107 (0.027) & 3.127 (0.022) & 3.114 
\\
& 10 & 3.257 (0.023) & 3.104 (0.028) & 3.168 (0.025) & 3.132 (0.022) 
      & 3.024 (0.025) & 3.324 (0.028) & 3.107 (0.027) & 3.127 (0.022) & 3.155 
\\
\midrule
\multicolumn{2}{l}{\textsc{Best-of-N}} 
& 2.982 (0.025) & 2.934 (0.027) & 2.898 (0.025) & 2.930 (0.028) 
& 3.030 (0.027) & 3.265 (0.022) & 2.966 (0.027) & 3.086 (0.023) & 3.011 
\\
\bottomrule
\end{tabular}
}
\caption{Progress of \textsc{TextBO} across 10 optimization steps from its starting point (\textsc{Worst5-of-N}) and comparison with the \textsc{Best-of-N} baseline. Each data point, in the mean (standard error) format, indicates the mean and standard error across 411 personas in the test persona set. The last column, labeled ``Average'', reports the averages of the means across the eight scenarios. 
}
\label{tab:steps_cs}
\end{table}

\begin{table}[!ht]
\centering
\small
\setlength\tabcolsep{4.5pt}

\scalebox{0.755}{
\begin{tabular}{cccccccccc|c}
\multicolumn{11}{c}{\large \textbf{GEPA}} \\
\toprule
 &  & \textbf{Aeterno} & \textbf{Aurasonics} & \textbf{Greenbite} & \textbf{Mindgarden} & \textbf{Momentum} & \textbf{Oasis} & \textbf{Odyssey} & \textbf{Syncflow} & \textbf{Average} \\
\midrule
\multicolumn{2}{l}{\textsc{Worst5-of-N}}
& 2.942 (0.025) & 2.793 (0.023) & 2.706 (0.024) & 2.722 (0.024)
& 2.644 (0.024) & 2.839 (0.025) & 2.828 (0.025) & 2.641 (0.023) & 2.739 \\
\midrule
\multirow{10}{*}{\textbf{Steps}}
& 1  & 3.036 (0.025) & 2.942 (0.025) & 3.029 (0.025) & 2.841 (0.024)
      & 2.816 (0.024) & 3.151 (0.024) & 2.915 (0.025) & 2.880 (0.025) & 2.939 \\
& 2  & 3.036 (0.025) & 2.942 (0.025) & 3.046 (0.024) & 2.945 (0.023)
      & 2.907 (0.023) & 3.151 (0.024) & 3.071 (0.028) & 2.880 (0.025) & 2.992 \\
& 3  & 3.036 (0.025) & 2.942 (0.025) & 3.046 (0.024) & 2.945 (0.023)
      & 2.907 (0.023) & 3.182 (0.025) & 3.088 (0.026) & 2.914 (0.018) & 3.003 \\
& 4  & 3.138 (0.025) & 3.028 (0.022) & 3.046 (0.024) & 2.954 (0.024)
      & 2.976 (0.026) & 3.187 (0.023) & 3.088 (0.026) & 2.949 (0.028) & 3.033 \\
& 5  & 3.138 (0.025) & 3.028 (0.022) & 3.046 (0.024) & 3.012 (0.022)
      & 2.976 (0.026) & 3.187 (0.023) & 3.088 (0.026) & 3.008 (0.026) & 3.049 \\
& 6  & 3.157 (0.023) & 3.028 (0.022) & 3.046 (0.024) & 3.012 (0.022)
      & 2.976 (0.026) & 3.209 (0.020) & 3.088 (0.026) & 3.008 (0.026) & 3.052 \\
& 7  & 3.157 (0.023) & 3.028 (0.022) & 3.046 (0.024) & 3.068 (0.027)
      & 2.976 (0.026) & 3.209 (0.020) & 3.088 (0.026) & 3.008 (0.026) & 3.060 \\
& 8  & 3.157 (0.023) & 3.028 (0.022) & 3.046 (0.024) & 3.068 (0.027)
      & 2.976 (0.026) & 3.209 (0.020) & 3.088 (0.026) & 3.008 (0.026) & 3.060 \\
& 9  & 3.157 (0.023) & 3.028 (0.022) & 3.046 (0.024) & 3.068 (0.027)
      & 2.976 (0.026) & 3.278 (0.023) & 3.088 (0.026) & 3.008 (0.026) & 3.070 \\
& 10 & 3.157 (0.023) & 3.028 (0.022) & 3.046 (0.024) & 3.068 (0.027)
      & 2.976 (0.026) & 3.278 (0.023) & 3.088 (0.026) & 3.008 (0.026) & 3.070 \\
\midrule
\multicolumn{2}{l}{\textsc{Best-of-N}}
& 2.982 (0.025) & 2.934 (0.027) & 2.898 (0.025) & 2.930 (0.028)
& 3.030 (0.027) & 3.265 (0.022) & 2.966 (0.027) & 3.086 (0.023) & 3.011 \\
\bottomrule
\end{tabular}
}
\caption{Progress of \textsc{GEPA} across 10 optimization steps from its starting point (\textsc{Worst5-of-N}) and comparison with the \textsc{Best-of-N} baseline. Each data point, in the mean (standard error) format, indicates the mean and standard error across 411 personas in the test persona set. The last column, labeled ``Average'', reports the averages of the means across the eight scenarios.}
\label{tab:steps_cs_gepa}
\end{table}

\begin{figure*}[ht]
\centering
\setlength{\tabcolsep}{3pt}
\renewcommand{\arraystretch}{1.05}
\footnotesize
\resizebox{1.0\textwidth}{!}{%
\begin{tabular}{
    M{0.12\textwidth}
    M{0.17\textwidth}
    M{0.17\textwidth}
    M{0.17\textwidth}
    M{0.17\textwidth}
    M{0.17\textwidth}
}
&
\shortstack{Odyssey\\(Electric family SUV)} &
\shortstack{Oasis\\(Secluded resort)} &
\shortstack{Momentum\\(Mobile banking app)} &
\shortstack{Aeterno\\(Luxury watch)} &
\shortstack{Syncflow\\(Collaboration tool)}
\\

\textsc{Worst5-of-64} &
\shortstack{\includegraphics[width=\linewidth]{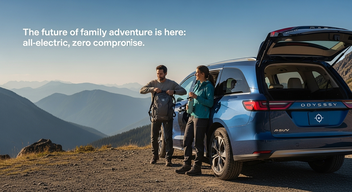}\\Score: 2.828} &
\shortstack{\includegraphics[width=\linewidth]{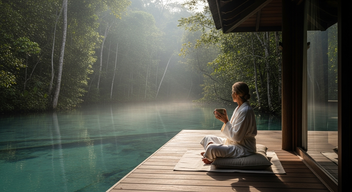}\\Score: 2.839} &
\shortstack{\includegraphics[width=\linewidth]{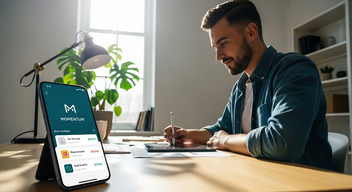}\\Score: 2.644} &
\shortstack{\includegraphics[width=\linewidth]{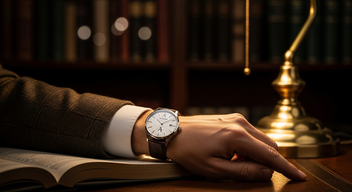}\\Score: 2.942} &
\shortstack{\includegraphics[width=\linewidth]{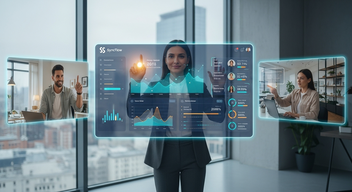}\\Score: 2.641}
\\

\textsc{Best-of-64} &
\shortstack{\includegraphics[width=\linewidth]{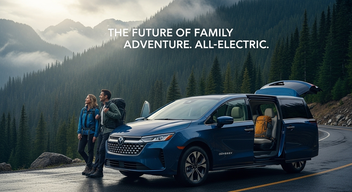}\\Score: 2.966} &
\shortstack{\includegraphics[width=\linewidth]{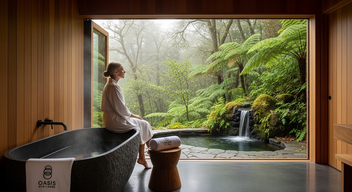}\\Score: 3.265} &
\shortstack{\includegraphics[width=\linewidth]{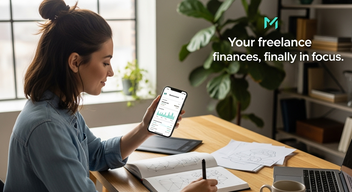}\\Score: 3.030} &
\shortstack{\includegraphics[width=\linewidth]{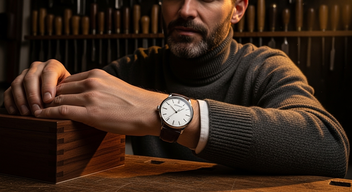}\\Score: 2.982} &
\shortstack{\includegraphics[width=\linewidth]{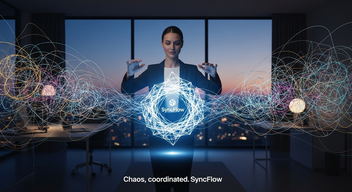}\\Score: 3.086}
\\

\textsc{GEPA (T=10)} &
\shortstack{\includegraphics[width=\linewidth]{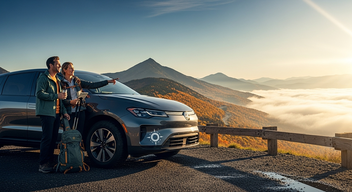}\\Score: 3.088} &
\shortstack{\includegraphics[width=\linewidth]{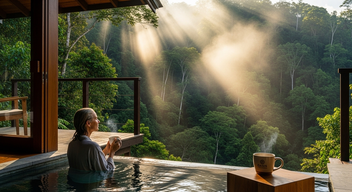}\\Score: 3.278} &
\shortstack{\includegraphics[width=\linewidth]{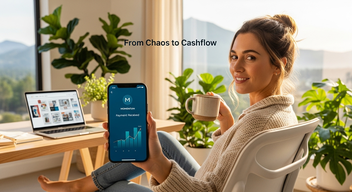}\\Score: 2.976} &
\shortstack{\includegraphics[width=\linewidth]{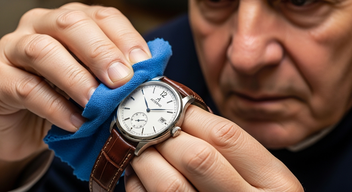}\\Score: 3.157} &
\shortstack{\includegraphics[width=\linewidth]{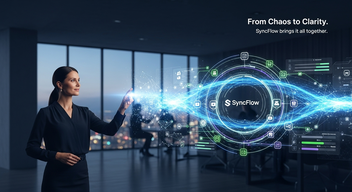}\\Score: 3.008}
\\

\textsc{TextBO (T=10)} &
\shortstack{\includegraphics[width=\linewidth]{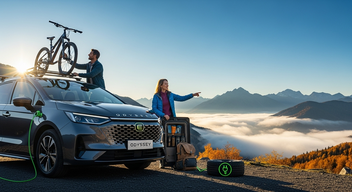}\\Score: 3.107} &
\shortstack{\includegraphics[width=\linewidth]{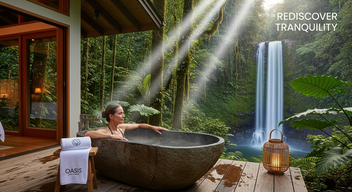}\\Score: 3.324} &
\shortstack{\includegraphics[width=\linewidth]{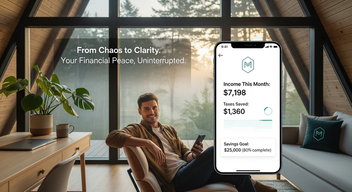}\\Score: 3.024} &
\shortstack{\includegraphics[width=\linewidth]{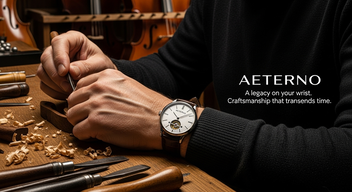}\\Score: 3.257} &
\shortstack{\includegraphics[width=\linewidth]{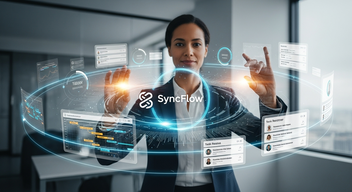}\\Score: 3.127}
\\

\end{tabular}}
\caption{\textsc{TextBO} and baseline's generated Ad images for five fictional brands: Odyssey electric family SUV (Scenario 3), Oasis eco-lodge (Scenario 4), Momentum Digital Banking (Scenario 5), Aeterno luxury watch (Scenario 7), and SyncFlow B2B software (Scenario 8)}
\label{fig:ad_images_more}
\end{figure*}
\clearpage
\subsection{Final Prompts in \textsc{TextBO} and \textsc{GEPA}}
\label{appssec:final_tbonbo_gepa_prompts}

\begin{figure}[ht]
\centering
\begin{tcolorbox}[
    colback=promptbg!100!white,
    sharp corners=south, 
    boxrule=0.25mm, 
    fonttitle=\bfseries, 
    title={\textbf{Prompt for generating the \textsc{TextBO} $(T=10)$ ad for GreenBite (plant-based burger patty)}}, 
    width=\textwidth, 
    enhanced,
    drop shadow
    ]
\scriptsize
\setlength{\parskip}{0pt}
\setlength{\itemsep}{0pt}

"**Key Message:** 
\\
"The No-Compromise Burger"—to visually prove that a plant-based burger can be just as juicy, delicious, and satisfying as a traditional beef burger, making it the hero of the ultimate backyard BBQ. 
\\**Core Components:** 
\\
* **Scene \& Environment:** A vibrant, meticulously designed modern backyard patio, perfect for a relaxed, aspirational late-afternoon get-together. The background is a soft-focus composition of stylish outdoor furniture, lush green garden plants, and subtly visible, engaged friends enjoying themselves and partaking in the shared experience. Occasional hints of other complementary gourmet side dishes or refreshing beverages on the blurred patio table enhance the sense of a complete, desirable BBQ experience, creating a rich, warm, and inviting social atmosphere. 
\\
* **Action \& Narrative:** A pair of adult hands are holding up a perfectly constructed, gourmet-style cheeseburger. The action captures either: (A) a moment of serene, authentic anticipation just before the first bite, with the subject's face (in soft focus) conveying genuine contentment or subtle bliss, possibly with eyes gently closed, savoring the aroma; OR (B) the burger held out slightly towards the camera, inviting the viewer into the experience, with the subject's soft-focus expression conveying a shared, authentic delight or playful 'aha!' moment. The emphasis is on genuine human emotion and connection, avoiding any forced, exaggerated, or unnatural expressions (especially if a bite is implied). A single, glistening drop of rich, savory juice is visible, about to fall from the patty, powerfully emphasizing its irresistible succulence and indulgent quality without appearing messy. * **Central Subject:** The star of the image is the GreenBite burger. It's a thick, plant-based patty with a deeply browned, flawlessly seared crust and visible grill marks, looking indistinguishable from premium ground beef. It's topped with a layer of glistening, perfectly melted cheddar cheese, a crisp piece of bright green leaf lettuce, a thick, ruby-red slice of a heirloom tomato, and a few rings of sharp red onion, all between a fluffy, toasted brioche bun. * **Composition \& Framing:** An enticing, slightly low-angle medium close-up shot where the GreenBite burger is the undeniable hero, prominently filling a significant portion of the frame and drawing the viewer's eye directly to its appetizing qualities. The focus is tack-sharp on the GreenBite patty's seared texture and the glistening juice. The hands holding the burger, and the subject's face (if present), along with the backyard scene, are rendered in a beautiful, soft-focus bokeh, creating a strong emotional connection and aspirational appeal while ensuring the burger remains supremely appetizing. There is significant negative space with soft, blurred greenery in the upper portion of the frame for a text overlay. 
\\
**Stylistic Qualities:** 
\\
* **Photography Style:** High-end, photorealistic commercial food photography with a cinematic lifestyle feel. The image should be incredibly crisp, detailed, and textured. * **Lighting:** Bathed in the warm, dreamy glow of golden hour sunlight. The light source comes from the side, casting soft shadows that accentuate the burger's shape and highlight the glistening textures of the melted cheese and juicy patty. A slight, elegant lens flare bleeds into the frame, adding to the warm, aspirational feel. * **Color Palette \& Tone:** Rich, saturated, and mouth-watering. A color palette dominated by the warm browns of the seared patty, the vibrant orange of the cheese, the fresh greens of the lettuce, and the deep reds of the tomato, all set against the warm, earthy tones of the background. 
\\
* **Atmosphere \& Feeling:** Evokes a powerful feeling of shared satisfaction, vibrant well-being, and guilt-free indulgence. The atmosphere is confidently positive, supremely delicious, and distinctly aspirational, conveying a desirable lifestyle of effortless enjoyment. **Brand Elements:** * 
\\
**Logo:** The simple, modern "GreenBite" logo (clean sans-serif font with a subtle leaf element) is integrated prominently yet naturally within the scene. It is elegantly printed on the branded wax paper partially wrapping the bottom of the burger, and also subtly visible on a branded napkin or a stylish sauce bottle on the blurred patio table in the background, reinforcing brand presence contextually within the aspirational setting. * In addition to the physical logo placements, a clean, prominent text overlay of the 'GreenBite' brand name, and optionally a relevant tagline (e.g., 'Believe Your Bite'), should be integrated into the significant negative space in the upper portion of the frame. This overlay should be highly legible and reinforce clear brand recognition without detracting from the visual appeal."

\end{tcolorbox}

\caption{Prompts for \textsc{TextBO} ($T=10$)}
\label{fig:TBoN_ad_prompt}
\end{figure}

\begin{figure}[ht]
\centering
\begin{tcolorbox}[
    colback=promptbg!100!white,
    sharp corners=south, 
    boxrule=0.25mm, 
    fonttitle=\bfseries, 
    title={\textbf{Prompt for generating the \textsc{GEPA} ($T=10$) ad for GreenBite (plant-based burger patty)}}, 
    width=\textwidth, 
    enhanced,
    drop shadow
    ]
\scriptsize
\setlength{\parskip}{0pt}
\setlength{\itemsep}{0pt}

"**Key Message:** 
\\
Visually communicate the "no-compromise" promise of GreenBite by capturing the exact moment of pure, blissful satisfaction of a consumer realizing a plant-based burger can be just as juicy and delicious as its beef counterpart. 
\\
**Core Components:** 
\\
* **Scene \& Environment:** A vibrant, sun-drenched modern backyard patio during a relaxed get-together of friends in their late 20s and early 30s. The setting is stylish and aspirational, featuring a natural wood deck, comfortable outdoor furniture, festoon string lights, and lush, green potted plants in the background. 
\\* **Action \& Narrative:** The central focus is a woman with her eyes open, full of genuine, surprised satisfaction, capturing the peak moment of consumption. Her mouth is wide open, actively biting into the fully-loaded GreenBite burger. The patty is clearly visible within her mouth, emphasizing the powerful 'A-ha!' moment of discovery and the sheer indulgence of the bite, directly communicating the 'no-compromise' promise. 
\\
* **Composition \& Framing:** Dynamic, slightly low-angle medium shot focusing on the woman and the burger. The burger is the hero; the depth of field is shallow, rendering the juicy, perfectly seared patty, melted plant-based cheddar, crisp lettuce, and glossy brioche bun in hyper-detailed, sharp focus. The background is beautifully out of focus with a pleasing bokeh effect. There is negative space in the upper left corner suitable for a text overlay. 
\\
**Stylistic Qualities:** * **Photography Style:** High-end, commercial lifestyle food photography. Polished, authentic, and incredibly photorealistic. Looks like an image from a top-tier food publication. * **Lighting:** Bathed in warm, cinematic golden hour sunlight. The light creates long, soft shadows and a beautiful, inviting glow, highlighting the textures of the food. Crucially, integrate a strong, warm sun flare or haze into the scene, creating a dramatic, aspirational glow that enhances the sense of satisfaction and makes the overall atmosphere feel more vibrant and special. * **Color Palette \& Tone:** Rich, warm, and saturated colors. Deep browns of the seared patty, vibrant greens of the lettuce and surrounding plants, warm yellows and oranges from the cheese and sunlight. The overall tone is confident, energetic, and mouth-watering. 
\\
* **Atmosphere \& Feeling:** A powerful feeling of guilt-free indulgence and vibrant satisfaction. The atmosphere is warm, happy, and effortlessly cool, capturing the joy of sharing great food with friends. **Inclusions:** 
\\
* **Logo:** Ensure the 'GreenBite' logo is subtly yet clearly visible on the burger wrapper, or on a small, non-distracting element within the scene, to provide product context without pulling focus from the main action. 
\\
* **Text Overlay:** A prominent, clean white text overlay, featuring a strong, benefit-driven headline such as 'THE NO-COMPROMISE BURGER' or 'BEYOND EXPECTATION.', is strategically placed in the negative space (e.g., upper left corner). This headline should be clearly legible, visually impactful, and act as a primary visual element, prioritizing the core message and value proposition over just generic brand reinforcement."

\end{tcolorbox}

\caption{Prompts for \textsc{GEPA} ($T=10$)}
\label{fig:GEPA_ad_prompt}
\end{figure}

\clearpage

\section{Appendix for Section \ref{sec:agenticExp}}

\subsection{Implementation Details of \textsc{GEPA} and \textsc{TextBO-GEPA} for agentic AI experiments}
\label{appssec:agentic_GEPA_implementation}

\paragraph{\textsc{GEPA}}
\textsc{GEPA} \citep{agrawal2025gepa} is an iterative, prompt-optimization-based, self-improving AI method. The key idea of \textsc{GEPA} is to ``combine reflection with evolution''. Here, an evolution is the same concept as taking a textual gradient in \cite{textgrad}. \textsc{GEPA} iterates over three steps in each optimization iteration: Step 1) choosing a prompt and evaluating it; Step 2) updating the meta-reflection; and Step 3) improving the prompt. 

For Step 1), it uses instance-level information to (i) maintain a Pareto archive of which prompts in the candidate prompt set currently win on which problem instances, (ii) stochastically sample a non-dominated candidate prompt (favoring candidates that win on more instances), and (iii) evaluate that sampled candidate by rolling it out on a minibatch of problem instances. 

In Step 2), \textsc{GEPA} updates a global meta-reflection $R_t$ that summarizes what has tended to work well or poorly so far.
Concretely, after collecting rollouts and their scores, it applies a reflection operator to the accumulated history,
$R_t := \texttt{MetaReflect}(H_t; M_{\text{critic}})$, where the critic is shown a context-limited subset of representative
high-performing and low-performing rollouts (including the prompts and outcomes) and asked to (i) identify recurring
success patterns and failure modes and (ii) distill them into a short list of actionable, prompt-level guidelines.
This shared reflection is carried forward and provided as context to subsequent prompt-evolution (textual-gradient) steps. This step is fundamentally the same as \textsc{TextBO}'s meta-reflection steps.

For Step 3), we take one step of evolution (or equivalently, one step of textual gradient) of the prompt we evaluated in Step 1) and accept it only if it has a better evaluation than the original prompt. When the offspring prompt is accepted, we append it to the candidate prompt set without removing the original prompt from the candidate set.

\paragraph{\textsc{TextBO-GEPA}}
\textsc{TextBO-GEPA} is identical to \textsc{GEPA} except for the prompt-improvement step (Step 3): whenever \textsc{GEPA} would take a single
evolution step (one textual-gradient update) to propose an offspring prompt, \textsc{TextBO-GEPA} instead performs multiple
successive \textsc{Best-of-N} textual-gradient steps (as in \S \ref{sec:BoNAndTheory}) before evaluating and applying the same acceptance rule as \textsc{GEPA}.
That is, at each inner step, it samples $N$ local textual edits, applies them to form $N$ candidate prompts, generates the
corresponding candidate outcomes, and uses the critic (conditioned on the current reflection) to select the most promising
candidate; repeating this for several steps yields a refined offspring prompt, which is then accepted and appended to the candidate prompt set only when it improves upon its parent under the evaluation procedure.

\clearpage

\end{appendices}
\end{document}